\documentclass[12pt]{article}
\usepackage[margin=1in]{geometry}

\usepackage{microtype}
\usepackage{graphicx}
\usepackage{subcaption}
\usepackage{booktabs} 

\usepackage{hyperref}

\usepackage{natbib}

\usepackage{amsmath}
\usepackage{amssymb}
\usepackage{mathtools}
\usepackage{amsthm}
\usepackage{mathabx}
\usepackage{thmtools}

\usepackage[capitalize,noabbrev]{cleveref}

\theoremstyle{plain}
\newtheorem{theorem}{Theorem}[section]
\newtheorem{proposition}[theorem]{Proposition}
\newtheorem{lemma}[theorem]{Lemma}
\newtheorem{corollary}[theorem]{Corollary}
\theoremstyle{definition}
\newtheorem{definition}[theorem]{Definition}

\theoremstyle{remark}

\usepackage[textsize=tiny]{todonotes}
\newcommand{\bailey}[2][color= blue!50!red!35!white,linecolor=red,tickmarkheight=0.1cm,inline]{\todo[#1]{{\bf Bailey:} #2}}
\renewcommand{\bailey}[2]{}

\newcommand{\sq}{\textsc{sq}}
\newcommand{\yrwr}{\textsc{Yrwr}}
\newcommand{\op}{\mathrm{op}}

\newif\ificml
\newif\ifproofsinbody
\newif\ifwatermark

\ificml
\else
    \usepackage{titlesec}
    \usepackage{fancyhdr}
    \usepackage{authblk}
\fi
\usepackage[ruled,vlined]{algorithm2e}
\usepackage{tikz}
\usepackage{stmaryrd}
\usepackage{comment}
\usepackage{xurl}
\usepackage{microtype}
\usepackage{mathrsfs}
\usepackage{setspace}
\usepackage[shortlabels]{enumitem}
\usepackage{booktabs}
\usepackage{rotating}
\usepackage{graphicx}
\usepackage[section]{placeins} 
\usepackage{subcaption}
\ificml
\else
    \usepackage{xcolor}
\fi
\usepackage{bbm}
\usepackage{xurl} 

\usepackage{natbib}

\bibpunct{(}{)}{;}{a}{,}{,}

\usepackage{amsmath, amsfonts, amssymb, amsthm}
\usepackage{thmtools}
\allowdisplaybreaks

\usepackage[T1]{fontenc}
\usepackage{svg}
\svgsetup{
  inkscapelatex=false,  
}

\usepackage{nicefrac}
\usepackage{mathtools}
\usepackage[capitalize,noabbrev]{cleveref}

\theoremstyle{plain}
\theoremstyle{definition}

\newtheorem{example}[theorem]{Example}
\theoremstyle{remark}

\crefname{algocf}{Algorithm}{Algorithms}
\Crefname{algocf}{Algorithm}{Algorithms}

\newcommand{\E}{\mathbb{E}}

\newcommand{\abs}[1]{\left| #1 \right|}

\newcommand{\Var}{\mathrm{Var}}

\newcommand{\Cov}{\mathrm{Cov}}

\newcommand{\convindist}{\overset{d}{\to}}
\newcommand{\eqindist}{\overset{d}{=}}

\newcommand{\diag}{\mathtt{diag}}

\numberwithin{equation}{section}

\newcommand{\cC}{\mathcal{C}}

\newcommand{\cK}{\mathcal{K}}
\newcommand{\cL}{\mathcal{L}}
\newcommand{\cM}{\mathcal{M}}
\newcommand{\cN}{\mathcal{N}}

\newcommand{\defeq}{\overset{\mathrm{def.}}{=}}

\newcommand{\R}{\mathbb{R}}
\newcommand{\Z}{\mathbb{Z}}

\newcommand{\paren}[1]{\left( {#1} \right)}
\newcommand{\norm}[1]{\left\| {#1} \right\|}

\newcommand{\curly}[1]{\left\{ {#1} \right\}}

\crefname{assumption}{Assumption}{Assumptions}
\Crefname{assumption}{Assumption}{Assumptions}
\crefname{condition}{Condition}{Conditions}
\Crefname{condition}{Condition}{Conditions}

\newcommand{\1}{\mathbf{1}}

\crefname{enumi}{}{items}
\Crefname{enumi}{}{Items}

\newcommand{\Ber}{\mathrm{Ber}}

\renewcommand{\yrwr}{\textsc{yrwr}}

\newcommand{\nummodels}{s}

\usepackage{algpseudocode} 
\usepackage{authblk}

\title{Strategic Candidacy in Generative AI Arenas}
\author{Chris Hays}
\author{Rachel Li}
\author{Bailey Flanigan}
\author{Manish Raghavan}

\affil{MIT}
\affil{\texttt{\{jhays,rachelli,baileyf,mragh\}@mit.edu}}

\begin{document}

\maketitle

\begin{abstract}
AI arenas, which rank generative models from pairwise preferences of users, are a popular method for measuring the relative performance of models in the course of their organic use.
Because rankings are computed from noisy preferences, there is a concern that model producers can exploit this randomness by submitting many models (e.g., multiple variants of essentially the same model) and thereby artificially improve the rank of their top models. 
This can lead to degradations in the quality, and therefore the usefulness, of the ranking. 
In this paper, we begin by establishing, both theoretically and in simulations calibrated to data from the platform Arena (formerly LMArena, Chatbot Arena), conditions under which producers can benefit from submitting clones when their goal is to be ranked highly.
We then propose a new mechanism for ranking models from pairwise comparisons, called You-Rank-We-Rank (YRWR). It requires that producers submit rankings over their {own} models and uses these rankings to correct statistical estimates of model quality.
We prove that this mechanism is approximately clone-robust, in the sense that a producer cannot improve their rank much by doing anything other than submitting each of their unique models exactly once.
Moreover, to the extent that model producers are able to correctly rank their own models, YRWR improves overall ranking accuracy.
In further simulations, we show that indeed the mechanism is approximately clone-robust and quantify improvements to ranking accuracy, even under producer misranking.
\end{abstract}

\section{Introduction}

As generative AI models proliferate, there is a need to systematically compare their performance. Such evaluations allow AI users to make informed choices between models, for organizations to make informed procurement decisions, for investors to allocate investments to more promising AI labs, for AI researchers to identify promising model development techniques towards making further progress, and for the research community to focus its efforts on common tasks \citep{donoho_data_2024,donoho_50_2017}. This need has motivated the emergence of \textit{generative AI arenas}, in which people vote on the outputs of pairs of different models.
These comparisons are then aggregated into an overall ranking over models in the arena (where higher-ranked models are those that win pairwise comparisons more often).
This evaluation approach offers the key advantage (over, e.g., static benchmarks) that evaluation occurs in the course of organic use, and so quality is measured on arguably user-relevant dimensions that might otherwise be difficult to capture. 

Accordingly, since 2024, many such arenas have emerged \citep{chiang_chatbot_2024,zhaosciarena,chi_copilot_2025,miroyan_search_2025,jiang_genai_2024}: foremost among them currently is Arena (formerly LMArena, Chatbot Arena)  \citep{chiang_chatbot_2024}, which we will use as our prototypical example. 
Although these platforms are new, there is already evidence that their rankings are substantively important: consumers \citep{morrison2024claude}, investors \citep{glover2025deepseek}, and tech companies themselves \citep{alibabacloud2025qwen25max} consider generative AI arenas to be a credible and economically relevant signal about the quality of models. 

As these arenas' rankings become more important, so grow incentives for companies to try to improve the standing of their models.
In recent work, \citet{singh_leaderboard_2025} highlight the risks of strategic manipulations: They submit several identical copies (which we'll call \textit{clones}) of the same models to Arena, finding that one is ranked several places above the other.\footnote{Arena publishes confidence intervals around estimates of model qualities, and the confidence intervals for the estimated scores for the models in the \citet{singh_leaderboard_2025} experiment were overlapping. Such uncertainty quantification strategies would be useful for assigning ties in the ranks between models. However, in this work, we consider the (perhaps more difficult) problem of model ranking mechanisms where ties are not allowed, and indeed, at the time of writing, Arena does not allow ties in the ranks it assigns to models.}
Intuitively, conditions on these platforms may be favorable to strategic candidacy: rankings are formed through a relatively small number of comparisons (typically tens to hundreds per pair of models on Arena, or thousands to tens of thousands of comparisons per model) and pairwise win margins between models are small (for example, at the time of writing, the 1st-ranked model has a 58\% win rate against the 20th-ranked model).
In this low-data, tightly competitive regime, small amounts of noise can substantially affect outcomes.
Thus, model producers have both the incentive and the ability to improve their arena position by simply submitting identical or near-identical\footnote{For example, near-identical submissions might occur if a producer submits multiple, qualitatively similar checkpoints from the same training run.} models.

With this as a starting point, we study two central questions:
\begin{enumerate}
    \item When can producers benefit from clones in existing ranking systems, and if they can, by how much?
    \item Can we design clone-robust ranking systems for this setting?
\end{enumerate}

\textbf{Our approach and contributions.}
We begin with a formal model of the generative AI arenas.
We assume that voter preferences are generated via the statistical model used for inference in these arenas \citep{chiang_chatbot_2024}, a Bradley-Terry (BT) model. 
Thus, voter behavior is in some sense ``ideal'' and the status quo statistical model used for ranking is not misspecified.
However, because the number of voter preferences are limited, there is noise in the rankings generated from these preferences.
We then analyze the choices of model producers, who we assume have an exogenous set of distinct models (where distinctness of models is not observed by the platform) and may submit multiple copies of the same model to the ranking mechanism. 
Key to our formulation is the idea that producers may hope to increase the ranking of their top models by exploiting noise in voter preferences.\footnote{This is related to but different from the concerns around ``hand picked'' scores highlighted in \citet{singh_leaderboard_2025}. In their work, the focus is on using private testing (where preference data is collected before a model is included in a ranking) to try to exploit selection effects by releasing only the model ranked best during private testing. There is no such notion of private testing in our work: all models submitted to the mechanism are assumed to be released publicly. Instead, in our model, the producers' incentive to submit multiple models comes from the fact that their utility is determined by their \textit{best-performing} models, rather than the average performance of their models.}
In our formalization of model producer incentives, producers derive utility from being ranked first within various subsets of models ("leaderboards"), like all models, all open-weight models, models below a given cost or latency threshold, or other intrinsic features of models on which a model producer wishes to compete. 
We make two primary contributions, corresponding to the research questions above:

\textit{Establishing clone-nonrobustness of status quo mechanisms.} In \cref{sec:statusquo}, we formally prove conditions under which the intuition above is correct: the status-quo mechanism --- which fits BT via maximum likelihood estimation and directly reports the implied ranking --- indeed rewards producers for submitting model clones.
Each clone effectively gives the producer an extra chance of getting ``lucky'' with their ranking position.
We confirm that this problem is significant in simulations calibrated to Arena data: we show that some models can substantially improve their rank by submitting one or a few clones to the mechanism. To our knowledge, ours is the first proof of clone-nonrobustness of a Bradley-Terry models under a correctly specified, homogeneous human preference model. Existing non-robustness results in the literature hold in circumstances when voters have heterogeneous preferences, so the Bradley-Terry model is misspecified \citep{procaccia_clone-robust_2025}.

\textit{Proposing a new accuracy-improving mechanism, You-Rank-We-Rank (YRWR), with clone-robustness guarantees.} In \cref{sec:yrwr}, we propose a new mechanism, which we prove is $O(1/\sqrt{s})$-strategyproof, where $s$ is the total number of samples per pairwise comparison. That is, a producer cannot gain more than $O(1/\sqrt{s})$ utility from doing anything other then submitting exactly one copy of every model in their set of distinct models.
Our mechanism asks each producer to submit a ranking over their own models.
We show that this precisely negates the potential benefits from adding a clone.
We also establish that, if producers truthfully report rankings, the mechanism \textit{only makes the ranking more accurate}, and provide conditions under which the mechanism is truthful.
Our semisynthetic experiments confirm that YRWR is both clone-robust and increases overall ranking accuracy, even when producers do not necessarily report the correct ranking over their own models.
We give an overview of our mechanism versus the status quo in \Cref{fig:mechanisms}.


\paragraph{Related work.} There is a growing literature on clone-robustness of Bradley-Terry models  \citep{procaccia_clone-robust_2025,golz_distortion_2025,siththaranjan_distributional_2024}. These works focus on a setting where voters may have heterogeneous preferences: different types of voters might have different preference orderings over candidates.
Our work is complementary to these in that we explore clone nonrobustness \textit{even when voter preferences are homogeneous}, so that in infinite data, there would be one ``correct'' ranking over models.
In other words, their work studies clone nonrobustness stemming from misspecification of the BT model, and our work studies clone nonrobustness stemming from finite-sample effects. 

    Our proposed mechanism is similar to that of \citet{su_you_2022,su_icml_2025} implemented at ICML, which consider incentives in the academic peer review process: both involve asking participants to rank their own submissions to the mechanism.
    However, we incorporate the ranking differently: Whenever the ranking induced by fitted Bradley-Terry scores disagrees with the producer's own ranking, our mechanism resolves disagreement by assigning all of the models in question the minimum of their scores as opposed to fitting an isotonic regression (which effectively takes the mean of scores when reviewer scores disagree with author rankings). 
    In their setting, the self-ranking helps denoise scores for a fixed, exogenous set of candidates; in ours, it also enables clone-robustness when the set of candidates is endogenous --- participants in the mechanism can choose which (and how many) candidates to submit.
    {Our setting is also different because we have to account for dependencies among fitted scores (because votes are over \textit{pairs} of models), whereas there is no such dependency in their model.}
    One reason for the differences between their settings and ours is that their motivating applications are reinforcement learning from human feedback (RLHF) (i.e., evaluating model responses for a given prompt against each other) and ours are model ranking (i.e., evaluating models against each other).

    Our work contributes to a growing body of work in strategic behavior in AI evaluations.
    In particular, \citet{chen_leaderboard_2026} analyze a Stackleberg game between a benchmark producer and multiple model producers, where model producers may try to game the benchmark by making benchmark-specific improvements to their model. 
    Another line of work explores strategic or adversarial voting on generative AI arenas \citep{huang_dropping_2026,huang_exploring_2025,min_improving_2025}.
    More broadly, our work sits in a growing literature on strategy-robust statistics \citep{spiess_optimal_2025,bates_incentive-theoretic_2024,shi_instance-adaptive_2025}, which treat statistical protocols as mechanism design problems.

\section{Setup} \label{sec:model}

    There are $n$ model producers.
    Each producer $i \in [n]$ has a set $\cK_i$ of distinct models where $k_i = \abs{\cK_i}$ is a constant.
    The full set of distinct models will be denoted $\cK = \bigcup_{i \in [n]} \cK_i$ where $k = \abs{\cK}$.

    \paragraph{Producer actions.} 
   Each producer $i$ decides how many copies of each distinct model $j \in \cK_i$ to submit to the mechanism.
    In particular, they may submit clones of the same model to try to benefit from evaluation noise.
    Formally, a producer $i$'s action space is $z_i \in \Z_{\geq 0}^{k_i}$, where $z_{ij}$ denotes the number of copies of model $j$ submitted by producer $i$. 
    We use $z_{-i}$ to represent the actions of all players besides $i$, and $z_{-(i,j)}$ to represent the actions corresponding to all models except producer $i$'s distinct model $j$. 
    We write $z = (z_1,\dots,z_n)$ to denote a full assignment of actions to producers. 
    The full set of models submitted by producer $i$ to the mechanism is $$\mathcal{M}_i(z_i) = \bigcup_{j \in \cK_i} \{j^{(1)}, \dots, j^{(z_{ij})}\},$$ and we write $m_i(z_i) = \abs{\cM_i(z_i)}$. The full set of models submitted to the mechanism by all producers is $\mathcal{M}(z) = \bigcup_{i \in [n]} \mathcal{M}_i(z_i),$ and we write $m(z) = \sum_{i \in [n]} m_i(z_i)$. When it is clear from context or when we are speaking about a generic set of models, we will drop the argument $z$. 
    
\begin{figure*}[t]
  \centering
  \includegraphics[width=0.95\textwidth]{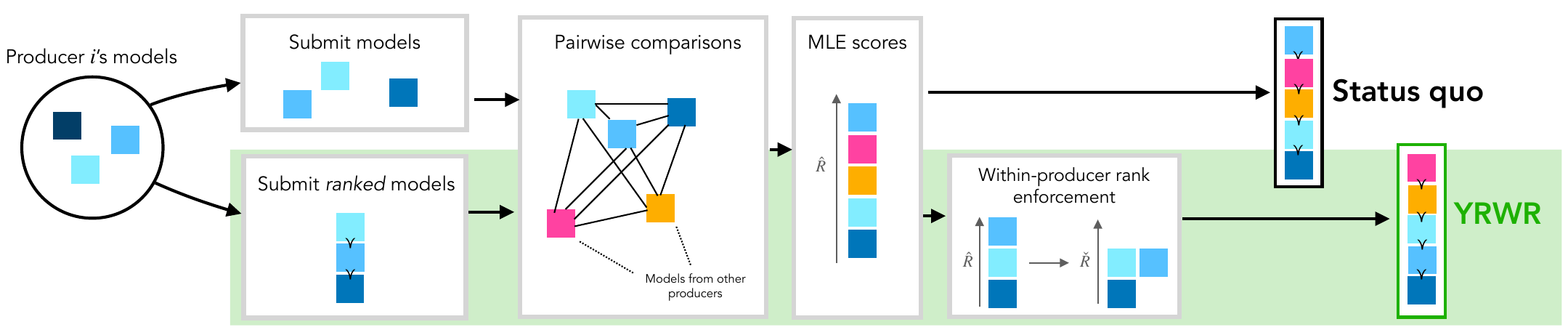}
  \caption{The \textit{Status Quo} (\sq) mechanism (top half) and the \textit{You-Rank-We-Rank} (\yrwr) mechanism (bottom half).}
  \label{fig:mechanisms}
\end{figure*}

    \paragraph{Pairwise voting.} After the producers submit their models for an action $z$, the users \textit{vote} on pairwise comparisons. After seeing outputs from two anonymous models $j,j' \in \cM(z)$, the user answers either $j \succ j'$ ($j$ is preferred) or $j' \succ j$. 
    Following the classic Bradley-Terry (BT) model \citep{bradley_rank_1952}, we assume each model $j \in \cM$ has some latent quality $R_j \in \mathbb{R}_{\geq 0}$, and each vote is drawn as an independent ~Bernoulli random variable with probability 
    \begin{align*}
        \Pr(j \succ j') = \frac{\exp R_j}{ \exp R_j + \exp R_{j'}}.
    \end{align*}
    (Of course, if two models $j, j'$ are clones, then $R_j = R_{j'}$.) 
    We use $p_{j \succ j'} = \Pr(j \succ j')$ as shorthand. 
    We use $R$ to denote the vector of qualities, and $R_{(\ell)}$ to denote the $\ell$-th largest value in $R$ (and we will use analogous notation for estimated qualities as well).
    This voter preference model is exactly that for which Arena's ranking procedure is specified correctly. Our analysis is thus deliberately favorable to the status quo.

    Let there be $s \in \mathbb{N}$ votes per pairwise model comparison. We use
    $v_{j \succ j'}$ to denote the (random) number of votes in which $j\succ j'$ (and thus, $v_{j' \succ j} + v_{j \succ j'} = s$.) 
    We'll write $v = \{ v_{j' \succ j} \}_{j' \neq j}$ to denote the full set of vote counts. 
    The parameters describing model producers are the number of producers $n$, distinct models $\cK$, model qualities $R$ and pairwise vote counts $s$. Together, we will refer to fixed values of these parameters as a \textit{problem instance}.

    Throughout this paper, we will impose the following regularity condition on $R$. 
    \begin{restatable}{assumption}{regcond} \label{cond2}
        There exists a universal constant $C$ such that, for all problem instances and $j, j' \in \cK$, it holds
        $|{R_j - R_{j'}}| \leq C$.
    \end{restatable}
    The assumption says that the qualities of any two producers may not be arbitrarily different, and that this maximum difference does not grow even when we analyze problem instances with many distinct models (e.g., large $\cK$). 
    We'll assume that estimated $\hat R$ obeys the same constraint.
    This kind of condition is standard in analyses of Bradley-Terry models (cf.\ \citet{simons_asymptotics_1999}) and is useful in our analysis because it ensures that, when the number of models is large, no one model can have too much influence over the rankings of other models. 
    In practice, pairwise win rates are typically bounded away from 0 and 1, even when the pair of models are ranked far apart. 

    \paragraph{Mechanism.}
   A mechanism \textsc{G} is a mapping from models $\cM$ and votes $v$ over pairs of models $j,j' \in \cM$ to a ranking $\sigma$ over all models in $\cM$. We denote mechanism outputs as $\textsc{G}(\cM(z);v)$, where we drop the $v$ when it is clear from context.
    We let $\Sigma(\textsc{G}(\cM(z))$ be the distribution of ranks (over the randomness in votes) induced from running \textsc{G} on an instance where players play $z$.

    We will write $\sigma(\ell)$ to indicate the model ranked $\ell$-th in the ranking  and $\sigma^{-1}(j)$ to indicate the ranking of model $j$. We use the preference relation $\succ_\sigma$ to indicate a comparison according to the ranking $\sigma$.
    Finally, we will often talk about rankings over subsets of models, ordered according to a reward vector. For a generic reward vector $r \in \mathbb{R}^m$ and a subset of models $S \subseteq \mathcal{M}$, we define the ranking over $S$ according to $r$ as
    \[\text{rank}(S,r) = {\sigma} : {\sigma}_j \succ {\sigma}_{j'} \iff r_j \geq r_{j'}.\]

    \paragraph{Utilities and leaderboards.}
    Finally, we must formalize producer utilities.
    A natural goal for producers, intuitively, is to have one of their models ranked first among all models in the arena. However, treating winning overall as the sole source of utility fails to explain behavior in practice, where producers routinely submit models that will not rank highly, even against their own existing models.\footnote{For example, in August 2025, OpenAI released (and eventually submitted to Arena) several open-weight models (the \texttt{oss} series) despite the fact that their performance was clearly limited compared to their existing flagship (closed) models; see \href{openai.com/index/introducing-gpt-oss/}{https://openai.com/index/introducing-gpt-oss/}.}
%
    Instead, we propose a utility model that rewards producers for being best in \textit{a given class of relevant models}, 
    even if they do not rank first in the overall arena.

    Formally, we consider a collection of \textit{leaderboards} $\mathcal{L} \subseteq \{0, 1\}^{\mathcal{M}}$, where each leaderboard $L \in \mathcal{L}$ is a set of competing models. For example, producers might be interested in ranking first among open-weight models, in which case the relevant leaderboard $L$ would consist of $\{ j \in \cM \; : \; j \text{ is open-weight} \}$.
     Similarly, we might consider leaderboards for non-reasoning models, models under a certain size/cost/latency threshold, or models supported by a particular IDE.
    Leaderboards need not be disjoint.
    Intuitively, they capture the idea that different consumers have different requirements, and indeed we could microfound this with a consumer choice model where each leaderboard is a consumer's consideration set.
    Under this utility model, producers may would be incentivised to submit ``weak'' models that have no chance of ranking first in the overall leaderboard if these models have a chance of winning on some other relevant leaderboard.

     With this, we can formally define producer utility. Each producer $i$ assigns some importance $\nu_i(L) \in [0,1]$ to each leaderboard $L$, which corresponds to the utility $i$ receives for having the top-ranked model in $L$ --- that is, 
    %
    %
    for each leaderboard $L \in \mathcal{L}$, producer $i$ gets utility $\nu_i(L)$ for winning $L$ and 0 otherwise.
    We normalize these utilities so that $\sum_{L \in \cL} \nu_i(L) = 1$.
    Formally, we let $\sigma_L$ be the sub-ranking of $\sigma$ over the models in leaderboard $L$; i.e., for all $j,j'\in L, j \succ_\sigma j' \iff j \succ_{\sigma_L} j'$.
    Then, if the overall ranking from the mechanism is $\sigma$, producer $i$'s utility is
    \[u_i(\sigma) = \sum_{j \in \cM_i}\sum_{L \in \mathcal{L}} \nu_i(L) \cdot \mathbf{1}(\sigma_L(1) = j).\]
    Naturally, clones must belong to the same leaderboards, as leaderboard membership is based on a model's fixed characteristics (e.g., size, cost).

    Throughout, we use nonasymptotic big-O notation, so that, e.g., for sequences $\{a_i\}_{i=1}^\infty,\{b_i\}_{i=1}^\infty$ we write $b_i = O(a_i)$ if there exists a universal constant $C$ such that $b_i \leq C a_i$ for all $i$.

\section{The Status Quo Mechanism} \label{sec:statusquo}
The \textit{Status Quo} mechanism (\sq) used by Arena \citep{chiang_chatbot_2024} is depicted in the top half of \Cref{fig:mechanisms} and is formally defined in \Cref{alg:statusquo}. \sq\ takes in a set of models present in the arena and the pairwise vote count over them. Using the votes, it fits estimated rewards $\hat{R}=(\hat{R}_1,\dots,\hat{R}_{m})$ via maximum likelihood estimation.\footnote{The Bradley-Terry model parameters are invariant to addition by a constant vector (i.e., data generated by parameters $R$ and $R + c \1$ are equal in distribution), so to fit MLE it is necessary to enforce an identifiability constraint like $\sum_{j \in \cM} \hat R_j = 0$.} Then, it outputs the ranking implied by $\hat{R}$. Ties can be broken arbitrarily. In the text, we will refer to $\sq$ on inputs $\cM,v$ as $\sq(\cM)$, dropping the $v$ argument when it is clear from context.
%

\begin{algorithm}[t]
\caption{Status Quo (\sq)}
\label{alg:statusquo}
\footnotesize
\KwIn{Models $\cM$; pairwise vote counts $v$}
Compute
$$\widehat R \gets \arg\max_{R\in\mathbb{R}^{m}}
\sum_{j\neq j'} v_{j\succ j'} 
\log\!\Big(\frac{\exp(R_j)}{\exp(R_j)+\exp(R_{j'})}\Big)$$
\Return $\sigma = \operatorname{rank}(\cM,\widehat{R})$ (breaking ties arbitrarily)
\end{algorithm}

\paragraph{The effects of clones.} Before we formalize our main result for this section, we identify the key intuition for how cloning can affect the ranking distribution induced by \sq.

\begin{enumerate}
\item \textit{The ``Lottery Ticket'' Effect.} The cloned model provides an extra chance for the producer to win, since it receives its own random draw of pairwise votes, which we call the \textit{lottery-ticket effect}. The producer benefits from taking the best outcome among these random draws, thereby increasing the probability of winning. This is a direct consequence of the fact that there are a finite number of votes per pair of models, which leads to randomness in the final ranking --- if there were infinite comparisons, the ranking would be deterministic and this effect would disappear. 

\item \textit{The ``New Competitor'' Effect.} Introducing a clone causes a \textit{new-competitor effect}: new pairwise votes must be collected between the cloned model and the existing models.
        This changes the competitive environment faced by each model. Intuitively, if the clone is very strong, most existing models will lose more of their matchups relative to the counterfactual without the clone; if it is very weak, they win more. These changes propagate to the fitted Bradley–Terry scores, potentially increasing or \textit{decreasing} any individual model’s win probability, adding complexity to our analysis. However, our workhorse lemma, \Cref{lem:small_prob_diff}, establishes that the new competitor effect is small: the change to any individual model's win probability is no more than $O(1/\sqrt{s})$.
\end{enumerate}
\noindent Intuitively, a clone is valuable as long as the lottery ticket effect (which is always positive) outweighs the new competitor effect (which can either be positive or negative).

Our main result characterizes when the lottery ticket effect outweights the potential drawbacks of the new competitor effect.
Intuitively, this is when a producers' models have a reasonable (but uncertain) chance of winning a (set of) leaderboard(s) of non-negligible importance.
We next formalize these conditions via a definition. 

Informally, the definition says that, across leaderboards with positive total weight to a producer $i$, two quantities are bounded away from 0: the model $j$'s chance of winning, and producer $i$'s chance that none of their models win.
These conditions are important for a model to be worth cloning because, if a model has no chance of winning, a cloned version of that model also won't have a chance of winning. Similarly, if a producer is guaranteed to win on a set of leaderboards, there is no need to clone models to improve the producer's chances on those leaderboards.

    \begin{definition}[$(\varepsilon, \delta)$-competitive model] \label{cond1} For a fixed strategy profile $z$, a model $j \in \cM_i$ for producer $i$ is said to be \textit{$(\varepsilon, \delta)$-competitive} if there exists a set of leaderboards $S \subset \cL$ such that $\sum_{L \in S} \nu_i(L) \geq \varepsilon$ and, for all $L \in S$,
        \begin{align*}
           \Pr_{\sigma \sim \Sigma(\sq(\cM(z)))}(\sigma_L(1)=j) \geq \delta, \quad \text{ and} \quad  \Pr_{\sigma \sim \Sigma(\sq(\cM(z)))}( \sigma_L(1) \not\in \cM_i) \geq \delta.
        \end{align*}
    
        
        
    \end{definition}
    The parameter $\varepsilon$ represents how important the set of leaderboards must be, and $\delta$ represents the minimum probability that model $j$ wins on these leaderboards.
    The condition that there must exist a set of leaderboards with total utility more than $\varepsilon$ is important because of dependencies across leaderboards: if a clone helps the producer win on an inconsequential leaderboard ($\nu_i(L) \approx 0$) but harms the producer on another leaderboard with non-negligible potential utility, then the costs of cloning might outweigh the benefits.
    %
    
We are now ready to state our main theorem in this section.
It establishes that if a producer has a $(\varepsilon, \delta)$-competitive model, then for a sufficiently large set of models and votes per pair of models, producers are incentivised to submit another copy of that model.

   \begin{restatable}[{Clone-nonrobustness of the status quo mechanism}]{theorem}{statusquo} \label{thm:status-quo}
        For all constants $\varepsilon, \delta > 0$, there exists $s_0, m_0$ such that for all $s \geq s_0, m \geq m_0$, the following holds. For any producer $i$, any strategy profiles $z$ and any $(\varepsilon, \delta)$-competitive model $j$, producer $i$ would benefit from submitting an additional copy of $j$. Formally, let $z' = (z_{i,j} + 1, z_{-i,j})$. Then
        \begin{align*}
            \E_{\sigma \sim \Sigma(\sq(\cM(z'))}[u_i(\sigma)] > \E_{\sigma \sim \Sigma(\sq(\cM(z))}[u_i(\sigma)].
        \end{align*}
    \end{restatable}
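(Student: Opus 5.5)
The plan is a coupling argument that isolates the two effects described above. Let $j'$ denote the new copy of $j$ added in $z'$. Fix the votes among the models of $\cM(z)$ and sample the additional votes between $j'$ and all other models, including $j$. I will compare three rankings on a common probability space. The first is $\sigma=\sq(\cM(z))$. The second is $\sigma'=\sq(\cM(z'))$. The third is a hypothetical ``no-new-competitor'' ranking $\tilde\sigma$. This is obtained by taking the old fitted scores $\hat R$ on $\cM(z)$ and inserting $j'$ with a score $\tilde R_{j'}$ computed from $j'$'s own votes against the fixed old models, holding the other scores fixed. By the workhorse \Cref{lem:small_prob_diff}, adding a model changes every model's leaderboard-win probability by at most $O(1/\sqrt{s})$; I would apply it with the margin depending on $m$ in order to use $m\ge m_0$. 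For each $L$ and each model, this gives $|\Pr(\sigma'_L(1)=\ell)-\Pr(\tilde\sigma_L(1)=\ell)| = O(1/\sqrt s)$, or the analogous statement for $\sigma$ versus $\sigma'$ restricted to old models. Summing over leaderboards with weights $\nu_i(L)$ that total $1$, the new-competitor effect on $\E[u_i]$ is $O(1/\sqrt{s})$ (plus a term vanishing in $m$).

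Next I lower-bound the lottery-ticket gain $\E[u_i(\tilde\sigma)]-\E[u_i(\sigma)]$. Since $\tilde\sigma$ agrees with $\sigma$ on old models, producer $i$'s utility can only increase: on each $L$ containing $j$, $i$ additionally wins whenever $\tilde R_{j'}$ exceeds the top score of $L$ in the old ranking while $\sigma_L(1)\notin\cM_i$. For $L$ in the set $S$ from \Cref{cond1}, I need to show that this event has probability bounded below by some $c(\varepsilon,\delta)>0$ uniformly in $s,m$. The idea is that $\tilde R_{j'}$ is asymptotically an independent copy of $\hat R_j$: both are approximately $R_j$ plus Gaussian noise of order $1/\sqrt{s}$, and they are nearly independent for large $m$ because they are built from disjoint vote sets and share dependence only through $O(1/m)$ influence on the rest. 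Here \Cref{cond2} is used to ensure no single model has outsized influence. From $\Pr(\sigma_L(1)=j)\ge\delta$ and $\Pr(\sigma_L(1)\notin\cM_i)\ge\delta$, I will argue that the event ``some non-$i$ model tops $L$, with a top score at a level that $j$ reaches with probability bounded below'' has probability at least $c(\delta)$. The argument conditions on the old top score $T_L$. Near-independence of $\tilde R_{j'}$ and the conditional law of $\hat R_j$ then let us transfer the fact that $j$ reaches the winning level with probability $\ge\delta$ to the same statement for $j'$. Weighting by $\nu_i(L)$ gives a gain of at least $\varepsilon\, c(\delta)$.

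Combining the two steps gives $\E[u_i(\sigma')]-\E[u_i(\sigma)]\ge \varepsilon c(\delta) - O(1/\sqrt{s}) - o_m(1)$. Choosing $s_0,m_0$ large enough makes this positive.

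The main obstacle is the second step: obtaining a constant lower bound on the probability that $j'$ wins exactly when $i$ otherwise would not. The event $\{\sigma_L(1)\notin\cM_i\}$ is correlated with a low value of $\hat R_j$. So I cannot simply multiply $\delta\cdot\delta$; I must use the near-independence of $j'$'s noise from the old fit, together with a Gaussian approximation (e.g.\ a Berry--Esseen bound on $\tilde R_{j'}-R_j$) to show that $j'$'s score has a non-degenerate spread at scale $1/\sqrt{s}$ around the winning threshold. A first attempt would be to condition on the scores of all models other than $j$ and show that the conditional win probabilities of $j$ and $j'$ coincide up to $o(1)$. Then $\Pr(j'\text{ wins},\ i\text{ would lose}) \approx \E[p(1-p)]$-type quantities, where $p$ is the conditional win probability of $j$. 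These are bounded below using both $\delta$ conditions via Cauchy--Schwarz-type reasoning, provided $j$'s conditional win probability is not almost surely $0$ or $1$.
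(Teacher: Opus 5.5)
\paragraph{There is a genuine gap in each of your two steps.}

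\emph{Step 1 misuses the lemma.} \Cref{lem:small_prob_diff} only compares the laws of the \emph{old} models' fitted scores, $\widehat R^m$ versus $\widehat R^{m+1}_{1:m}$. It says nothing about the joint law of the clone's score together with the old scores. But $\E[u_i(\sigma')]$ depends on where $j'$ lands. Write $A_x(L)$ for the event that $x$ beats every other producer's model in $L$. The lottery-ticket term is then $\Pr_{z'}\bigl(A_{j'}(L)\setminus A_{\cM_i(z)}(L)\bigr)$, a joint event in the clone's score and the old scores under the $(m+1)$-model fit. So restricting the lemma to old models cannot transfer it. Your $\tilde R_{j'}$ (fit against frozen old scores) has a joint law with $\widehat R^m$ that nothing in the paper compares with the joint law of $\widehat R^{m+1}$, so $|\E u_i(\sigma')-\E u_i(\tilde\sigma)|$ is not controlled. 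Separately, the lemma's second term is $C_m/\sqrt s$ with $C_m$ of order $m^{3/4}$ (\Cref{lem:bt_gauss_approx}). It is therefore not $O(1/\sqrt s)$ uniformly in $m$, and your final combination needs $s\gg m^{3/2}$.

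\emph{Step 2 is left open.} Your first idea, conditioning on $T_L$, points the right way. The concrete plan, however, conditions on all fitted scores other than $j$'s, and that is degenerate. Under the sum-to-zero normalization $\widehat R_j$ is determined by the other scores, so $p\in\{0,1\}$ almost surely and $\E[p(1-p)]=0$. Non-degeneracy would not suffice anyway. Condition instead on the old votes, so the clone's noise is fresh. The gain on $L$ is then roughly $\E[q(T_L)\,g]$, where $T_L$ is the other producers' top score in $L$, $q$ is decreasing, and $g=\mathbf{1}\{\text{all of } i\text{'s old models fall below } T_L\}$. The $\delta$-conditions give approximately $\E q(T_L)\ge\delta$ and $\E g\ge\delta$. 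Because the two factors are oppositely monotone in $T_L$, these alone do not bound $\E[q(T_L)g]$ from below. What is missing is a scale comparison:
\begin{itemize}
\item Under \Cref{cond2} every coordinate has standard deviation of order $(sm)^{-1/2}$ (cf.\ \Cref{lem:opnorm}).
\item Gaussian concentration of a maximum keeps $T_L$ fluctuating on that same scale.
\item Taking $\tau$ to be the upper $\delta/2$-quantile of $T_L$ then gives $\E[q(T_L)g]\ge q(\tau)\,\delta/2\ge c(\delta,C)$.
\end{itemize}

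\paragraph{How the paper proceeds, and why its key lemma does not close your gap.}
The paper builds no hypothetical ranking. It applies inclusion--exclusion to $A_{\cM_i(z)}(L)\cup A_{j'}(L)$. It then uses exchangeability of clones to replace $\Pr(A_{j'}(L))$ by $\Pr(A_{j^{(1)}}(L))$; both are old-score events, so \Cref{lem:small_prob_diff} transfers them. Finally it bounds the intersection by \Cref{lem:anticorrelated}, which yields a gain of $\varepsilon\delta^2$.

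You should not import that lemma. Its Harris-inequality argument proves negative correlation only conditionally on $F^C$. Averaging over $F^C$ adds the covariance of the conditional probabilities, and this covariance is positive because both events depend on the other producers' scores. In fact the unconditional claim is false. Take $L=\{j,j',\ell\}$ with equal qualities, padding the arena with models outside $L$ if you want $m$ large. Then $B_j-B_\ell$ and $B_{j'}-B_\ell$ are asymptotically normal with correlation $1/2$. Hence $\Pr(A_j\cap A_{j'})\to 1/3>1/4$, and the true gain is $1/6<\delta^2=1/4$.

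So your refusal to multiply $\delta\cdot\delta$ is correct, and the paper supplies no valid substitute. A complete proof needs your Step~2 made quantitative as above, plus an honest joint analysis of the clone's score. The Borda representation of \Cref{lem:btborda} is a convenient place for that analysis: adding the clone simply adds the independent counts $v_{x\succ j'}$ to each old model's Borda score.
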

    
    Intuitively, the smaller $\varepsilon$ and $\delta$ are, the weaker the benefits to cloning may be --- smaller $\epsilon$ means that the cloned model sits on less important (to the model producer) leaderboards and smaller $\delta$ means that the lottery ticket effect of an additional copy of the ($\epsilon$, $\delta$)-competitive model are smaller. Thus, smaller $\varepsilon, \delta$ imply larger $s_0, m_0$, since new competitor effects decrease in $s$ and $m$.
    The proof of \Cref{thm:status-quo} is given in \Cref{proof:statusquo}.
    
    We have thus established conditions under which a producer can benefit from clones. It is trivial to demonstrate that the gains from clones can be potentially very large:
    
    \begin{example}[{Constant possible gain}] Consider $n$ producers with one distinct model, each of identical quality. Each producer wins with probability $1/n$. Now, suppose producer 1 submits $k$ clones (including their original model); their new probability of winning is $k/(k+n-1)$, which is constant (in $n$ and ${m}$) for $k \in \Omega(n)$ and approaches $1$ as $k \to \infty.$
    \end{example}
    The fact that the above example sets all rewards identically is for simplicity of intuition; what it illustrates more broadly is that a producer can flood the system with models and drive their win probability upwards arbitrarily.

    \subsection{Simulation study with Arena data} \label{empirics:sq}
    To demonstrate the implications of \Cref{thm:status-quo}, we conduct a set of simulations calibrated to Arena data to explore how much producers can benefit from clones.

    \begin{figure*}[t]
      \centering
      \includegraphics[width=\textwidth]{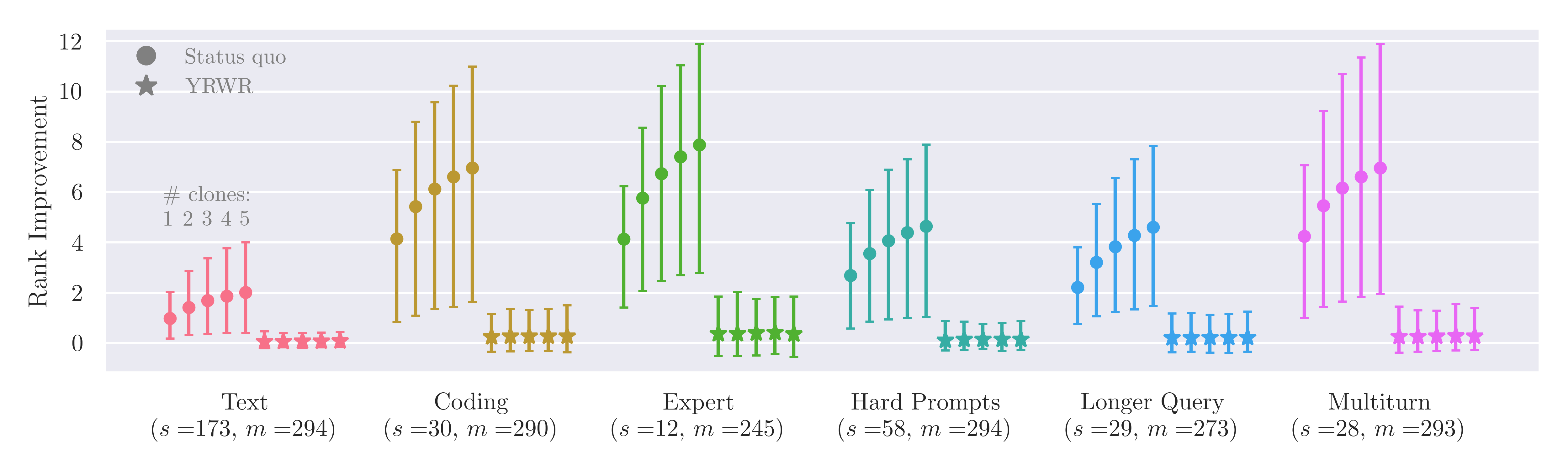}
      \caption{Ranks gained via cloning under the \textit{Status Quo} versus \textit{You-Rank-We-Rank} mechanisms, across several of Arena's model arenas.}
      \label{fig:results}
    \end{figure*} 
    
    To do this, we snapshot Arena on January 1, 2026 and focus on the largest arenas (those with the most models).
    For each such arena, we treat all listed models as distinct and use the platform's published BT scores as ``ground-truth'' qualities, with model producers given by the associated organization metadata. 
    Thus, our empirical approach is designed to show what would happen in the idealized case (favorable to the status quo) in which (1) the Bradley-Terry model was correctly specified and (2) Arena had perfectly estimated model qualities.
    Holding these qualities fixed, we then vary the number of clones of each model $j$ over $\ell \in \{1,2,3,4,5\}$, each time, producing some new synthetic set of models $\cM$ with the original models and the clones. For each of these model sets $\cM$, we generate synthetic pairwise outcomes: for each unordered pair $(j,j')$, we draw $s$ iid Bernoulli comparisons with BT win probability $p_{j\succ j'}$, where $s$ is the arena's average number of votes per model-pair. This produces our vote counts $v^{(j, \ell)}$ (where $j$ is cloned $\ell$ times).
    We also repeat this procedure in the raw instance with no clones, the vote counts for which we call $v$.
    Code to reproduce the simulations and plots is available at \texttt{\url{https://github.com/johnchrishays/strategic-candidacy-in-genai-arenas}}. 
    
    We then run \sq \ with vote counts $v^{(j,\ell)}$ for all  $j \in L, \ell \in \{1,2,3,4,5\}$, and also on the raw instance $v$ (using the corresponding multiset of models each time).
    For each model $j$, we report the highest rank among all of its clones, capturing the idea that producers are rewarded for their highest-ranked model in a given leaderboard. 
    We show the results in \Cref{fig:results}, where we plot the average, 5th, and 95th percentile of the number of ranks gained across models $j \in L$ between 0 clones (raw instance) and $\ell$ clones, across arenas. The results for \sq \ are on the left per arena; the right shows analogous results for our mechanism \yrwr, which we will unpack in \Cref{empirics:yrwr}.

    Focusing on the lines pertaining to \sq, we see that across arenas, cloning a model leads it to gain in rank position, with some models moving up $\geq 7$ positions with just one clone. With additional clones, rank position can be reliably increased further with mild diminishing marginal returns.
    We remark that there are larger
    benefits from clones in arenas (like Coding, Expert, and Multiturn) with fewer pairwise votes per pair of models ($s \leq 30$).
    Second, we conduct additional data analysis in \cref{sec:addlempirics} to show that
     the models that benefit from clones have scores that are clustered together with several other models, which means that small increases to fitted scores yield large increases in position on the arena. 

    %
    

\section{The You-Rank-We-Rank Mechanism} \label{sec:yrwr}

The YRWR mechanism is depicted in the bottom half of \Cref{fig:mechanisms} and defined formally in \Cref{alg:yrwr}. It augments the status-quo mechanism first by taking an additional input: a set of \textit{producer-defined rankings} $\pi = \{\pi_i\}_{i\in[n]}$, where $\pi_i$ is a ranking over $\cM_i$. Our main result does not require us to assume that this ranking is ``correct'' in any sense; this can be thought of as a producer's \textit{prioritization} over their models with respect to the mechanism. We will discuss this in more formality later. We will refer to YRWR on inputs $\cM,\pi,v$ as \yrwr$(\cM,\pi; v)$, again dropping the $v$ from the notation when clear from context.

As in \sq, \yrwr \ first estimates quality scores $\widehat R$ via MLE. Then,
instead of directly outputting the implied ranking, it performs a \emph{monotone
score correction} within each producer: for every model $j \in \cM_i$, its
corrected score $\widecheck R_j$ is the minimum estimated score among all models
$j' \in \cM_i$ ranked ahead of $j$ according to $\pi_i$.  Finally, \yrwr \ outputs a global ranking implied by $\widecheck R$.
\begin{algorithm}[t]
\caption{You-Rank-We-Rank ($\yrwr$)}
\label{alg:yrwr}
\footnotesize

\KwIn{Models $\cM$; rankings $\{\pi_i\}_{i\in[n]}$; vote counts $v$}

Compute 
$$\widehat R \gets \arg\max_{R\in\mathbb{R}^{|\cM|}}
\ \sum_{j\neq j'} v_{j\succ j'} \log\!\Big(\frac{\exp(R_j)}{\exp(R_j)+\exp(R_{j'})}\Big)$$

\ForEach{producer $i\in[n]$}{
  \ForEach{$j\in \cM_i$}{
    $\widecheck R_j \gets \min\{\widehat R_{\pi_i(k)}:\ k \le \pi_i^{-1}(j)\}$
  }
}

\KwOut{$\sigma = \operatorname{rank}(\cM, \widecheck{R})$ (breaking ties within each producer $i$ according to $\pi_i$ and ties between producers arbitrarily).}

\end{algorithm}
           

Notably, our mechanism receives \textit{no external information} about which models are clones and which are distinct. This is motivated by key practical challenges: the platform may have no access to proprietary models' weights or logits and tests of whether models produce similar outputs might be computationally or statistically intractable. Moreover, even if the mechanism could require white-box model inspection, producers might minimally change the parameters of similar models or otherwise manipulate their submissions to evade clone detection.
Before we state our main result of this section, we provide intuition about why the self-ranking mechanism provides clone-robustness.

   \paragraph{Intuition: Why do producers' rankings help?} Enforcing that scores obey producer ranks help produce more accurate rankings and disincentivise clones. To see why this is true, consider the distribution of fitted scores of 2 copies of model $j$ by producer $i$, denoted $j^{(1)}, j^{(2)}$, where without loss of generality, we assume $j^{(1)} \succ_{\pi_i} j^{(2)}$. 
   %
   %
   Because of the minimum operation to compute $\widecheck R_{j^{(1)}}, \widecheck R_{j^{(2)}}$, the distribution of their maximum is exactly that of $\widehat R_{j^{(1)}}$:
   \begin{align*}
       \max \{ \widecheck R_{j^{(1)}}, \widecheck R_{j^{(2)}} \} \eqindist \widehat R_{j^{(1)}}.
   \end{align*}
   Put another way, with \yrwr, the producer's fitted score is the maximum of two draws half the time (if $\widehat R_{j^{(1)}} > \widehat R_{j^{(2)}}$, agreeing with the producer ranking $\pi_i$) and the minimum otherwise. 
   And for two identically distributed random variables, the distribution of a random variable which is the maximum of the two variables with probability half and the minimum with probability half is exactly equal to the distribution of each random variable.
   This means, from the perspective of the top producer-ranked model in the clones of $j$, it is no better to submit two models than it is to submit a single model.

   The key idea we are leveraging is that when submitting clones, the producer cannot know which will do better --- the producer will have had to ``pick a winner'' between the clones in advance. 
   By contrast, under \sq, the distribution of $ \max \{\widehat R_{j^{(1)}}, \widehat R_{j^{(2)}} \}$ stochastically dominates that of $\widehat R_{j^{(1)}}$. This creates the selection-on-winners effect which producers could benefit from.
   Finally, we note that this mechanism removes the incentives for clones despite not having any special information about which models are clones.

\subsection{Approximate clone-robustness of YRWR.} We now show our main result in this section: for all producers $i$, it is an approximate dominant strategy to submit one copy of each distinct model, \textit{regardless} of producers' submitted rankings $\pi$.
   \begin{restatable}[Approximate cloneproofness]{theorem}{yrwrthm}
   \label{thm:yrwr}
       For all $\varepsilon > 0$, there exists $s_0, m_0$ such that for all $s \geq s_0$ and $m \geq m_0$, the following holds. Fix any $\pi, z$, and let $z' = (\1, z_{-i})$ be the profile where $i$ instead plays one copy of each model.
        Then
        \begin{align*}
            &\E_{\sigma \sim \Sigma(\yrwr(\cM(z'),\pi)}[u_i(\sigma)]) \geq \E_{\sigma \sim \Sigma(\yrwr(\cM(z),\pi))}[u_i(\sigma)] - \varepsilon.
        \end{align*}
   \end{restatable}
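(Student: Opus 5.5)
We compare the two profiles by passing through a sequence of intermediate profiles, moving one submitted copy at a time, and control each step with the workhorse \Cref{lem:small_prob_diff}. That lemma says that adding or removing a single model changes any fixed model's probability of winning any leaderboard by at most $O(1/\sqrt{s})$. Since $k_i$ is a constant, we may first reduce to the case where $i$'s deviation involves only finitely many models, and then account separately for deviations that submit arbitrarily many copies. Throughout, $\pi$ is held fixed: the ranking $\pi_i$ on the enlarged set $\cM_i(z)$ induces a ranking on whatever subset we keep.

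The central step is a coupling argument for a single distinct model $j$ with copies $j^{(1)} \succ_{\pi_i} \dots \succ_{\pi_i} j^{(z_{ij})}$. Under \yrwr, the corrected scores satisfy $\widecheck R_{j^{(\ell)}} \le \widecheck R_{j^{(1)}} \le \widehat R_{j^{(1)}}$. Moreover, ties within a producer are broken by $\pi_i$. Hence on every leaderboard $L$ the best-ranked copy of $j$ is never above where $j^{(1)}$ would sit with score $\widecheck R_{j^{(1)}}$. More generally, for the whole producer, the event that some model of $i$ wins $L$ is contained in the event that the $\pi_i$-top model of $i$ in $L$ wins with score $\widecheck R$. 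Other producers' corrected scores are monotone functions of their own $\widehat R$'s.

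So I would compare against the profile in which $i$ keeps only the $\pi_i$-first copy of each distinct model. In that profile, the MLE is refit without the extra copies. The difference between the two fits is a new-competitor effect: removing each extra copy changes the joint distribution of $(\widehat R_{j'})_{j'}$ only by the amount controlled by \Cref{lem:small_prob_diff}. In the reduced profile, the $\min$ correction over a smaller set is only larger, which favors $i$. Putting these together gives $\E[u_i]$ under $z'$ at least $\E[u_i]$ under $z$ minus (number of removed copies)$\cdot O(1/\sqrt{s})$, because summing over $L$ with the weights $\nu_i(L)$ adds at most a factor of $1$.

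If some $z_{ij}=0$, adding a copy back can only help $i$ up to the same $O(1/\sqrt{s})$ error. This is because an extra model of $i$ placed last in $\pi_i$ never lowers the corrected scores of $i$'s higher models.

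The main obstacle is that the number of removed copies is unbounded, so a per-copy $O(1/\sqrt{s})$ bound does not directly sum to $\varepsilon$. I would address this first by a truncation argument. Copies beyond the first of each distinct model can only help $i$ through their own scores $\widecheck R \le \widecheck R_{j^{(1)}}$, so their only effect on $i$ is as competitors and through the refit of the MLE. Under \Cref{cond2} and with $m \ge m_0$, I would argue the refit changes each other model's $\widehat R$ by $O(\text{(\#copies)}/m)$ plus noise. I would then try to show that the win probability of $i$'s top-ranked model is monotone or nearly invariant under the aggregate perturbation, perhaps by invoking \Cref{lem:small_prob_diff} in a form that bounds a batch of added models with the same quality jointly. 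Choosing $s_0, m_0$ so that the total error is below $\varepsilon$ then completes the proof.
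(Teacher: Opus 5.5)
Your core argument is the paper's. The paper telescopes over $i$'s distinct models using two lemmas, exactly as in your second through fourth paragraphs:
\begin{itemize}
\item \Cref{lem:atleast1copy}: a missing model is appended last in $\pi_i$. Its own win probability is nonnegative, and every other model's moves by at most the new-competitor error.
\item \Cref{lem:nomorethan1copy}: the extra copies $j^{(2)},\dots$ never win, because the min-correction plus $\pi_i$ tie-breaking keeps them below $j^{(1)}$. The remaining models' win probabilities move by at most the new-competitor error.
\end{itemize}
Your version is slightly more careful. You note that $i$ wins $L$ iff its $\pi_i$-top model in $L$ does, and that deleting copies only shrinks $\pi_i$-prefixes, which raises $i$'s corrected scores for fixed $\widehat R$. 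So only the refit needs \Cref{lem:small_prob_diff}. The paper applies that lemma directly to the \yrwr\ win events without separating these two effects.

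The gap is your last paragraph, and it cannot be closed, because the statement is false without a bound on the number of copies. The paper does not handle unbounded copies either. Its proofs take $\bar m$ to be a constant upper bound on $|\cM_i(z_i)|$, citing the assumption that producers have a constant number of models, although that assumption concerns distinct models $k_i$, not copies. They then apply \Cref{lem:small_prob_diff} at most $\bar m$ times per term and let $s_0,m_0$ depend on $\bar m$.

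Here is why no truncation or aggregate-invariance argument can replace this. Let $i$ have models $a\succ_{\pi_i} b$, let a single competitor model $\ell$ have $R_a-R_\ell=\delta$, and put every other model (one per producer) a constant lower.
\begin{itemize}
\item Since $a$ is $\pi_i$-first, $\widecheck R_a=\widehat R_a$. By \Cref{lem:btborda}, $i$ wins (up to negligible events) iff $a$ has more total wins than $\ell$.
\item With one copy each, this margin has mean $\asymp sm\delta$ and standard deviation $\asymp\sqrt{sm}$. Taking $\delta=c/\sqrt{sm}$ gives a win probability bounded away from $1$.
\item Now submit $N$ copies of $b$, all below $a$ in $\pi_i$. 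This leaves $\widecheck R_a=\widehat R_a$. The matchups against the clones add $\asymp Ns\delta$ to the expected margin but only $O(\sqrt{Ns})$ noise.
\item So the standardized margin grows like $\sqrt{(m+N)/m}$, and $i$'s win probability tends to $1$ as $N/m\to\infty$. This holds for arbitrarily large $s$ and $m$, so no choice of $s_0,m_0$ rescues the bound.
\end{itemize}
With $\delta<0$ the same cloning hurts $i$, so the aggregate effect is neither monotone nor nearly invariant. What the clones change is the spread of $\widehat R_a-\widehat R_\ell$, not its location.

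The fix is to add the hypothesis $z_{ij}\le\bar m$ for a constant $\bar m$. Your per-copy sum then finishes the proof. Note that the per-step error from \Cref{lem:small_prob_diff} is $C/\sqrt{m}+C_m/\sqrt{s}$, not $O(1/\sqrt{s})$, which is why $m_0$ is needed.

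One further caveat applies equally to the paper. \Cref{lem:small_prob_diff} controls only convex events of $\widehat R$. A \yrwr\ win event against a min-corrected competitor, such as $\{\widehat R_a\ge\min(\widehat R_x,\widehat R_y)\}$, is a union of half-spaces. So the transfer step needs an additional argument in either write-up.
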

Our proof of this result is in \Cref{proof:yrwr}. 
It relies on the intuition provided above along with an application our workhorse distributional stability result, \Cref{lem:small_prob_diff}: We observe that the distribution of the first mechanism-ranked clone is equal to that of the first producer-ranked clone, by isotonicity.
Then, using \Cref{lem:small_prob_diff}, we establish that the win probability of the first producer-ranked clone is approximately (up to additive $\varepsilon$) equal to the distribution of a single model under the strategy with one copy of each model.

\subsection{Accuracy of YRWR.} 
Thus far, we have established approximate clone-robustness of the YRWR mechanism. It is natural to ask whether this clone-robustness property comes at a cost to ranking accuracy. After all, our mechanism may modify estimated BT scores, even when there are no clones and when the pairwise preference data is generated by a BT model. Naturally, the impact of producer rankings on the accuracy of the mechanism depends to some degree on the quality of producers' submitted rankings --- but we now show that if producers' submitted rankings are consistent with the ground-truth ranking, our mechanism \textit{only makes the scores more accurate}, at least with respect to $\ell_\infty$ distance. We will henceforth refer to the correct ranking as $\pi^*_i:= \text{rank}(\cM_i,R)$ with $\pi^*:=\{\pi^*_i\}_{i \in [n]}$.

\begin{proposition}[\yrwr\ is accuracy-improving]
\label{lem:scores}
    Fix $R,\mathcal{M}$ and let $\widehat{R} = \sq(\cM)$ and $\widecheck{R} = \yrwr(\cM,\pi^*)$. Then,
    \[\|\widecheck{R} - R\|_\infty \leq \|\widehat{R} - R\|_\infty.\]
\end{proposition}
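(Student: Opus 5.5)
The argument is coordinatewise and deterministic. It holds for every realization of the votes, so no probabilistic input (in particular not \Cref{lem:small_prob_diff}) is needed. Since $\|\cdot\|_\infty$ is a maximum over models, it suffices to show that for every producer $i$ and every $j \in \cM_i$,
\[
|\widecheck R_j - R_j| \le \max_{j' \in \cM_i} |\widehat R_{j'} - R_{j'}|.
\]
Taking the maximum over $j$ and $i$ then gives the claim. Write $\Delta = \|\widehat R - R\|_\infty$. Fix $i$ and $j\in\cM_i$, and let $A_j = \{\pi^*_i(k) : k \le (\pi^*_i)^{-1}(j)\}$ be the set of models that the true ranking places weakly ahead of $j$. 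This set contains $j$ itself. By definition, $\widecheck R_j = \min_{j' \in A_j} \widehat R_{j'}$.

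The proof bounds the error in each direction separately.

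\emph{Upper bound.} Since $j \in A_j$, we have $\widecheck R_j \le \widehat R_j \le R_j + \Delta$.

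\emph{Lower bound.} The minimum is attained at some $j' \in A_j$, so $\widecheck R_j = \widehat R_{j'} \ge R_{j'} - \Delta$. Because $\pi^*_i = \operatorname{rank}(\cM_i, R)$ and $j'$ is ranked weakly ahead of $j$, monotonicity of the true ranking gives $R_{j'} \ge R_j$. Hence $\widecheck R_j \ge R_j - \Delta$.

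Combining the two bounds gives $|\widecheck R_j - R_j| \le \Delta$ for every $j$, which is the proposition.

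The only delicate point is the monotonicity step $R_{j'} \ge R_j$ for $j'$ placed ahead of $j$ by $\pi^*_i$. This holds by the definition of $\operatorname{rank}$ (ahead means $R_{j'} \ge R_j$), and it remains valid when $\pi^*_i$ breaks ties among clones or equal-quality models, since then $R_{j'} = R_j$. I would also note that the normalization of $\widehat R$ (e.g.\ $\sum_j \widehat R_j = 0$) plays no role. The correction is applied pointwise to whatever $\widehat R$ is fitted, and the comparison uses that same $\widehat R$ on both sides.

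A slightly stronger statement also follows: the producer-wise errors satisfy $\max_{j\in\cM_i}|\widecheck R_j - R_j| \le \max_{j\in\cM_i}|\widehat R_j - R_j|$ for each producer $i$. This mirrors the standard fact that projecting onto a monotone cone via a min-over-prefix operation is nonexpansive toward any point of that cone in $\ell_\infty$.
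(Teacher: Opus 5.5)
Your proof is correct and is essentially the paper's argument. The paper writes the correction as a map $T_{\pi^*}$, notes that $T_{\pi^*}(R)=R$ under the truthful ranking, and shows that the min-over-prefix correction is $1$-Lipschitz in $\ell_\infty$; your upper bound (via $j\in A_j$) and lower bound (via the minimizer $j'$ and $R_{j'}\ge R_j$) are those same two facts unpacked coordinatewise, and are stated more explicitly than the paper's somewhat informal justification of the Lipschitz step.
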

The proof of \Cref{lem:scores} is in \Cref{app:accuracy} and follows from the fact that \yrwr's score-correction replaces each score by a minimum over a fixed subset of scores --- a mapping that is $1$-Lipschitz in $\|\cdot\|_\infty$, so the correction cannot increase $\ell_\infty$ error.

\Cref{lem:scores} directly implies that that \yrwr \ maintains the asymptotic correctness properties of \sq. We formalize this next.
\begin{corollary}[Efficiency and correctness of \yrwr]
\label{cor:consistency}
   Fix $R,\cM$, and let $\widecheck{R} = \yrwr(\cM,\pi^*)$. Then,
   \begin{itemize}
       \item $\widecheck{R} $ is a $\sqrt{s}$-consistent estimator of $R$
       \item If $\exists \gamma > 0 : \min_{j\neq j' \in \cM}|R_j-R_{j'}|>\gamma > 0$, then
       \[P[\operatorname{rank}(\cM,\widecheck{R}) = \sigma^*] \to 1 \text{ as } s \to \infty.\]
   \end{itemize}
\end{corollary}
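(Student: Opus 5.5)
We derive both parts of \Cref{cor:consistency} from \Cref{lem:scores}, combined with standard facts about the Bradley--Terry maximum likelihood estimator $\widehat R$ used by \sq. With $\cM$ and $R$ fixed and $s \to \infty$, every pair of models receives $s$ independent comparisons and the number of parameters stays fixed. The win probabilities $p_{j\succ j'}$ lie strictly in $(0,1)$ by \Cref{cond2}, so the comparison graph is complete and well conditioned. Classical asymptotics for the BT MLE under the identifiability constraint (e.g.\ \citet{simons_asymptotics_1999}, or a direct M-estimation argument with a strictly concave log-likelihood and a nonsingular Fisher information on the constraint subspace) then give $\sqrt{s}(\widehat R - R) = O_P(1)$, and in fact asymptotic normality.

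\textbf{Consistency.} Since $\pi^*$ is correct, \Cref{lem:scores} gives the deterministic inequality $\|\widecheck R - R\|_\infty \le \|\widehat R - R\|_\infty$ for every realization of the votes. Multiplying by $\sqrt{s}$ yields $\sqrt{s}\|\widecheck R - R\|_\infty \le \sqrt{s}\|\widehat R - R\|_\infty = O_P(1)$. Stochastic boundedness transfers through this pointwise domination, so $\widecheck R$ is $\sqrt{s}$-consistent. One normalization subtlety needs care. \yrwr\ does not re-center the scores, so $\widecheck R$ need not satisfy $\sum_j \widecheck R_j = 0$. I will therefore take $R$ to be normalized the same way as $\widehat R$; \Cref{lem:scores} is stated with that convention, so no extra work is required.

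\textbf{Ranking correctness.} Let $\gamma$ be the minimal gap. On the event $E_s = \{\|\widecheck R - R\|_\infty < \gamma/2\}$, every pair $j\ne j'$ with $R_j > R_{j'}$ satisfies
\[
\widecheck R_j > R_j - \gamma/2 > R_{j'} + \gamma/2 > \widecheck R_{j'},
\]
because $R_j - R_{j'} > \gamma$. The inequalities are strict, so tie-breaking never matters and $\operatorname{rank}(\cM,\widecheck R)=\sigma^*$ on $E_s$. By \Cref{lem:scores}, $E_s \supseteq \{\|\widehat R - R\|_\infty < \gamma/2\}$, and the latter event has probability tending to one by consistency of the MLE, since $\gamma$ is fixed and the estimation error is $O_P(s^{-1/2})$.

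The main obstacle is justifying the standard MLE asymptotics in this exact setting. One must check that the MLE exists with probability tending to one: it can fail to exist if some model wins or loses all of its comparisons, but that event has exponentially small probability in $s$. One must also check that the constraint-fixed Fisher information is invertible. I would cite these facts rather than reprove them; everything else is the one-line transfer through \Cref{lem:scores}.
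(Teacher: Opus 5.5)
Your proposal is correct and follows the same route as the paper. Both parts rest on the pointwise domination $\|\widecheck R - R\|_\infty \le \|\widehat R - R\|_\infty$ from \Cref{lem:scores}, which transfers $\sqrt{s}$-consistency of the BT MLE to $\widecheck R$; ranking correctness then follows because the error drops below $\gamma/2$ with probability tending to one. Your explicit $\gamma/2$ argument and your remarks on MLE existence and normalization spell out details the paper leaves implicit.
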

Informally, the corollary states that, under truthful producer rankings, the modified scores produced by \yrwr\ inherit the statistical efficiency and asymptotic almost sure ranking correctness of the \sq\ mechanism.

\subsection{Truthfulness in producer rankings $\pi$.} 
The accuracy results above rely on producers ranking their models according to $R$. However, it is not immediately obvious that they will always have an incentive or knowledge to do so. 
In general, misreported producer rankings could lead to reductions in the accuracy of the ranking: Even with infinite data, if a producer misreports their ranking, the mechanism could drastically change the ranking to enforce isotonicity, leading to rankings which are far from correct.
We explore producer's ranking strategies and their implications next.

\paragraph{Incentives for producer misreports.} Even if a producer knows the ground-truth ranking over their own models, they may not be incentivised to truthfully report the ranking. The problem is producers' differential utilities across different leaderboards; we illustrate this with the following example.

\begin{example}
\label{ex:example}
    Suppose producer $i$ has two models $a,b$ with similar true rewards but where $R_a > R_b$. Suppose $a$ is closed-weight and $b$ is open-weight. Assume the producer knows that they have a negligible chance of winning the leaderboard $L_\mathrm{all}$ consisting of all models. I.e., $P(\sigma_{L_\mathrm{all}}(1) \in \{a, b\}) \approx 0$.
    Moreover, suppose the producer assigns non-negligible weight to the leaderboard for open-weight models $L_{\mathrm{open}}$, on which $b$ is eligible but $a$ is not, i.e., $b\in L_{\mathrm{open}}$, $a\notin L_{\mathrm{open}}$, and $\nu_i(L) \gg 0$. Also, suppose model $b$ has a non-negligible chance of winning $L_{\mathrm{open}}$, i.e., $P(\sigma_{L_\mathrm{open}}(1) = b) \gg 0$.
    Under the \yrwr \ score correction, if the producer reports $a\succ b$, then $b$'s corrected score is $\check R_b \;=\; \min\{\hat R_a,\hat R_b\},$ so a low realized $\hat R_a$---which can occur purely due to estimation noise, despite the fact that $R_a > R_b$---will \emph{cap} $b$'s corrected score and reduce $b$'s chance of winning leaderboard $L_{\mathrm{open}}$. 
\end{example}
Intuitively, what is happening is that producer $i$ misranks their models to ``protect'' model $b$ from being penalized due to noisy estimates of lower priority models. This problem is resolved if the qualities of models that \textit{could win} or \textit{could drag down winners} are sufficiently separated relative to $s$ such that the noise poses no risk. While one can formulate sufficient conditions for truthfulness of this flavor, we next state the simpler claim that once $s$ is large enough, \yrwr \ is truthful in $\pi$:


\begin{proposition}[Asymptotic truthfulness]
\label{prop:asymp-truth}
For all $z$ and $\varepsilon > 0$, there exists sufficiently large $s_0 \in \mathbb{N}$ such that for all $s \ge s_0$,
\begin{align*}
    &\mathbb{E}_{\sigma \sim \Sigma(\yrwr(\cM(z),(\pi_i^*,\pi_{-i})))}[u_i(\sigma)] \geq 
    \mathbb{E}_{\sigma \sim \Sigma(\yrwr(\cM(z),(\pi_i,\pi_{-i})))}[u_i(\sigma)] - \varepsilon.
\end{align*}
\end{proposition}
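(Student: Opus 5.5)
The plan is to compare both sides of the inequality against a common deterministic limit. As $s \to \infty$, with $z$ (hence $\cM$) fixed, consistency of the MLE (the same fact underlying \Cref{cor:consistency}) gives $\widehat R \to R$ in probability. The argument then splits according to whether producer $i$'s models have tied true qualities.

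\textbf{Case 1: no ties.} Suppose first that all models in $\cM$ have distinct true qualities; set $\gamma = \min_{j \neq j'} |R_j - R_{j'}| > 0$. On the event $E_s = \{\|\widehat R - R\|_\infty < \gamma/2\}$, which has probability tending to $1$, the following hold.
\begin{itemize}
\item The ranking $\operatorname{rank}(\cM_i, \widehat R)$ coincides with $\pi_i^*$.
\item Under the truthful report $\pi_i^*$, no correction is applied to producer $i$'s models: $\widecheck R_j = \widehat R_j$ for $j \in \cM_i$.
\item Under any report $\pi_i$, each corrected score is $\widecheck R_j = \min\{\widehat R_{j'} : j' \succeq_{\pi_i} j\} \le \widehat R_j$, with equality exactly when $j$ is not preceded in $\pi_i$ by a model of lower true quality.
\end{itemize}

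The key deterministic claim is this. Fix the scores of all other producers' models (they are unaffected by $\pi_i$, since correction is within-producer). Then for each leaderboard $L$, if some model of producer $i$ wins $L$ under $\pi_i$, some model of producer $i$ also wins $L$ under $\pi_i^*$.

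To see this, suppose $j$ wins $L$ under $\pi_i$. Every other producer's model in $L$ then has score below $\widecheck R_j \le \widehat R_j$. Under $\pi_i^*$ on $E_s$, $j$ has score $\widehat R_j$, so the top model of $L$ under $\pi_i^*$ beats every other producer's model in $L$. That top model is therefore one of producer $i$'s.

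Hence $u_i$ under $\pi_i^*$ is at least $u_i$ under $\pi_i$ on $E_s$. Since utilities lie in $[0,1]$, taking expectations gives a loss of at most $\Pr(E_s^c)$. Choosing $s_0$ so that $\Pr(E_s^c) < \varepsilon$ finishes this case. Cross-producer ties among true qualities are harmless here, since the argument is pointwise given other producers' scores; only within-$i$ separation was used, together with the fact that ties are broken arbitrarily.

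\textbf{Case 2: ties (clones) within $\cM_i$.} This is the main obstacle. For clones, $\pi_i^*$ is not unique and neither report dominates pathwise. I would group producer $i$'s models into classes of equal true quality. On the event that the between-class orderings of $\widehat R$ are correct, any $\pi_i$ consistent with $\pi_i^*$ across classes behaves like the truthful report up to within-class order.

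Within a class, the corrected scores under different within-class orders are exchangeable in distribution, because clones are exchangeable in the vote model. I would use this to argue that the win probabilities coincide exactly for any choice of $\pi_i^*$ among valid orders. For reports that violate the between-class order, the Case 1 argument applies: replace "lower true quality" by "lower class" and use $\gamma$ defined as the minimum gap between distinct true quality values.

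Remaining care points are checking the tie-breaking rule in \Cref{alg:yrwr} and the identifiability normalization. Neither affects the comparison, since both are shared by the two reports.
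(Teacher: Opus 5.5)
Your Case~1 is correct and takes a different, somewhat cleaner route than the paper. The paper bubble-sorts $\pi_i$ into $\pi_i^*$ by adjacent transpositions of misordered pairs. Each swap weakly raises $\widecheck R_j$, weakly lowers $\widecheck R_{j'}$, and can hurt only on $\{\widehat R_{j'}>\widehat R_j\}$, so the loss is a sum of at most $\binom{k_i}{2}$ such probabilities. You instead make one global comparison. The prefix minimum never exceeds $\widehat R_j$, and on the event that $\widehat R$ orders $\cM_i$ correctly the truthful report applies no correction, so it componentwise maximizes producer $i$'s scores. Other producers' corrected scores do not depend on $\pi_i$, and each win event is monotone in $i$'s own scores, so truthful reporting dominates pathwise on that event. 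The ingredients are the same (consistency plus monotonicity). Your version gives the loss bound directly as the probability that $\widehat R$ misorders some pair in $\cM_i$, with no induction. The paper's local form shows in addition that fixing any single adjacent inversion is already approximately improving. Like the paper, you treat ``below'' as strict. Exact cross-producer ties in $\widehat R$, which an arbitrary tie-break could resolve differently under the two reports, have vanishing probability for a fixed instance and can be absorbed into $\varepsilon$.

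On ties you go beyond the paper, which simply declares every tie order truthful and reruns the swap argument on distinct-quality pairs. That implicitly compares $\pi_i$ with a tie order depending on $\pi_i$. Two points in your Case~2 need care.

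First, the Case~1 argument cannot be run against $\pi_i^*$ directly, because a truthful report still takes prefix minima inside a class, so its scores are not componentwise maximal. The right comparison is with the stable sort of $\pi_i$ by class. On the event that $\widehat R$ orders the classes correctly, that sort dominates $\pi_i$ componentwise, and you then reach $\pi_i^*$ by a within-class permutation.

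Second, that permutation step needs tied models to be exchangeable \emph{including leaderboard membership}. This holds for clones but not for distinct models of equal quality. In the spirit of \Cref{ex:example}, suppose producer $i$ has $a\notin L_{\mathrm{open}}$ and $b\in L_{\mathrm{open}}$ with $R_a=R_b$ and $\nu_i(L_{\mathrm{open}})=1$. Suppose another producer's only model $c\in L_{\mathrm{open}}$ has $R_c=R_a$, and all other open models are strictly worse. By exchangeability of $(\widehat R_a,\widehat R_b,\widehat R_c)$, the report $b\succ a$ wins $L_{\mathrm{open}}$ with probability tending to $1/2$, and $a\succ b$ with probability tending to $1/3$. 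So your claim that win probabilities coincide for every valid tie order is false in general. For such ties the proposition holds only with $\pi_i^*$ a suitably chosen tie order, which is what the paper's convention effectively assumes.
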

In other words, truthfulness is an approximately dominant strategy for sufficiently large $s$.
Of course, the large $s$ regime is exactly where the \yrwr\ mechanism is least necessary (since in large samples, there are smaller incentives for strategic candidacy).
To address the concern of producer misranking, in the next section, we consider a variant of YRWR which overrides producer rankings when enough votes have been collected to confidently order pairs of models.

\subsection{An uncertainty-aware YRWR variant}

Even if a producer wishes to truthfully report their ranking over models, they may have uncertainty over the relative performance of their models.
This could again lead to degradations in the quality of the ranking: If a producer has no information about which models are better than others, their ranking can be no better than random and, even with infinite data, the mechanism would likely produce an incorrect ranking.
There are two ways to mitigate incorrect producer rankings due to producer uncertainty.

First, the platform could implement private testing, which allows for the collection of preference data before a model is submitted to public leaderboards.
Private testing can serve as a useful tool in circumstances where model producers are uncertain about the relative performances of their models: the producer can collect preference data about the relative performance of their models, and use it to inform the ranking they submit to the mechanism. 
Indeed, if each producer has only a small number of models, private testing can be very statistically efficient: There are only a small number of pairwise comparisons to make, so high-quality producer rankings can be computed with much less data than necessary for a high-quality overall ranking.
Platforms like Arena already provide private testing, and as long as fresh data is collected (i.e., private testing data is not used to form the ranking) when the model is released publicly, model producers cannot benefit from selection effects due to noise in preference data during private testing.

Second, the platform could implement a variant of the YRWR mechanism that only enforces producer rankings among fitted model scores that have overlapping confidence intervals.
\newcommand{\CI}{\mathrm{CI}}
That is, at confidence level $\alpha/\binom{m}{2}$, let $\widehat{\CI}_{\alpha/\binom{m}{2}}(j)$ be a $\sqrt{s}$-consistent confidence interval for model $j$ containing the MLE estimate $\hat R$ (perhaps as computed in \citet{chiang_chatbot_2024}).
Then this \textit{uncertainty-aware} (UA) YRWR variant, called \textsc{ua-\yrwr}, would be defined by fitting the BT-MLE scores as in \Cref{alg:yrwr} but correcting scores as
\begin{align*}
    \widecheck{R}^{\mathrm{UA}}_j \leftarrow \min \{ \widehat R_{\pi_i(k)} \; : \; k \leq \pi_i^{-1}(j), \widehat \CI_\alpha(j) \cap \widehat \CI_{\alpha}(k) \neq \varnothing \},
\end{align*}
and using these scores to produce a ranking. Like the vanilla \yrwr, the mechanism will take $\cM, \pi$ as arguments, but it will also take a simultaneous confidence level $\alpha$, which will be assumed to construct a set of confidence intervals that hold simultaneously with probability at least $1-\alpha$.
This variant of the mechanism is appealing because it ignores producer rankings in regimes where there is enough data to confidently rank models.
Thus, in infinite data, the ranking produced by this variant would be correct, regardless of the rankings submitted by model producers.
To achieve approximate clone-robustness with high probability, the confidence level $\alpha$ would have to be chosen to ensure simultaneous validity across all confidence intervals.
We next prove analogous results for the uncertainty-aware variant as we did for the vanilla variant in \Cref{thm:yrwr,cor:consistency}. Proofs are deferred to \Cref{app:accuracy}.

\Cref{thm:uayrwr} (to \Cref{thm:yrwr}) establishes approximate clone-robustness. 
\Cref{thm:uayrwr} is identical to \Cref{thm:yrwr}, except that it is looser by the simultaneous confidence level $\alpha$.
The argument for \Cref{thm:uayrwr} is almost directly implied by \Cref{thm:yrwr}: If the confidence intervals cover $R$, the argument for \Cref{thm:yrwr} goes through directly. If not, the change in utility can be at most the confidence level $\alpha$.

\begin{restatable}[Approximate cloneproofness of \textsc{ua-\yrwr}]{corollary}{uayrwrthm}
\label{thm:uayrwr}
   For all $\varepsilon > 0$, there exists $s_0, m_0$ such that for all $s \geq s_0$ and $m \geq m_0$, the following holds. Fix any $\pi, z$, and let $z' = (\1, z_{-i})$ be the profile where $i$ instead plays one copy of each model. For any simultaneous confidence level for \textsc{ua-\yrwr} $\alpha > 0$, it holds 
    \begin{align*}
        &\E_{\sigma \sim \Sigma({\textsc{ua-\yrwr}}(\cM(z'),\pi,\alpha)}[u_i(\sigma)]) \geq \E_{\sigma \sim \Sigma({\textsc{ua-\yrwr}}(\cM(z),\pi, \alpha))}[u_i(\sigma)] - \varepsilon - \alpha.
    \end{align*}
\end{restatable}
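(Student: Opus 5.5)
The plan is a coupling that reduces \Cref{thm:uayrwr} to \Cref{thm:yrwr}, paying $\alpha$ once for the failure event of the confidence intervals. For a strategy profile $z$, let $E_z$ be the event that the simultaneous intervals $\{\widehat\CI_{\alpha/\binom{m}{2}}(j)\}_{j\in\cM(z)}$ all cover their true qualities. By construction $\Pr(E_z^c)\le\alpha$. Since $u_i\in[0,1]$ (the weights $\nu_i$ sum to one and at most one model wins each leaderboard), we can write
\[
\E[u_i(\sigma^{\mathrm{UA}})]=\E[u_i(\sigma^{\mathrm{UA}})\mathbf{1}_{E_z}]+\E[u_i(\sigma^{\mathrm{UA}})\mathbf{1}_{E_z^c}].
\]
The second term lies in $[0,\alpha]$. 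So it suffices to compare the two profiles on the good events, up to an additive $\alpha$.

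The key structural step concerns clones. On $E_z$, any two clones $j^{(a)},j^{(b)}$ of the same distinct model have intervals that both contain the common true value $R_j$. Hence the intervals intersect, and the UA correction keeps every clone pair in the minimum. This is exactly what \Cref{thm:yrwr} uses: the top mechanism-ranked clone is distributed as the top producer-ranked clone, i.e.\ $\max_a\widecheck R^{\mathrm{UA}}_{j^{(a)}}=\widecheck R^{\mathrm{UA}}_{j^{(1)}}$ on $E_z$. The min over a possibly smaller set of non-clone models is a data-dependent filter. I would argue that it acts identically on the reduced profile $z'=(\1,z_{-i})$, since the filter depends only on the distinct models' intervals. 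I would then rerun the proof of \Cref{thm:yrwr} with $\widecheck R$ replaced by $\widecheck R^{\mathrm{UA}}$.

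Concretely, the steps are as follows.
\begin{enumerate}
\item Reduce, on $E_z$, the outcome under $z$ to that of the profile in which only the first producer-ranked clone of each $j$ is relevant to $i$'s utility.
\item Invoke \Cref{lem:small_prob_diff} to show that removing the other clones changes each leaderboard-win probability by $O(1/\sqrt{s})$. For large $s,m$ this total is at most $\varepsilon$.
\item Add back the $\alpha$ from $E_z^c$ (and note that $E_{z'}$ only helps the lower bound, because utility is nonnegative).
\end{enumerate}

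The main obstacle is step 2. \Cref{lem:small_prob_diff} is stated for the MLE-based ranking, but the UA ranking depends on the intervals, which themselves shift when clones are removed. I would try first to show that the intervals are $\sqrt{s}$-consistent functions of $\widehat R$ whose widths change by $o(1)$ under the new-competitor perturbation. Then the events defining the filtered minima change with probability $O(1/\sqrt{s})$, and that change can be absorbed into $\varepsilon$. If this is too delicate, the fallback is to simply bound the disagreement by an extra $\alpha$, making the whole argument a union bound over the two good events. That fallback worsens the constant on $\alpha$, which suggests the intended argument handles $E_{z'}$ by monotonicity rather than by a separate union bound.
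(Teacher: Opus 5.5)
\textbf{Gap in the clone-ordering step.} Your route is the paper's route: split on the simultaneous-coverage event, note that all clones' intervals contain $R_j$ and therefore intersect, rerun the proof of \Cref{thm:yrwr} on that event, and pay $\alpha$ on the complement. Your use of $u_i\ge 0$ to discard $E_{z'}^c$ is a cleaner way of paying $\alpha$ only once than the paper's device of conditioning both profiles on a single event. The step that fails is the identity $\max_a \widecheck R^{\mathrm{UA}}_{j^{(a)}}=\widecheck R^{\mathrm{UA}}_{j^{(1)}}$ on $E_z$.

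Pairwise overlap of the clones' intervals only guarantees that $\widehat R_{j^{(1)}}$ enters the minimum defining $\widecheck R^{\mathrm{UA}}_{j^{(2)}}$. It does not make the filtered sets nested along $\pi_i$. Each clone's filter is computed from that clone's own interval, which is random and differs across clones. This is exactly where your remark that the filter ``depends only on the distinct models' intervals'' breaks.

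Concretely, let $\pi_i$ place a model $k$ with $R_k=R_j-2w$ above $j^{(1)}\succ j^{(2)}$. Use intervals $[\widehat R_x-w,\widehat R_x+w]$ and take the realization $\widehat R_k=R_k$, $\widehat R_{j^{(1)}}=R_j-w/2$, $\widehat R_{j^{(2)}}=R_j+w/2$. Every interval covers its truth, and the two clone intervals meet. The interval of $k$ meets that of $j^{(1)}$ but not that of $j^{(2)}$. Hence $\widecheck R^{\mathrm{UA}}_{j^{(1)}}=R_k$ while $\widecheck R^{\mathrm{UA}}_{j^{(2)}}=R_j-w/2$. The second clone outranks the first and escapes the drag-down that a single copy with estimate $R_j-w/2$ would suffer.

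\textbf{Why this is not a small perturbation.} Each clone gets its own chance of drawing an interval that avoids $k$'s. This is the lottery-ticket effect again, now acting through the filter. When $R_j-R_k$ is of the order of the interval widths, the configuration above has probability bounded away from zero for every $s$. The statement's uniformity over instances and over arbitrary $\pi$ permits such instances. Placing a rival producer's model between $R_k$ and $R_j$ then turns this into a utility gain that does not vanish with $s$. Neither \Cref{lem:small_prob_diff}, nor your interval-stability estimate, nor an extra $\alpha$ can absorb it into $\varepsilon$.

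The paper's proof does not supply the missing idea. It asserts the same thing (on the coverage event ``the isotonic score correction will be applied to all clones and the arguments \dots apply as is''). It also passes over the interval-shift issue you flagged.

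\textbf{What can be salvaged.} The argument does go through when $\pi_i$ is consistent with $R$. Suppose coverage holds, the intervals contain $\widehat R$, $k\succ_{\pi_i}x$, and $\widehat R_k\le\widehat R_x$. Then the lower end of $k$'s interval is at most $\widehat R_k\le\widehat R_x$, which is at most the upper end of $x$'s interval. Likewise, the lower end of $x$'s interval is at most $R_x\le R_k$, which is at most the upper end of $k$'s interval. So the minimizer of the unfiltered correction always survives the filter, and $\widecheck R^{\mathrm{UA}}$ coincides with the vanilla \yrwr\ correction on $i$'s models. Your reduction to \Cref{thm:yrwr} is then legitimate, modulo the stability step. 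For arbitrary $\pi$, as the statement requires, you would need a genuinely new argument or a filter whose selected sets are nested along $\pi_i$.
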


\Cref{prop:uaconsistency} establishes $\sqrt{s}$-consistency of the fitted scores and asymptotic correctness of the estimated ranking.
\Cref{prop:uaconsistency} is considerably stronger than \Cref{cor:consistency}: it holds for \text{any} producer ranking, rather than just truthful ones.
The argument for the proposition is also different: Since we do not assume the producer ranking is truthful, we cannot appeal to \Cref{lem:scores}, so in the proof we make a direct argument for efficiency and correctness.

\begin{proposition}[Efficiency and correctness of \textsc{ua-\yrwr}]
\label{prop:uaconsistency}
   Fix $R,\cM$, and any set of producer rankings $\pi$. Let $\widecheck{R}^{\mathrm{UA}} = \textsc{ua-\yrwr}(\cM,\pi,\alpha)$. Then,
   \begin{itemize}
       \item $\widecheck{R}^{\mathrm{UA}}$ is a $\sqrt{s}$-consistent estimator of $R$
       \item If $\exists \gamma > 0 : \min_{j\neq j' \in \cM}|R_j-R_{j'}|>\gamma > 0$, then
       \[P[\operatorname{rank}(\cM,\widecheck{R}^\mathrm{UA}) = \sigma^*] \to 1 \text{ as } s \to \infty.\]
   \end{itemize}
\end{proposition}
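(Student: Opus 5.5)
The plan is to reduce both claims to two standard facts about the BT-MLE $\widehat R$ with $R,\cM$ fixed and $s\to\infty$: (i) $\|\widehat R - R\|_\infty = O_P(1/\sqrt{s})$, which is the $\sqrt{s}$-consistency already used for \sq\ and invoked in \Cref{cor:consistency}; and (ii) the simultaneous intervals $\widehat\CI_{\alpha}(j)$ are $\sqrt{s}$-consistent, so each has width $O_P(1/\sqrt{s})$ and contains $\widehat R_j$. Throughout, the uncertainty-aware correction is read as comparing the interval of $j$ with the interval of the model $\pi_i(k)$ at position $k$. Note that $j$ itself always belongs to its own index set, so $\widecheck R^{\mathrm{UA}}_j \le \widehat R_j$ holds always. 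Hence the only way the correction can hurt is by pulling $\widecheck R^{\mathrm{UA}}_j$ down to some $\widehat R_{j'}$, where $j'$ is ranked above $j$ by $\pi_i$ and its interval overlaps that of $j$.

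\emph{Consistency.} Fix $j\in\cM_i$, and let $j'$ be the model attaining the minimum that defines $\widecheck R^{\mathrm{UA}}_j$. Because $\widehat\CI_\alpha(j)$ and $\widehat\CI_\alpha(j')$ intersect, and both contain the respective MLEs, we get
\[
|\widehat R_j - \widehat R_{j'}| \le |\widehat\CI_\alpha(j)| + |\widehat\CI_\alpha(j')|.
\]
Combining this with $\widecheck R^{\mathrm{UA}}_j \le \widehat R_j$ gives
\[
0 \le \widehat R_j - \widecheck R^{\mathrm{UA}}_j \le 2\max_{\ell}|\widehat\CI_\alpha(\ell)| = O_P(1/\sqrt{s}),
\]
where the last step takes a maximum over the finitely many models. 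The triangle inequality then yields $\|\widecheck R^{\mathrm{UA}} - R\|_\infty \le \|\widehat R - R\|_\infty + O_P(1/\sqrt s) = O_P(1/\sqrt s)$. This bound holds for an arbitrary $\pi$, which is exactly the point of the proposition.

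\emph{Ranking correctness.} Assume the gap condition $\min_{j\ne j'}|R_j-R_{j'}|>\gamma$, and define the event
\[
E_s=\Bigl\{\|\widehat R-R\|_\infty<\gamma/4\Bigr\}\cap\Bigl\{\max_\ell|\widehat\CI_\alpha(\ell)|<\gamma/4\Bigr\},
\]
which has $P(E_s)\to1$ by (i) and (ii). On $E_s$, any two distinct models $j,j'$ satisfy
\[
|\widehat R_j-\widehat R_{j'}| > \gamma-\gamma/2=\gamma/2,
\]
while two intervals that overlap force the MLEs to lie within $\gamma/2$ of each other. So on $E_s$ no two distinct models have overlapping intervals. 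Every index set in the correction then reduces to $\{j\}$, giving $\widecheck R^{\mathrm{UA}}=\widehat R$, and $\operatorname{rank}(\cM,\widehat R)=\sigma^*$ because every order is preserved under $\gamma/4$ perturbations of a $\gamma$-separated vector. Tie-breaking never matters, since all scores are distinct on $E_s$. Therefore $P[\operatorname{rank}(\cM,\widecheck R^{\mathrm{UA}})=\sigma^*]\ge P(E_s)\to1$.

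The main obstacle is making fact (ii) precise. It is tempting to reason that on the coverage event the correction only merges models whose true scores are close. That is not enough: two models with distinct $R$ values can still have overlapping intervals at moderate $s$, and a misreported $\pi$ then drags scores down. The argument above sidesteps coverage entirely and instead uses that the interval widths shrink at rate $1/\sqrt s$. This needs the definition of $\sqrt{s}$-consistent confidence intervals to include $\max_\ell|\widehat\CI_\alpha(\ell)|=O_P(1/\sqrt s)$ for the fixed level $\alpha/\binom m2$. For Wald-type intervals built from the Fisher information, as in \citet{chiang_chatbot_2024}, this holds under \Cref{cond2}, because the information is bounded below when $R$ is bounded. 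I would state this property explicitly as the assumption being used.
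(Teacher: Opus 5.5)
Your proposal is correct and uses the paper's own decomposition $\widecheck R^{\mathrm{UA}}_j - R_j = (\widecheck R^{\mathrm{UA}}_j - \widehat R_j) + (\widehat R_j - R_j)$: interval widths control the first term and MLE consistency controls the second. You also spell out the step the paper only asserts, namely that the minimizing model's interval meets that of $j$, so the downward correction is at most the sum of two widths, which is why shrinking widths rather than coverage is what matters. For ranking correctness, the paper reads the result off directly from $\ell_\infty$-consistency plus the $\gamma$-gap, whereas you show the correction is switched off on a high-probability event so that $\widecheck R^{\mathrm{UA}} = \widehat R$ there; both routes work, and yours gives the slightly stronger fact that producer rankings are asymptotically ignored.
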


\subsection{A within-leaderboard YRWR variant}
In the results we have presented so far, we have established that \yrwr\ incentivizes truthful reports in the asymptotic regime (\Cref{prop:asymp-truth}) but that in general, producers may misreport their true rankings in finite samples (\Cref{ex:example}).
However, if we modify the mechanism to perform \textit{within-leaderboard} instead of global score correction, model producers are incentivised to submit truthful rankings in finite samples as well, as we show in our next result.
Formally, this modified mechanism, called \textsc{local-\yrwr}, is the same as \yrwr \ except that its score correction is performed within each leaderboard:
\begin{equation*}
\widecheck{R}^{\text{local}}_{j; \pi,L} :=
\min\Bigl\{\hat{R}_{\pi_i(k)}:\ k\le \pi_i^{-1}(j)\ \text{and}\ \pi_i(k)\in L\Bigr\}.
\end{equation*}

To implement this approach, leaderboards must be explicitly defined on the platform, rather than implicitly defined by, e.g., a user who will choose the first among a subset of models.
That is, the platform must be able to provide separate rankings for different leaderboards.
To accomplish this, platforms could provide filters on the rankings within each arena, using metadata about each model, like model size, cost, latency, or other factors.
Thus, users who wanted, e.g., to see the ranking among models below a given cost threshold, could see a ranking generated to be both truthful and clone-robust.\footnote{However, even if such a filter system could be feasibly implemented, a possible downside of \textsc{local-\yrwr} is that it can lead to inconsistent rankings between pairs of models across leaderboards, which users might find confusing or difficult to interpret.}
%

%
Our next result establishes truthfulness of the \textsc{local-\yrwr} mechanism.

\begin{proposition}[Truthfulness of \textsc{local-\yrwr}]
\label{prop:truthful}
Fix any producer $i$, any strategy profile $z$ in which producer $i$ submits exactly one copy of each model $(\cM_i = K_i)$, and fix any other-producer rankings $\pi_{-i}$.
Then for every finite $s$ and every alternative report $\pi_i$,
\begin{align*}
    &\mathbb E_{\sigma\sim \Sigma(\textsc{local-\yrwr}(\cM(z),(\pi_i^\star,\pi_{-i})))}[u_i(\sigma)]
\ \ge\\
&\qquad \mathbb E_{\sigma\sim \Sigma({\textsc{local-\yrwr}}(\cM(z),(\pi_i,\pi_{-i})))}[u_i(\sigma)].
\end{align*}
\end{proposition}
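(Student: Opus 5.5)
Since $\widehat R$ is computed from votes alone and does not depend on $\pi$, I will fix a realization of the votes and compare outcomes pointwise. Pointwise (almost-sure) dominance implies the inequality in expectation for every finite $s$. By linearity, $u_i(\sigma)=\sum_{L}\nu_i(L)\,\mathbf 1(\sigma_L(1)\in\cM_i)$, so it suffices to show that for each fixed leaderboard $L$ and each realization $\widehat R$, if producer $i$ wins $L$ under some report $\pi_i$, then $i$ also wins $L$ under $\pi_i^\star$. I am reading \textsc{local-\yrwr} as producing, for each $L$, the ranking $\operatorname{rank}(L,\widecheck R^{\mathrm{local}}_{\cdot;\pi,L})$, with the same tie-breaking conventions as \Cref{alg:yrwr}.

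\textbf{Step 1 (own models' best score).} For a fixed $L$, restrict producer $i$'s report to $\cM_i\cap L$. The corrected scores are running minima of $\widehat R$ along the order $\pi_i|_L$. The first model in that order keeps its own score, and every later model has corrected score at most that value. Hence $\max_{j\in\cM_i\cap L}\widecheck R^{\mathrm{local}}_{j;\pi,L}=\widehat R_{\pi_i|_L(1)}$: producer $i$'s best corrected score in $L$ is simply the raw score of whichever model it places first among its models in $L$.

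\textbf{Step 2 (other producers are unaffected).} The corrected scores of other producers depend only on $\pi_{-i}$ and $\widehat R$, so they are identical under both reports. Producer $i$ wins $L$ exactly when its best corrected score beats every competitor's corrected score, up to tie-breaking.

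\textbf{Step 3 (compare reports).} Under $\pi_i^\star$, the first model in $L$ is the true-best $j^\star=\arg\max_{j\in\cM_i\cap L}R_j$. Under $\pi_i$, it is some $j'$. The win event under $\pi_i$ is $\{\widehat R_{j'}>\max_{\text{others}}\}$, and under $\pi_i^\star$ it is $\{\widehat R_{j^\star}>\max_{\text{others}}\}$.

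\textbf{The main obstacle} is that these events are not nested pointwise, because $\widehat R_{j'}$ may exceed $\widehat R_{j^\star}$ on a given realization, so Step 3 does not finish the argument. I would therefore work distributionally within each $L$ and show
\[
\Pr\Big(\widehat R_{j^\star}>M\Big)\ge\Pr\Big(\widehat R_{j'}>M\Big),
\qquad M:=\max_{\text{other producers' models in }L}\widecheck R.
\]
The tool is the symmetry of the BT likelihood under swapping the labels of $j^\star$ and $j'$. Since $R_{j^\star}\ge R_{j'}$, the vote vector with the two models' roles exchanged is stochastically dominated, and the MLE is monotone: increasing $j$'s wins against every opponent weakly increases $\widehat R_j$ relative to the others, and $M$ is a coordinatewise nondecreasing function of the others' $\widehat R$. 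I would first try a monotone coupling of the votes. Couple the $j^\star$-vs-$k$ votes with the $j'$-vs-$k$ votes so that $j^\star$'s win counts dominate coordinatewise, and treat the $j^\star$-vs-$j'$ votes by symmetry. Then show via the MLE's monotonicity (a standard comparison property of Bradley–Terry fits) that $\widehat R_{j^\star}-M$ stochastically dominates $\widehat R_{j'}-M$. Summing over $L$ with weights $\nu_i(L)$ then gives the proposition. If the MLE monotonicity is too delicate, the fallback is to prove that coupling lemma separately before combining the steps.
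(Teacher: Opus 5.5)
The paper states this proposition without proof, so your argument has to stand on its own. Your Steps 1--3 are a correct reduction. For each $L$, producer $i$ wins iff $\widehat R$ of its first-listed model in $\cM_i\cap L$ beats $M$, and $M$ does not depend on $\pi_i$. So everything reduces to showing $\Pr(E_{j^\star})\ge\Pr(E_{j'})$, where $E_x=\{\widehat R_x>M\}$.

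\textbf{The gap is the final step; the coupling you sketch would not close it.} $M$ depends on the other producers' scores, and those depend on how their models fared against \emph{both} $j^\star$ and $j'$. Relabeling $j^\star\leftrightarrow j'$ gives $\Pr(E_{j'})=\widetilde{\Pr}(E_{j^\star})$, where $\widetilde{\Pr}$ is the vote law with $R_{j^\star}$ and $R_{j'}$ exchanged. But this exchanged vote vector is \emph{not} stochastically dominated coordinatewise. Under the exchange, $j^\star$'s wins against each opponent $k$ become $\mathrm{Bin}(s,p_{j'\succ k})$ (smaller), while $j'$'s become $\mathrm{Bin}(s,p_{j^\star\succ k})$ (larger). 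A coupling that only makes $j^\star$'s win counts dominate leaves $k$'s total $B_k=(s-v_{j^\star\succ k})+(s-v_{j'\succ k})+\cdots$ uncontrolled. For example, suppose the real world gives $j^\star$ one more win over $k$ but gives $j'$ three fewer. Then $B_{j^\star}$ rises by one while $B_k$ rises by two, so the margin $j^\star$ needs over $k$ shrinks. Hence ``MLE monotonicity plus $M$ nondecreasing in the others' scores'' gives no inclusion of events. The $j^\star$-vs-$j'$ votes are also not handled ``by symmetry''; they need a monotone coupling using $p_{j^\star\succ j'}\ge 1/2$. Those votes do not enter $M$, so that part is easy.

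\textbf{The missing ingredient is a coupling that holds every opponent's total against the pair $\{j^\star,j'\}$ fixed.} Fix an opponent $k$ and one of the $s$ comparisons, and write $p=p_{j^\star\succ k}\ge q=p_{j'\succ k}$.
\begin{itemize}
\item The outcome pair ($j^\star$ beats $k$, $j'$ beats $k$) has the same masses $pq$ and $(1-p)(1-q)$ on $(1,1)$ and $(0,0)$ under both laws.
\item The real law puts $p(1-q)$ on $(1,0)$, whereas the exchanged law puts only $q(1-p)$ there. The exchanged law puts $p(1-q)$ on $(0,1)$, whereas the real law puts only $(1-p)q$ there.
\item Match the excess $p-q$ of real $(1,0)$ with the excess $p-q$ of exchanged $(0,1)$, and match everything else identically.
\end{itemize}
In this coupling the sum is preserved and $j^\star$'s indicator is weakly larger in the real world. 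Consequently every Borda total other than $B_{j^\star},B_{j'}$ is identical across the two worlds, and $B_{j^\star}$ is weakly larger in the real one.

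\textbf{Then use \Cref{lem:btborda} instead of a value-level MLE monotonicity.} You do not need $\widehat R_{j^\star}-M$ to dominate $\widehat R_{j'}-M$ in value, and you have not justified that claim. \Cref{lem:btborda} says $\widehat R$ orders models exactly as Borda totals. For each other producer's model $k$ in $L$, let $S_k$ be the set of that producer's models in $L$ listed at or above $k$. Then $E_{j^\star}=\{B_{j^\star}>\max_k\min_{k''\in S_k}B_{k''}\}$, adjusted for the between-producer tie rule. That rule must not depend on which of $i$'s models is on top. This event is increasing in $B_{j^\star}$ with the other totals fixed. So the event under the exchanged law is contained in the event under the real law pointwise in the coupling, which gives $\Pr(E_{j^\star})\ge\Pr(E_{j'})$. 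Summing over $L$ with weights $\nu_i(L)$ then completes the proof.
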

An analogous approximate strategy-proofness result holds for \textsc{local-\yrwr} as for \yrwr, using the same argument as in the proof of \Cref{thm:yrwr}.
Intuitively, it is always approximately utility improving to remove clones from \textit{any} rank correction operating over a set of clone, regardless of which other models by the same producer might or might not be in the same leaderboard.


\subsection{Simulation study with Arena data}  \label{empirics:yrwr}

\paragraph{\yrwr \ vs \sq \ on rank positions gained (\Cref{fig:results}).} We now unpack the results on \yrwr \ presented in \Cref{fig:results}, which are shown on the righthand side for each arena. Our methods are exactly the same as in \Cref{empirics:sq} except that when testing \yrwr, we had to additionally generate each producer $i$'s ranking over their own models $\pi_i$, including any potential clones. For a given set of submitted models produced by producer $i$, in the simulations for \Cref{fig:results}, we assume the producer ranked them accurately, i.e., as $\pi_i = $ rank$(\cM_i,R)$.

We see a striking difference between the two mechanisms: even on arenas where $s$ is small, \yrwr \ admits \textit{essentially zero} gains in rank for \textit{any} model via cloning --- even as the number of clones increases. Nonetheless, there are models that see small rank improvements from submitting clones to the mechanism, as a result of the new competitor effect. 
In \cref{sec:addlempirics}, \cref{fig:rankdiffbyqualityyrwr}, we visualize the benefits of cloning for each model relative to model quality. 
We show that models near the middle of the ranking and for which there are many models are similar quality are the main beneficiaries of cloning under \yrwr, while models with top or bottom true qualities see nearly zero benefits to cloning.
We leave further analysis of how the new competitor effect varies with fitted score and competitiveness of the model for future work.


\paragraph{\yrwr \ versus \sq \ on Accuracy (\Cref{subfig:true,subfig:noisy})} Our theory tells us that under correct producer rankings $\pi_i$, \yrwr \ should produce more \textit{accurate} reward estimates than \sq \ (in infinity norm). Unsurprisingly, we find that these more accurate reward estimates translate to more accurate rankings: in \Cref{fig:accuracy}, we compare the bubble-sort distances (also called Kendall-Tau distance \citep{Kendall1938}) from the respective rankings produced by \yrwr \ and \sq \ to rank$(\cM,R)$, the ground-truth ranking. We see that across all six arenas and all numbers of clones, \yrwr \ has lower distance to the true ranking than \sq, often by many tens of swaps. This is increasingly true as the number of clones increases, illustrating the importance of clone-robustness in recovering accurate rankings. 

\begin{figure}[t]
  \centering

  \begin{subfigure}[t]{0.44\linewidth} 
      \includegraphics[width=.98\columnwidth]{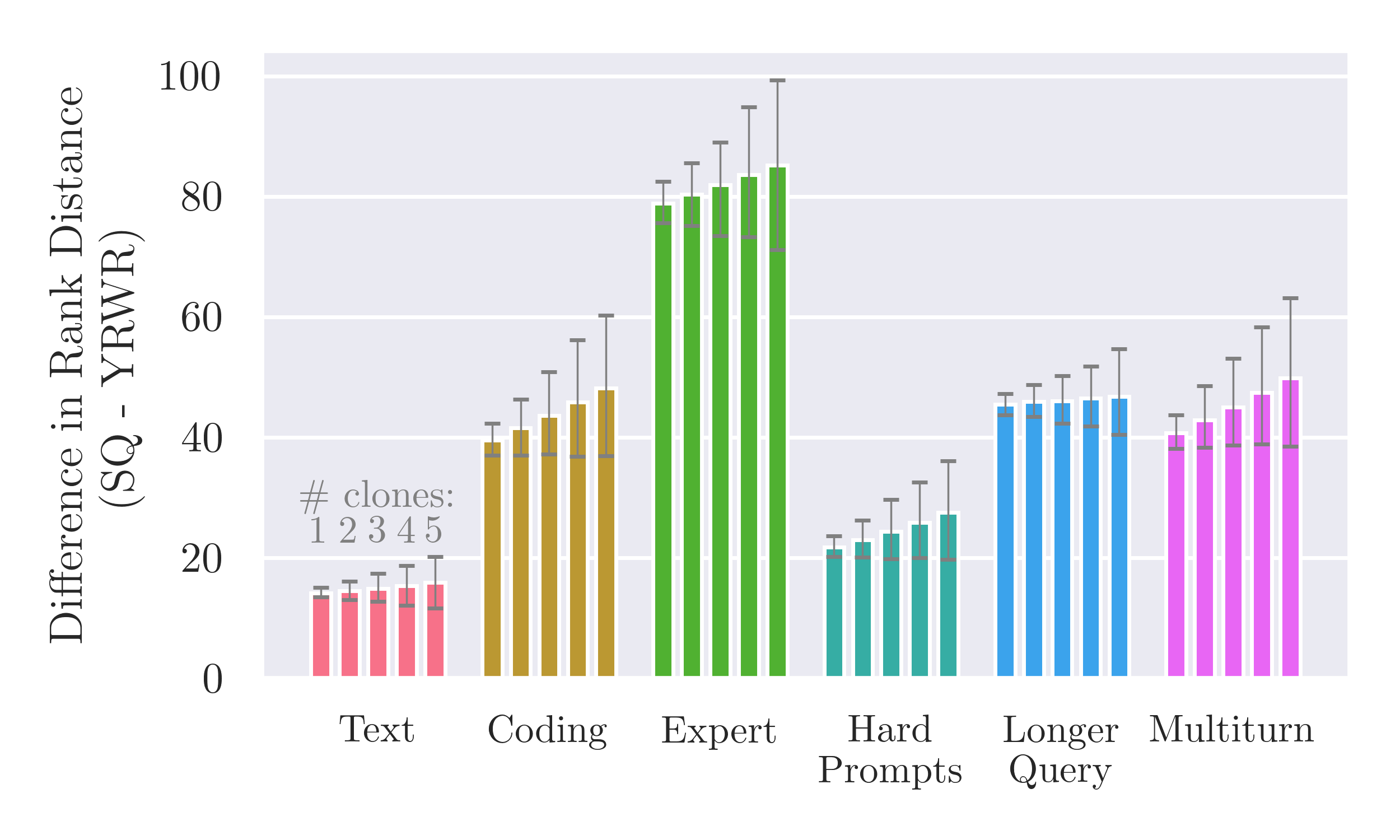}
      \caption{Correct producer rankings.}
      \label{subfig:true}
  \end{subfigure}
  \begin{subfigure}[t]{0.55\linewidth} 
      \includegraphics[width=.98\columnwidth]{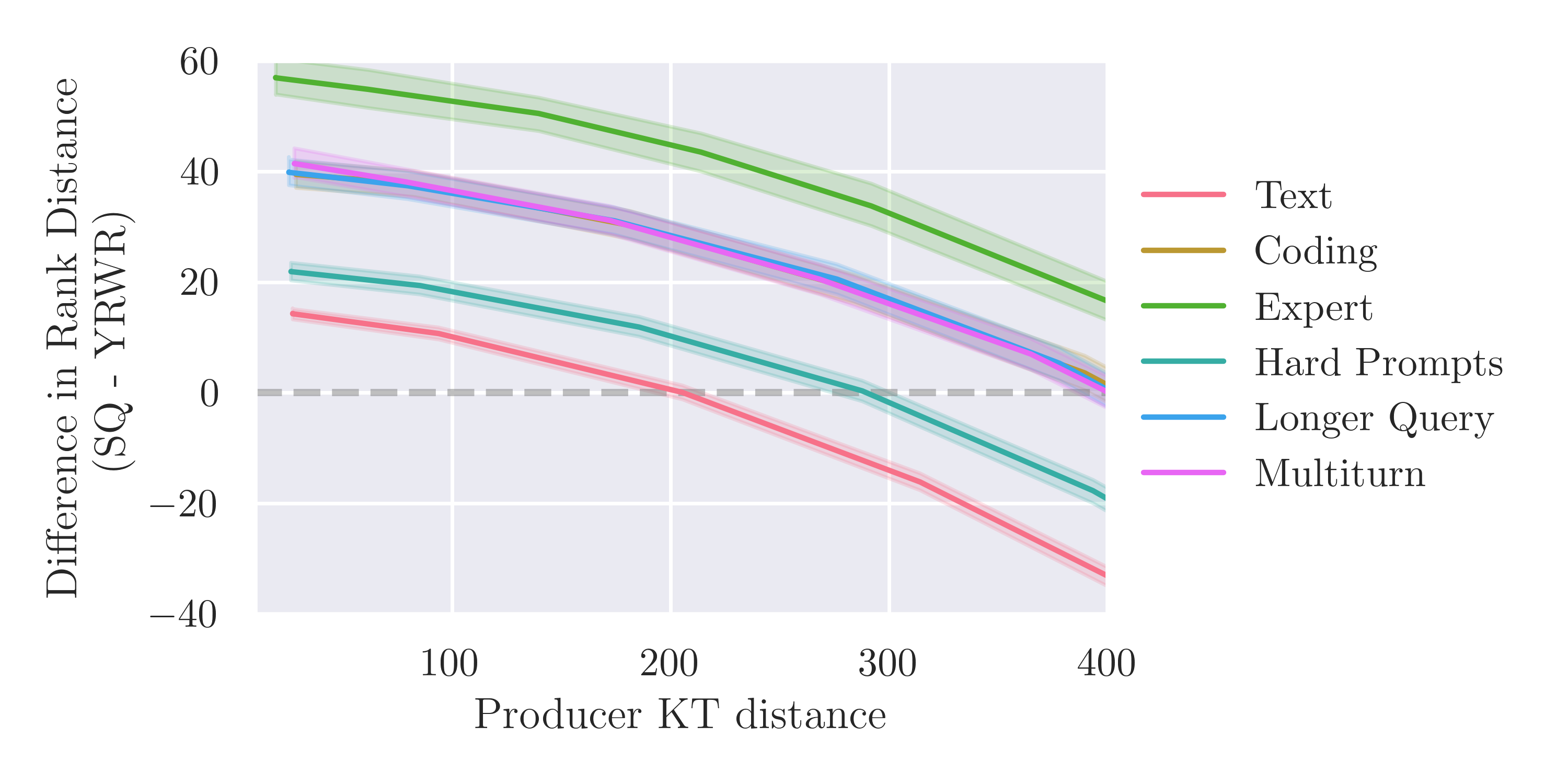}
      \caption{Noisy producer rankings.}
      \label{subfig:noisy}
  \end{subfigure}
  \caption{Difference in Kendall-Tau distance to the ground truth under the \textit{Status Quo} versus \textit{You-Rank-We-Rank} mechanisms, across Arena's various arenas. Difference greater than 0 implies that the YRWR mechanism is closer to the true ranking. }
  \label{fig:accuracy}
\end{figure}

Our theory leaves open whether these gains in accuracy are robust to \textit{inaccuracies} in producers' rankings, i.e., scenarios in which $\pi_i \neq \text{rank}(\cM_i,R)$. We test this by repeating the analysis in with producers' rankings perturbed via a random utility model. To compute perturbed producer rankings $\tilde{\pi}_i$, we add i.i.d Gaussian noise $\epsilon$ to the model qualities:
\[\tilde{R}_j = R_j + \varepsilon\]
{We sweep over the variance of $\varepsilon$ such that the Kendall-Tau distance between the perturbed and true rankings is between 10\% the length of the list and more than 100\% of the length of the list, so that for a ranking of about 300 models, the Kendall Tau distance between $\tilde R$ and $R$ is between 30 and 400. }

We show the results of this analysis in \Cref{subfig:noisy}. We see that the ranking by \yrwr\ remains substantially more accurate  than \sq\ for most settings of the noise variance across arenas, with the accuracy advantage diminishing as the noise increases.
Eventually, when producer rankings are sufficiently noisy, the accuracy benefits of \yrwr\ drop below zero, although this only happens when the noise in rankings is on the same order as the number of models.

\section{Discussion} \label{sec:discussion}

Generative AI arenas serve as important and useful mechanisms to compare AI models under realistic use conditions.
Our work studies a simple vulnerability in status quo mechanisms: producers can leverage the noise inherent to these rankings by submitting identical or near-identical models. Such cloning-based manipulations can in turn further deplete samples, leading noise to drown out signal.
The alternative mechanism we propose, \yrwr, reduces incentives for this type of strategic behavior.

Future work could build on the framework we introduce to continue the study of strategic behavior in the face of noisy evaluation.
Our model could be extended to study online evaluation, where votes and model submissions arrive sequentially, as they do in real-world live rankings. 
Online extensions of our problem may present new risks for status quo mechanisms, since producer strategies could incorporate time-varying and data-dependent model release decisions. It would also present challenges for the naive extension of the YRWR mechanism to the online setting, since allowing model producers to delete and add models to their rankings could still allow sequential clone attacks. For example, a model provider could sequentially submit clones to the mechanism, observe the performance of each clone, and then inserting a new clone above all preceding clones, until one of the clones achieved a sufficiently high rank.

Our model could also be extended in other ways to more closely match current practice on leading platforms like Arena. 
For example, one could consider non-uniform allocations of votes over pairs of models (perhaps using this to improve statistical efficiency or encourage desirable model producer behavior).
Also, one could consider analyzing the impacts of style control, where the platform ``controls for'' various aspects of model behavior that influence preference data but are misaligned with model quality, like the length of responses.
Our model could also be extended to analyze the kind of utility functions of producers that lead to clone non-robustness. Key to our results is the idea that producers are optimizing for the maximum performance of their models. We expect that similar results should hold for general classes of convex or super-linearly increasing (in model rank) producer utility functions, where maximum performance matters more than average performance.

%

More generally, one could consider more flexible utility functions over ranking outcomes than the ones studied here in order to better reflect producer incentives; for example, one could assign (potentially declining) scores to the top $d$ rank positions, reflecting that there is benefit to being \textit{among} the top models.
Methods to detect clones with only black-box access could also feature into mechanisms that could dissuade clone submission in practice.
And finally, platforms need not explicitly rank models; grouping them into equivalence classes or producing some entirely different kind of assessment would change the incentives for strategic behavior.
As these arenas increasingly guide the development and adoption of AI models, developing mechanisms that are resilient to strategic manipulation is essential to ensuring that rankings remain a trustworthy and accurate signal for the entire community.




\section*{Acknowledgments}

The authors thank Nathan Jo, Juanky Perdomo, Jann Spiess, Tijana Zrnic and the participants of the Stanford Causal Inference seminar for helpful discussions and feedback on this work.

\bibliography{main,manual}

@misc{min_improving_2025,
	title = {Improving {Your} {Model} {Ranking} on {Chatbot} {Arena} by {Vote} {Rigging}},
	url = {http://arxiv.org/abs/2501.17858},
	doi = {10.48550/arXiv.2501.17858},
	urldate = {2026-03-26},
	publisher = {arXiv},
	author = {Min, Rui and Pang, Tianyu and Du, Chao and Liu, Qian and Cheng, Minhao and Lin, Min},
	month = aug,
	year = {2025},
	note = {arXiv:2501.17858 [cs]},
}

@misc{huang_exploring_2025,
	title = {Exploring and {Mitigating} {Adversarial} {Manipulation} of {Voting}-{Based} {Leaderboards}},
	url = {http://arxiv.org/abs/2501.07493},
	doi = {10.48550/arXiv.2501.07493},
	urldate = {2026-03-26},
	publisher = {arXiv},
	author = {Huang, Yangsibo and Nasr, Milad and Angelopoulos, Anastasios and Carlini, Nicholas and Chiang, Wei-Lin and Choquette-Choo, Christopher A. and Ippolito, Daphne and Jagielski, Matthew and Lee, Katherine and Liu, Ken Ziyu and Stoica, Ion and Tramer, Florian and Zhang, Chiyuan},
	month = jan,
	year = {2025},
	note = {arXiv:2501.07493 [cs]},
}

@misc{huang_dropping_2026,
	title = {Dropping {Just} a {Handful} of {Preferences} {Can} {Change} {Top} {Large} {Language} {Model} {Rankings}},
	url = {http://arxiv.org/abs/2508.11847},
	doi = {10.48550/arXiv.2508.11847},
	urldate = {2026-03-26},
	publisher = {arXiv},
	author = {Huang, Jenny Y. and Shen, Yunyi and Wei, Dennis and Broderick, Tamara},
	month = mar,
	year = {2026},
	note = {arXiv:2508.11847 [stat]},
}

@article{donoho_data_2024,
	title = {Data {Science} at the {Singularity}},
	volume = {6},
	url = {https://hdsr.mitpress.mit.edu/pub/g9mau4m0},
	doi = {10.1162/99608f92.b91339ef},
	language = {en},
	number = {1},
	urldate = {2026-03-26},
	journal = {Harvard Data Science Review},
	author = {Donoho, David},
	month = jan,
	year = {2024},
}

@article{donoho_50_2017,
	title = {50 {Years} of {Data} {Science}},
	volume = {26},
	issn = {1061-8600},
	url = {https://doi.org/10.1080/10618600.2017.1384734},
	doi = {10.1080/10618600.2017.1384734},
	number = {4},
	urldate = {2026-03-26},
	journal = {Journal of Computational and Graphical Statistics},
	publisher = {Taylor \& Francis},
	author = {Donoho, David},
	month = oct,
	year = {2017},
	note = {\_eprint: https://doi.org/10.1080/10618600.2017.1384734},
	pages = {745--766},
}

@misc{chen_leaderboard_2026,
	title = {Leaderboard {Incentives}: {Model} {Rankings} under {Strategic} {Post}-{Training}},
	shorttitle = {Leaderboard {Incentives}},
	url = {http://arxiv.org/abs/2603.08371},
	doi = {10.48550/arXiv.2603.08371},
	urldate = {2026-03-26},
	publisher = {arXiv},
	author = {Chen, Yatong and Zhang, Guanhua and Hardt, Moritz},
	month = mar,
	year = {2026},
	note = {arXiv:2603.08371 [cs]},
}

@incollection{nazarov_maximal_2003,
	address = {Berlin, Heidelberg},
	title = {On the {Maximal} {Perimeter} of a {Convex} {Set} in \${\textbackslash}mathbb\{{R}\}{\textasciicircum}n\$with {Respect} to a {Gaussian} {Measure}},
	isbn = {978-3-540-36428-3},
	url = {https://doi.org/10.1007/978-3-540-36428-3_15},
	doi = {10.1007/978-3-540-36428-3_15},
	language = {en},
	urldate = {2026-01-29},
	booktitle = {Geometric {Aspects} of {Functional} {Analysis}: {Israel} {Seminar} 2001-2002},
	publisher = {Springer},
	author = {Nazarov, Fedor},
	editor = {Milman, Vitali D. and Schechtman, Gideon},
	year = {2003},
	pages = {169--187},
}

@article{bentkus_lyapunov-type_2005,
	title = {A {Lyapunov}-type {Bound} in {Rd}},
	volume = {49},
	url = {https://doi.org/10.1137/S0040585X97981123},
	doi = {10.1137/S0040585X97981123},
	number = {2},
	journal = {Theory of Probability \& Its Applications},
	author = {Bentkus, V.},
	year = {2005},
	note = {\_eprint: https://doi.org/10.1137/S0040585X97981123},
	pages = {311--323},
}

@misc{su_you_2022,
	title = {You {Are} the {Best} {Reviewer} of {Your} {Own} {Papers}: {An} {Owner}-{Assisted} {Scoring} {Mechanism}},
	shorttitle = {You {Are} the {Best} {Reviewer} of {Your} {Own} {Papers}},
	url = {http://arxiv.org/abs/2110.14802},
	doi = {10.48550/arXiv.2110.14802},
	urldate = {2026-01-02},
	publisher = {arXiv},
	author = {Su, Weijie J.},
	month = jun,
	year = {2022},
	note = {arXiv:2110.14802 [cs]},
}

@misc{su_icml_2025,
	title = {The {ICML} 2023 {Ranking} {Experiment}: {Examining} {Author} {Self}-{Assessment} in {ML}/{AI} {Peer} {Review}},
	shorttitle = {The {ICML} 2023 {Ranking} {Experiment}},
	url = {http://arxiv.org/abs/2408.13430},
	doi = {10.48550/arXiv.2408.13430},
	urldate = {2026-01-02},
	publisher = {arXiv},
	author = {Su, Buxin and Zhang, Jiayao and Collina, Natalie and Yan, Yuling and Li, Didong and Cho, Kyunghyun and Fan, Jianqing and Roth, Aaron and Su, Weijie},
	month = sep,
	year = {2025},
	note = {arXiv:2408.13430 [stat]},
}

@misc{bates_incentive-theoretic_2024,
	title = {Incentive-{Theoretic} {Bayesian} {Inference} for {Collaborative} {Science}},
	url = {http://arxiv.org/abs/2307.03748},
	doi = {10.48550/arXiv.2307.03748},
	urldate = {2025-12-31},
	publisher = {arXiv},
	author = {Bates, Stephen and Jordan, Michael I. and Sklar, Michael and Soloff, Jake A.},
	month = feb,
	year = {2024},
	note = {arXiv:2307.03748 [stat]},
}

@misc{shi_instance-adaptive_2025,
	title = {Instance-{Adaptive} {Hypothesis} {Tests} with {Heterogeneous} {Agents}},
	url = {http://arxiv.org/abs/2510.21178},
	doi = {10.48550/arXiv.2510.21178},
	urldate = {2025-12-31},
	publisher = {arXiv},
	author = {Shi, Flora C. and Wainwright, Martin J. and Bates, Stephen},
	month = oct,
	year = {2025},
	note = {arXiv:2510.21178 [cs]},
}

@article{spiess_optimal_2025,
	title = {Optimal {Estimation} {When} {Researcher} and {Social} {Preferences} {Are} {Misaligned}},
	volume = {93},
	issn = {0012-9682},
	url = {https://www.econometricsociety.org/doi/10.3982/ECTA18640},
	doi = {10.3982/ECTA18640},
	language = {en},
	number = {5},
	urldate = {2025-12-31},
	journal = {Econometrica},
	author = {Spiess, Jann},
	year = {2025},
	pages = {1779--1810},
}

@misc{golz_distortion_2025,
	title = {Distortion of {AI} {Alignment}: {Does} {Preference} {Optimization} {Optimize} for {Preferences}?},
	shorttitle = {Distortion of {AI} {Alignment}},
	url = {http://arxiv.org/abs/2505.23749},
	doi = {10.48550/arXiv.2505.23749},
	urldate = {2025-12-31},
	publisher = {arXiv},
	author = {Gölz, Paul and Haghtalab, Nika and Yang, Kunhe},
	month = may,
	year = {2025},
	note = {arXiv:2505.23749 [cs]},
}

@article{bradley_rank_1952,
	title = {Rank {Analysis} of {Incomplete} {Block} {Designs}: {I}. {The} {Method} of {Paired} {Comparisons}},
	volume = {39},
	issn = {0006-3444},
	shorttitle = {Rank {Analysis} of {Incomplete} {Block} {Designs}},
	url = {https://www.jstor.org/stable/2334029},
	doi = {10.2307/2334029},
	number = {3/4},
	urldate = {2025-12-30},
	journal = {Biometrika},
	publisher = {[Oxford University Press, Biometrika Trust]},
	author = {Bradley, Ralph Allan and Terry, Milton E.},
	year = {1952},
	pages = {324--345},
}

@misc{siththaranjan_distributional_2024,
	title = {Distributional {Preference} {Learning}: {Understanding} and {Accounting} for {Hidden} {Context} in {RLHF}},
	shorttitle = {Distributional {Preference} {Learning}},
	url = {http://arxiv.org/abs/2312.08358},
	doi = {10.48550/arXiv.2312.08358},
	urldate = {2025-10-12},
	publisher = {arXiv},
	author = {Siththaranjan, Anand and Laidlaw, Cassidy and Hadfield-Menell, Dylan},
	month = apr,
	year = {2024},
	note = {arXiv:2312.08358 [cs]},
}

@misc{procaccia_clone-robust_2025,
	title = {Clone-{Robust} {AI} {Alignment}},
	url = {http://arxiv.org/abs/2501.09254},
	doi = {10.48550/arXiv.2501.09254},
	urldate = {2025-09-29},
	publisher = {arXiv},
	author = {Procaccia, Ariel D. and Schiffer, Benjamin and Zhang, Shirley},
	month = jan,
	year = {2025},
	note = {arXiv:2501.09254 [cs]},
}

@article{simons_asymptotics_1999,
	title = {Asymptotics {When} the {Number} of {Parameters} {Tends} to {Infinity} in the {Bradley}-{Terry} {Model} for {Paired} {Comparisons}},
	volume = {27},
	issn = {0090-5364},
	url = {https://www.jstor.org/stable/120149},
	number = {3},
	urldate = {2025-08-07},
	journal = {The Annals of Statistics},
	publisher = {Institute of Mathematical Statistics},
	author = {Simons, Gordon and Yao, Yi-Ching},
	year = {1999},
	pages = {1041--1060},
}

@article{jiang_genai_2024,
	title = {{GenAI} {Arena}: {An} {Open} {Evaluation} {Platform} for {Generative} {Models}},
	volume = {37},
	shorttitle = {{GenAI} {Arena}},
	url = {https://proceedings.neurips.cc/paper_files/paper/2024/hash/92249f9233286e437f808fa535d88b26-Abstract-Datasets_and_Benchmarks_Track.html},
	language = {en},
	urldate = {2025-07-23},
	journal = {Advances in Neural Information Processing Systems},
	author = {Jiang, Dongfu and Ku, Max and Li, Tianle and Ni, Yuansheng and Sun, Shizhuo and Fan, Rongqi and Chen, Wenhu},
	month = dec,
	year = {2024},
	pages = {79889--79908},
}

@misc{chi_copilot_2025,
	title = {Copilot {Arena}: {A} {Platform} for {Code} {LLM} {Evaluation} in the {Wild}},
	shorttitle = {Copilot {Arena}},
	url = {http://arxiv.org/abs/2502.09328},
	doi = {10.48550/arXiv.2502.09328},
	urldate = {2025-07-23},
	publisher = {arXiv},
	author = {Chi, Wayne and Chen, Valerie and Angelopoulos, Anastasios Nikolas and Chiang, Wei-Lin and Mittal, Aditya and Jain, Naman and Zhang, Tianjun and Stoica, Ion and Donahue, Chris and Talwalkar, Ameet},
	month = feb,
	year = {2025},
	note = {arXiv:2502.09328 [cs]},
}

@misc{miroyan_search_2025,
	title = {Search {Arena}: {Analyzing} {Search}-{Augmented} {LLMs}},
	shorttitle = {Search {Arena}},
	url = {http://arxiv.org/abs/2506.05334},
	doi = {10.48550/arXiv.2506.05334},
	urldate = {2025-07-23},
	publisher = {arXiv},
	author = {Miroyan, Mihran and Wu, Tsung-Han and King, Logan and Li, Tianle and Pan, Jiayi and Hu, Xinyan and Chiang, Wei-Lin and Angelopoulos, Anastasios N. and Darrell, Trevor and Norouzi, Narges and Gonzalez, Joseph E.},
	month = jun,
	year = {2025},
	note = {arXiv:2506.05334 [cs]},
}

@misc{chiang_chatbot_2024,
	title = {Chatbot {Arena}: {An} {Open} {Platform} for {Evaluating} {LLMs} by {Human} {Preference}},
	shorttitle = {Chatbot {Arena}},
	url = {http://arxiv.org/abs/2403.04132},
	doi = {10.48550/arXiv.2403.04132},
	urldate = {2025-07-23},
	publisher = {arXiv},
	author = {Chiang, Wei-Lin and Zheng, Lianmin and Sheng, Ying and Angelopoulos, Anastasios Nikolas and Li, Tianle and Li, Dacheng and Zhang, Hao and Zhu, Banghua and Jordan, Michael and Gonzalez, Joseph E. and Stoica, Ion},
	month = mar,
	year = {2024},
	note = {arXiv:2403.04132 [cs]},
}

@misc{singh_leaderboard_2025,
	title = {The {Leaderboard} {Illusion}},
	url = {http://arxiv.org/abs/2504.20879},
	doi = {10.48550/arXiv.2504.20879},
	urldate = {2025-06-10},
	publisher = {arXiv},
	author = {Singh, Shivalika and Nan, Yiyang and Wang, Alex and D'Souza, Daniel and Kapoor, Sayash and Üstün, Ahmet and Koyejo, Sanmi and Deng, Yuntian and Longpre, Shayne and Smith, Noah A. and Ermis, Beyza and Fadaee, Marzieh and Hooker, Sara},
	month = may,
	year = {2025},
	note = {arXiv:2504.20879 [cs]},
}

@online{morrison2024claude,
  author  = {Morrison, Ryan},
  title   = {Claude takes the top spot in AI chatbot ranking — finally knocking GPT-4 down to second place},
  year    = {2024},
  month   = mar,
  day     = {27},
  url     = {https://www.tomsguide.com/ai/claude-takes-the-top-spot-in-ai-chatbot-ranking-finally-knocking-gpt-4-down-to-second-place},
  urldate = {2026-01-23},
  note    = {Tom's Guide}
}

@online{glover2025deepseek,
  author  = {Glover, George},
  title   = {DeepSeek Spooks the Stock Market. Why China's AI Model Is a Big Concern for U.S. Tech.},
  year    = {2025},
  month   = jan,
  day     = {27},
  url     = {https://www.barrons.com/articles/deepseek-stock-market-china-ai-4149ddc0},
  urldate = {2026-01-23},
  note    = {Barron's (updated Jan 27, 2025)}
}

@online{alibabacloud2025qwen25max,
  author  = {{Alibaba Cloud Community}},
  title   = {Alibaba Cloud's Qwen2.5-Max Secures Top Rankings in Chatbot Arena},
  year    = {2025},
  month   = feb,
  day     = {6},
  url     = {https://www.alibabacloud.com/blog/alibaba-clouds-qwen2-5-max-secures-top-rankings-in-chatbot-arena_601964},
  urldate = {2026-01-23},
  note    = {Alibaba Cloud Community}
}

@inproceedings{zhaosciarena,
  title={SciArena: An Open Evaluation Platform for Non-Verifiable Scientific Literature-Grounded Tasks},
  author={Zhao, Yilun and Zhang, Kaiyan and Hu, Tiansheng and Wu, Sihong and Le Bras, Ronan and Liu, Yixin and Tang, Xiangru and Chang, Joseph Chee and Dodge, Jesse and Bragg, Jonathan and others},
  year={2025},
  booktitle={The Thirty-ninth Annual Conference on Neural Information Processing Systems Datasets and Benchmarks Track}
}

@article{Kendall1938,
  author  = {Kendall, Maurice G.},
  title   = {A New Measure of Rank Correlation},
  journal = {Biometrika},
  year    = {1938},
  volume  = {30},
  number  = {1--2},
  pages   = {81--93},
  doi     = {10.1093/biomet/30.1-2.81}
}
\bibliographystyle{apalike}

\newpage
\appendix
\onecolumn

\section{Additional empirical results} \label{sec:addlempirics}

In this section, we provide additional empirical results corresponding to our semisynthetic experiments on Arena data.
In \Cref{fig:rankdiffbyquality}, we plot the rank improvement attained on average by adding an additional clone.
The horizontal axis is the ground truth score in our simulations (i.e., the score assigned by Arena), and the vertical axis is the number of positions the average (across simulations) of the maximum of the ranks of the clones minus the average rank of the model without a clone.
    The mean rank difference varies widely across models and between leaderboards.
    There are some models and leaderboards, like Expert, Multiturn and Coding, where cloning a single model can produce a ranking increase of around 8 positions on average.
    The fact that there may be large benefits to clones on these leaderboards in particular may be related to two factors.
    First, there are many fewer votes per pair of models on these more specialized leaderboards: each of expert, multiturn and coding have fewer than 30 votes per pair of models on average.
    Second, the models that benefit from clones in these leaderboards have scores that are clustered together with several other models, which means that small increases to fitted scores yield large increases in position on the leaderboard.
    By contrast, several leaderboards, like Text, exhibit much smaller benefits to clones.
    This is because there are relatively more voters per pair of models. 

    We also note that the benefits of clones mostly disappear for the very best models (furthest to the right on each plot) and the very worst models (furthest to the left on each plot).
    This may be related to the fact that model qualities are less concentrated at the tails.
    Also, the very best models can only benefit from clones insofar as they are not ranked first in the simultations without clones, which creates a ceiling for the benefits that can be attained via clones.
    For example, a model that is never ranked below 3rd place without clones can only improve by up to two positions.

We plot the analogous results for the YRWR mechanism in \Cref{fig:rankdiffbyqualityyrwr}.
    In each of the panels, the benefits of clones under YRWR average around zero, although there are some models that can see rank improvement of up to around two positions due to the vote reweighting effect.
    The leaderboards for which some models still see benefits of clones are those for which there are the most incentives for clones in the status quo mechanism: this is a result of the fact that the vote reweighting effect is larger when the number of votes per model is small.

\begin{figure}[t]
  \centering
  \includegraphics[width=.7\columnwidth]{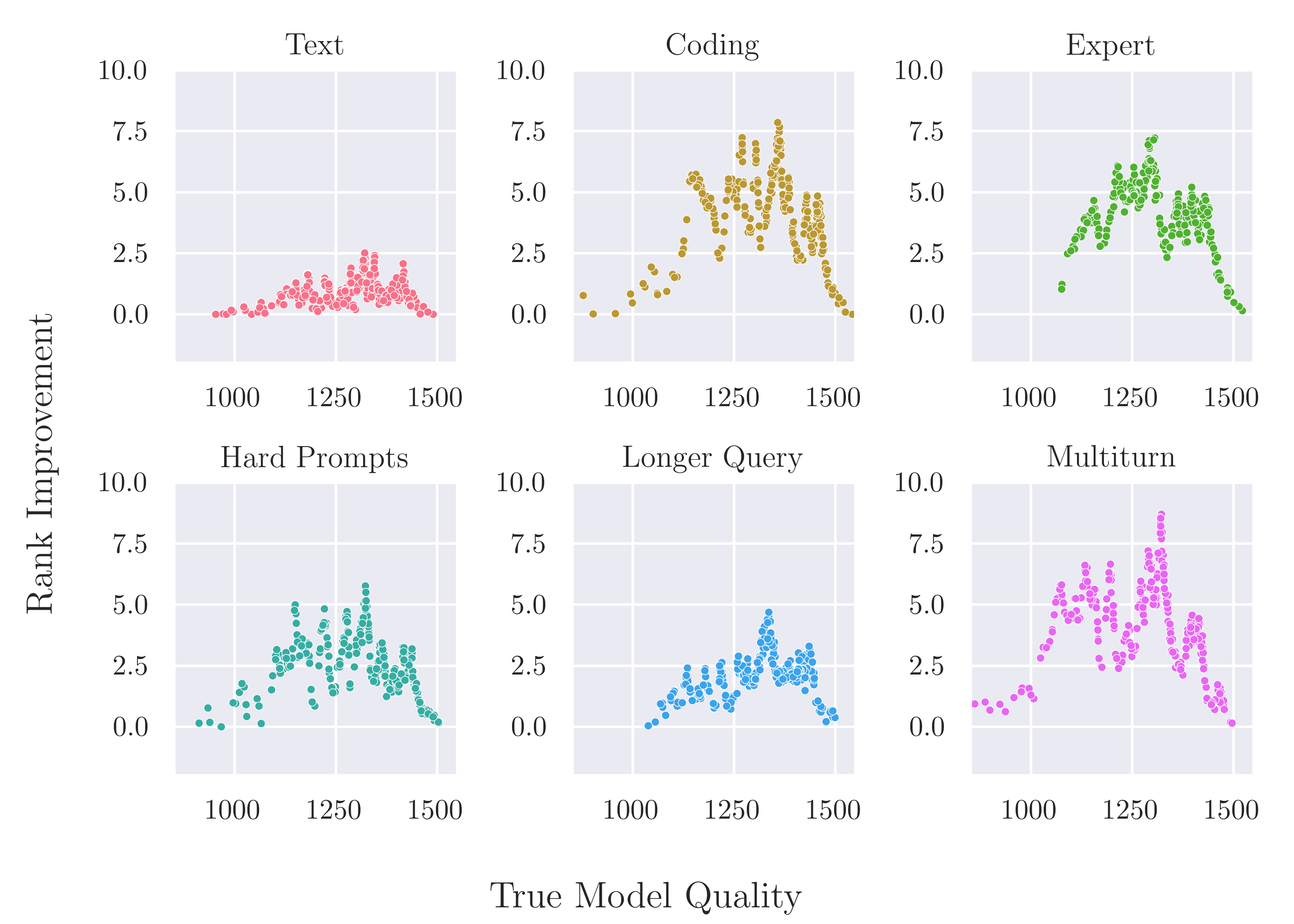}
  \caption{Rank difference between submitting one clone and no clones under the \sq\ mechanism.}
  \label{fig:rankdiffbyquality}
\end{figure}
\begin{figure}[t]
  \centering
  \includegraphics[width=.7\columnwidth]{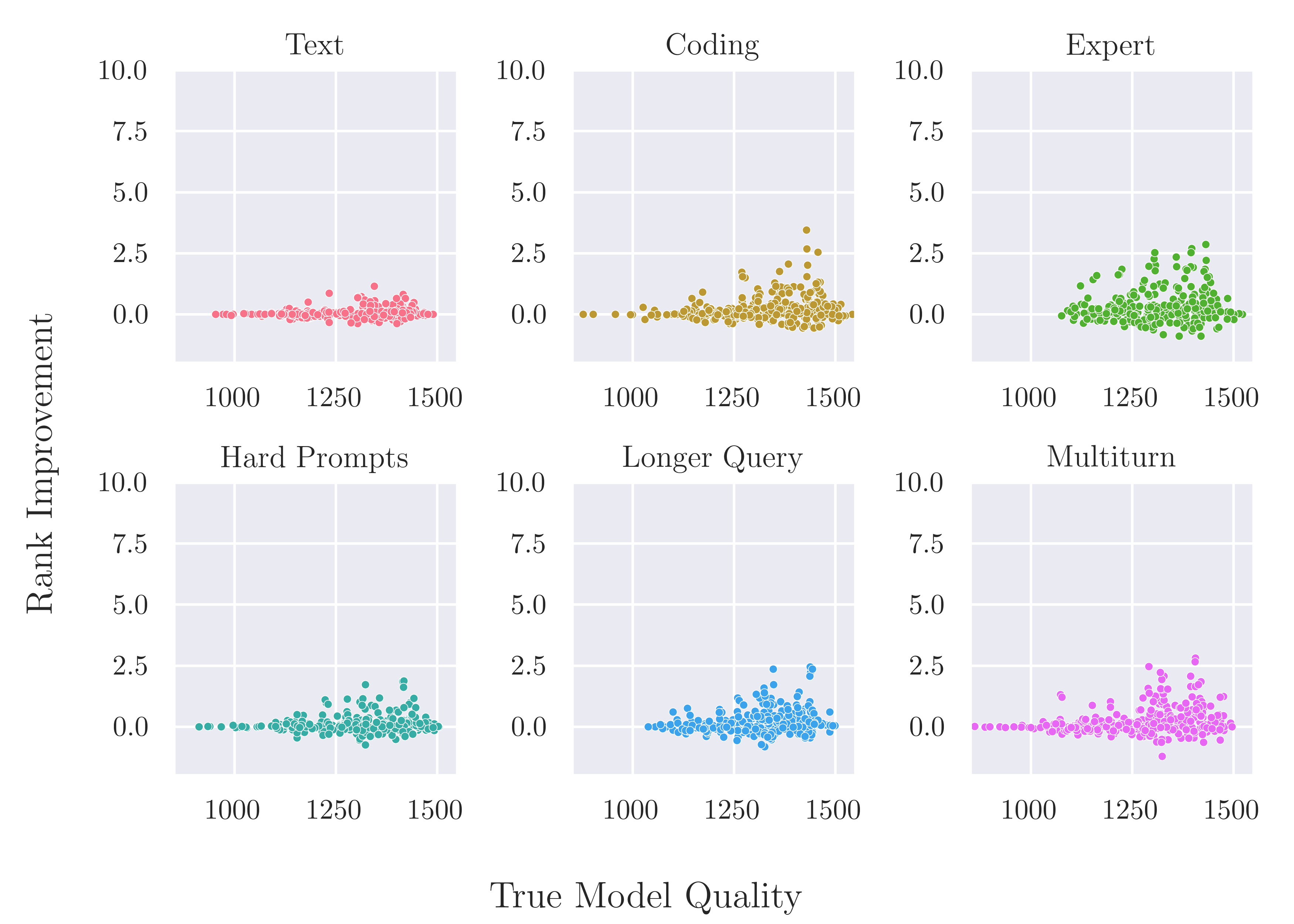}
  \caption{Rank difference between submitting one clone and no clones under the \yrwr\ mechanism.}
  \label{fig:rankdiffbyqualityyrwr}
\end{figure}

\section{Preliminary Lemmata}

The following theorem is rephrased from \citet{siththaranjan_distributional_2024} in our notation. 
We provide a proof for completeness.

\begin{lemma}[\cite{siththaranjan_distributional_2024}, Theorem  3.1] \label{lem:btborda}
    The rankings produced by BT-MLE and Borda counts are equivalent under equal matchup counts. Formally, let $s$ be the number of votes between each pair of candidates $i,j$. Let $\sigma^{\mathrm{BT-MLE}}$ be the ranking induced by fitting a Bradley-Terry model via MLE on $v$ as in \Cref{alg:statusquo}. Let $\sigma^{\mathrm{BC}}$ be the ranking induced by the Borda count on $v$. I.e., for two candidates $j, j'$
    \begin{align*}
        j \succ_{\mathrm{BC}} j' \iff \sum_{\ell \in \cM, \ell \neq j} v_{j \ell} < \sum_{\ell \in \cM, \ell \neq j'} v_{j' \ell}.
    \end{align*}
    Then $\sigma^{\mathrm{BC}}(u) = \sigma^{\mathrm{BT-MLE}}(u)$ for all $u = 1, 2, \dots$.
\end{lemma}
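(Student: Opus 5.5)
The plan is to compare the Borda scores with the first-order (score) equations of the Bradley--Terry log-likelihood. First, a note on notation. Each candidate plays exactly $s(m-1)$ comparisons, so ranking by fewest losses $\sum_{\ell\neq j} v_{\ell\succ j}$ is the same as ranking by most wins $W_j := \sum_{\ell\neq j} v_{j\succ\ell}$. I read the displayed Borda condition in this sense, with $v_{j\ell}$ counting losses of $j$. It therefore suffices to show that, for every pair $j,j'$,
\[
\widehat R_j > \widehat R_{j'} \iff W_j > W_{j'}, \qquad \widehat R_j = \widehat R_{j'} \iff W_j = W_{j'}.
\]

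\textbf{Step 1 (stationarity).} Write $\phi(x) = e^x/(1+e^x)$. Because $v_{j\succ\ell}+v_{\ell\succ j}=s$ for every pair, the log-likelihood in \Cref{alg:statusquo} is concave. Its partial derivative in $R_j$ is
\[
W_j - s\sum_{\ell\neq j}\phi(R_j-R_\ell).
\]
At the MLE $\widehat R$ this derivative vanishes. Hence $W_j = F_j(\widehat R) := s\sum_{\ell\neq j}\phi(\widehat R_j-\widehat R_\ell)$ for every $j$.

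\textbf{Step 2 (monotone comparison).} For a fixed pair $j,j'$, decompose
\[
\frac{F_j-F_{j'}}{s} = \sum_{\ell\neq j,j'}\bigl[\phi(\widehat R_j-\widehat R_\ell)-\phi(\widehat R_{j'}-\widehat R_\ell)\bigr] + \bigl[\phi(\widehat R_j-\widehat R_{j'})-\phi(\widehat R_{j'}-\widehat R_j)\bigr].
\]
Since $\phi$ is strictly increasing, every bracket has the same sign as $\widehat R_j-\widehat R_{j'}$, strictly so. The sum is therefore positive, zero, or negative exactly when $\widehat R_j-\widehat R_{j'}$ is. Combined with Step 1, this gives both equivalences above. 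Consequently the two rankings coincide position by position, with tie classes matching exactly. When ties are broken identically (ties are arbitrary in both), $\sigma^{\mathrm{BC}}(u)=\sigma^{\mathrm{BT-MLE}}(u)$ for all $u$.

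\textbf{Main obstacle: existence of the MLE.} The delicate point is that a finite maximizer must exist for Step 1 to apply. It can fail, for example, if some model wins every comparison. I would handle this by invoking the paper's standing assumption that $\widehat R$ obeys the bounded-range constraint. Under that assumption the MLE is attained, and the unconstrained score equations hold as long as the optimum is interior to that constraint.

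If the constraint binds, a cleaner route is to work with any sequence of near-maximizers, or the limit of the regularized MLE as the regularization vanishes. The monotonicity of $F_j-F_{j'}$ in Step 2 still orders such limits consistently with $W$. As a fallback, one can simply restrict the statement to vote profiles whose comparison graph is strongly connected. That is the classical Zermelo/Ford existence condition, and it holds with probability tending to one as $s\to\infty$.

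Uniqueness of the MLE, modulo the identifiability constraint $\sum_j\widehat R_j=0$, follows from strict concavity on that subspace, so the induced ranking is well defined.
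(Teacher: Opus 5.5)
Your proposal is correct and follows essentially the same route as the paper: you set the score equations to zero at the MLE, which identifies each normalized win count with $\sum_{\ell\neq j}(1+e^{\widehat R_\ell-\widehat R_j})^{-1}$, and you then split the difference for a pair $j,j'$ into the head-to-head term and the third-party terms, each of which has the sign of $\widehat R_j-\widehat R_{j'}$. Your additions are a tightening of that same argument, not a different one: reading the Borda display as counting losses, matching tie classes, and handling existence of a finite MLE, which the paper's proof assumes without comment.
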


As we will see in the proof, the fact that matchups are evenly distributed over pairs of models is important to the proof: if matchups are not evenly distributed, the result may not hold.

\begin{proof}
The claim is equivalent to showing that for any two models $j, j'$,
\begin{align*}
    \sum_{\ell \in \cM, \ell \neq j} v_{j \ell} < \sum_{\ell \in \cM, \ell \neq j'} v_{j' \ell} \iff \widehat R_j < \widehat R_{j'}
\end{align*}
Now, recall that
\begin{align*}
    \widehat R = \arg\max_{R\in\mathbb{R}^{m}} \sum_{j, j' \in \cM; j\neq j'} v_{j\succ j'}  \log\!\Big(\frac{\exp(R_j)}{\exp(R_j)+\exp(R_{j'})}\Big).
\end{align*}
By concavity of the objective, it holds
\begin{align*}
    \nabla_{R} \sum_{j, j' \in \cM; j\neq j'} v_{j\succ j'}  \log\!\Big(\frac{\exp(R_j)}{\exp(R_j)+\exp(R_{j'})}\Big) \bigg|_{\hat R} = 0.
\end{align*}
Also, we can rewrite each partial derivative as
\begin{align*}
    &\frac{\partial}{\partial R_j} \sum_{j, j' \in \cM; j\neq j'} v_{j\succ j'}  \log\!\Big(\frac{\exp(R_j)}{\exp(R_j)+\exp(R_{j'})}\Big) \\
    = &\sum_{j' \in \cM, j' \neq j} \frac{\partial}{\partial R_j}v_{j \succ j'} \log\!\Big(\frac{\exp(R_j)}{\exp(R_j)+\exp(R_{j'})}\Big) + (s - v_{j \succ j'}) \log\!\Big(\frac{\exp(R_{j'})}{\exp(R_{j'})+\exp(R_{j})}\Big) \\
    = &\sum_{j' \in \cM, j' \neq j}  v_{j \succ j'} - \frac{s}{1 + \exp(R_{j'} - R_j)}.
\end{align*}
Thus, we have
\begin{align*}
    \frac{1}{s} \sum_{j' \in \cM, j' \neq j}  v_{j \succ j'} = \sum_{j' \in \cM, j' \neq j} \frac{1}{1 + \exp(\widehat R_{j'} - \widehat R_j)}
\end{align*}
by applying the fact that the partial derivative is zero at $R = \widehat R$.
Now, the LHS is the (normalized) Borda count.
Thus, for any two models $j, j'$
\begin{align*}
    &\sum_{\ell \in \cM, \ell \neq j} v_{j \ell} < \sum_{\ell \in \cM, \ell \neq j'} v_{j' \ell} \\
    \iff &\sum_{\ell \in \cM, \ell \neq j} \frac{1}{1 + \exp(\widehat R_{\ell} - \widehat R_j)} < \sum_{\ell \in \cM, \ell \neq j'} \frac{1}{1 + \exp(\widehat R_{\ell} - \widehat R_{j'})} \\
    \iff &\frac{2}{1 + \exp{(R_{j'} - R_j)}} - 1 + \sum_{\ell \in \cM, \ell \neq j, \ell \neq j'} \frac{1}{1 + \exp(\widehat R_{\ell} - \widehat R_j)} - \frac{1}{1 + \exp(\widehat R_{\ell} - \widehat R_{j'})} < 0.
\end{align*}
Finally, note that 
\begin{align*}
    &\paren{\frac{2}{1 + \exp{(R_{j'} - R_j)}} - 1} + \sum_{\ell \in \cM, \ell \neq j, \ell \neq j'} \paren{\frac{1}{1 + \exp(\widehat R_{\ell} - \widehat R_j)} - \frac{1}{1 + \exp(\widehat R_{\ell} - \widehat R_{j'})}} < 0 \\
    \iff &R_{j} - R_{j'} < 0
\end{align*}
since each term inside the large parentheses is positive if $R_{j} > R_{j'}$ and negative if $R_{j} < R_{j'}$.

\end{proof}

\section{New Competitor Effect Analysis}
In this section, we provide our workhorse stability lemma, which bounds the new competitor effect.
Intuitively, it says that if the total number of votes is sufficiently large, the distributions of model scores before and after the introduction of an model cannot be too large.

Since this result may be of independent interest, we state the result for the general Bradley-Terry model and (re)introduce notation:
In this section, we will consider two Bradley-Terry estimation problems: 
\begin{enumerate}
    \item an MLE $\hat R^m$ fit on a set of $m \geq 2$ candidates, and 
    \item an MLE $\hat R^{m+1}$ fit on a set of $m{+}1$ candidates, where the first $m$ are the same as in (1).
\end{enumerate}
We'll index candidates $j=1,2,\dots,m+1$, and when fitting the MLE, we will enforce the identifiability constraint that $$\sum_{j \in [m]} \hat R_j^m = \sum_{j \in [m]} \hat R_j^{m+1} = 0,$$ i.e., the first $m$ entries of each vector must sum to zero. 
We'll call these identifiable subspaces $\1^{\perp}_m$, and it will be clear from context whether we are talking about the subspace in $\R^{m}$ or $\R^{m+1}$, depending on whether we are working with the first or second BT estimation problems.
Projecting onto $\1^{\perp}_m$ ensures that $\hat R^m - \hat R^{m+1}_{1:m} \to 0$ as $s \to \infty$ (whereas some other identifiability constraint might lead to convergence to a non-zero constant).

We will let $R\in\mathbb{R}^m$ denote the ground-truth qualities of the original $m$ models (where, without loss of generality, $\sum_{j\in [m]} R_j = 0$), and
we'll write $\hat R^{m+1}_{1:m}$ for the entries of $\hat R^{m+1}$ corresponding to the first $m$ candidates.
For a matrix $A$, we will similarly use index slice notation so that $A_{1:i,1:j}$ is the first $i$ rows and $j$ columns of $A$.
We'll make use of \Cref{cond2} in this section, which we restate here:

\regcond*

We'll write $S = s\binom{m}{2}$ to denote the total number of pairwise comparisons, where $s$ is the number of comparisons per pair.
Define $P_m$ to be the probability measure of $\hat R^{m}$ on $\1_m^\perp \subset \R^m$, and define
$P'_m$ to be the probability measure of $\hat R^{m+1}_{1:m}$ for $\1_m^\perp \subset \R^{m+1}$.

\begin{lemma}[New Competitor Effect]\label{lem:small_prob_diff}
Let $\cC$ be the set of convex events measurable with respect to $P_m$ and $P'_m$.
Then, under \Cref{cond2}, there exist constants $C>0$ and
$\{C_m\}_{m\ge 2}$ such that for all $m\ge 2$ and $s\ge 1$,
\[
\sup_{A\in\mathcal C}\; \abs{P_m(A)-P'_m(A)}
\;\le\;
\frac{C}{\sqrt{m}} + \frac{C_m}{\sqrt{s}} .
\]
\end{lemma}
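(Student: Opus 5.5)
We compare the two laws by coupling through a common Gaussian approximation. By \Cref{lem:btborda} and the first-order conditions derived in its proof, $\hat R^m$ is a smooth function $F_m$ of the normalized Borda vector $b^m = \frac1s\bigl(\sum_{\ell\neq j} v_{j\succ\ell}\bigr)_{j\in[m]}$. Likewise, $\hat R^{m+1}$ is a smooth function of $b^{m+1}$. Each coordinate of $b^{m+1}_{1:m}$ equals the corresponding coordinate of $b^m$ plus one extra independent term $\frac1s v_{j\succ m+1}$. So the plan is to compare three objects: $P_m$, $P'_m$, and the Gaussian law $N(R, \frac1s\Sigma_m)$ with $\Sigma_m = I_m^{+}$. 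Here $I_m$ is the Fisher information of the $m$-candidate problem restricted to $\1_m^\perp$, and $I_m^{+}$ its pseudo-inverse. We also compare to the Gaussian $N(R, \frac1s\Sigma'_m)$ for the $(m{+}1)$-problem, where $\Sigma'_m = \bigl[(I_{m+1})^{+}\bigr]_{1:m,1:m}$ after projecting to $\1_m^\perp$. The triangle inequality then gives
\[
\sup_{A\in\cC}|P_m(A)-P'_m(A)| \le d_{\cC}\bigl(P_m,N_m\bigr) + d_{\cC}\bigl(N_m,N'_m\bigr) + d_{\cC}\bigl(N'_m,P'_m\bigr).
\]

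\textbf{Step 1 (Gaussian approximation, the $C_m/\sqrt s$ terms).} Linearize the score equations around $R$:
\[
\hat R^m - R = I_m^{+}\,(b^m - \E b^m) + \rho_m,
\]
where the remainder satisfies $\|\rho_m\| = O_P(C_m/s)$ by a standard MLE Taylor argument. \Cref{cond2} keeps the Hessian eigenvalues bounded away from zero on $\1_m^\perp$, with $m$-dependent constants. The leading term is a sum of $\binom m2$ independent bounded vectors, one per pair, each scaled by $1/s$ and with $s$ votes. Bentkus's multivariate Berry--Esseen bound over convex sets then gives distance $C_m/\sqrt s$ to $N_m$. To absorb the remainder, use Nazarov's Gaussian-perimeter bound for convex sets: this shows that fattening a convex set by $\|\rho_m\|$ changes its Gaussian mass by $O(C_m\cdot \text{perimeter}/\sqrt s)$. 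The same argument applies verbatim to $\hat R^{m+1}_{1:m}$. This step is routine once the constants are tracked, and I would simply let $C_m$ absorb all $m$-dependence.

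\textbf{Step 2 (comparing the Gaussians, the $C/\sqrt m$ term).} Both limiting Gaussians have mean $R$. Under the constraint $\sum_{j\le m}\hat R_j=0$, the new model $m{+}1$ contributes no bias in the limit, so the comparison reduces to covariances. The information matrices are weighted graph Laplacians, with weights $w_{jj'} = p_{j\succ j'}p_{j'\succ j}$ bounded in $[c,1/4]$ by \Cref{cond2}. The $(m{+}1)$-problem has information $I_{m+1}$, and the covariance block for the first $m$ coordinates is the inverse of the Schur complement
\[
I_m + D - \frac{w w^\top}{\sum_j w_{j,m+1}},
\]
where $D = \mathrm{diag}(w_{j,m+1})$ and $w = (w_{j,m+1})_{j}$. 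This is a perturbation of $I_m$, whose eigenvalues on $\1_m^\perp$ are of order $m$. The perturbation has operator norm $O(1)$, so $\|\Sigma_m^{-1/2}\Sigma'_m\Sigma_m^{-1/2} - I\|_{\op} = O(1/m)$ and the Frobenius norm of the same quantity is $O(1/\sqrt m)$. Pinsker together with the Gaussian KL formula (or Devroye--Mehrabian--Reddad TV bounds) then gives total variation, and hence $d_\cC$, at most $C/\sqrt m$ with $C$ universal.

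\textbf{Main obstacle.} The delicate point is Step 2's uniformity in $m$. The Frobenius bound must use the full spectral gap of $I_m$ of order $m$, which requires a lower bound on the algebraic connectivity of a complete graph with weights bounded below. That lower bound is available because all $w_{jj'}\ge c$ by \Cref{cond2}. I also need to check that the rank-one Schur term and the diagonal $D$ do not interact with the null direction $\1$. I would handle this by working throughout in $\1_m^\perp$ coordinates and projecting $\hat R^{m+1}_{1:m}$ exactly as the identifiability constraint prescribes.
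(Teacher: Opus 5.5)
Your proposal follows the paper's proof essentially step for step: the same three-term triangle inequality through the two limiting Gaussians; Bentkus's bound, Nazarov's Gaussian-perimeter estimate and a Taylor remainder for the $C_m/\sqrt{s}$ terms; and, for the $C/\sqrt{m}$ term, the Schur complement $K = D - dd^\top/(\1^\top d)$ with $\|K\|_{\op}\le s/4$ set against the order-$sm$ spectral gap of $I_m$ on $\1_m^\perp$ (\Cref{lem:opnorm}), followed by the Gaussian KL formula and Pinsker. The one point to make precise is in Step 1. An $O_P(C_m/s)$ remainder does not by itself give a rate, so you need an explicit tail bound (e.g.\ the Chernoff bound on the score as in \Cref{lem:scorenorm}, fed through your second-order Taylor estimate), and the straightforward version of that argument gives $C_m\log s/\sqrt{s}$ rather than $C_m/\sqrt{s}$; this loss is harmless for the qualitative way the lemma is used later.
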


\begin{proof}
At a high level, our goal will be to upper bound the convex set distance between $P_m$ and $P_m'$ by the sum of three convex set distances: the distance between $P_m$ and its Gaussian approximation, the distance between $P_m'$ and its Gaussian approximation, and the distance between the two Gaussian approximations. We then prove the Gaussian approximation error bounds in \Cref{lem:bt_gauss_approx} and the distance between Guassians in \Cref{lem:bt_tv} which immediately yield the inequality in the lemma.

Formally, let $\bar P_m$ denote the law of the centered and scaled estimator
$\sqrt{S}(\hat R^m - R)$ on $\1^{\perp}_m$, and
let $\bar P'_m$ denote the law of $\sqrt{S}(\hat R^{m+1}_{1:m} - R)$ on the same
subspace. 
By $\sqrt{S}$-consistency and asymptotic normality of the MLE, there exist Normal laws $G_m, G_m'$ on $\1^\perp_m$ such that $\bar {P}_m \convindist G_m$ and $\bar {P}_m' \convindist G_m'$ in $S$.
$G_m, G_m'$ are given by $\cN(0, I_m^{-1}), N(0, (I_m^{'-1})_{1:m,1:m})$ where $I_m, I_m'$ are the respective Fisher information matrices projected onto $\1_m^\perp$.
%

Denote the convex-set distance as
\[
d_{\mathcal C}(P,P') \defeq \sup_{A\in\mathcal C} |P(A)-P'(A)|.
\]
Observe that, by the triangle inequality,
\[
d_{\mathcal C}(\bar P_m,\bar P'_m)
\le
d_{\mathcal C}(\bar P_m,G_m)
+
d_{\mathcal C}(G_m,G'_m)
+
d_{\mathcal C}(G'_m,\bar P'_m).
\]
Lemma~\ref{lem:bt_tv} gives
\[
d_{\mathcal C}(G_m,G'_m)\le \frac{C}{\sqrt{m}}
\]
for a universal constant $C>0$. Lemma~\ref{lem:bt_gauss_approx}
yields
\[
d_{\mathcal C}(\bar P_m,G_m)\le \frac{C_m}{\sqrt{s}}, \quad d_{\mathcal C}(\bar P'_m,G'_m)\le \frac{C'_m}{\sqrt{s}},
\]
for constants $C_m, C_m'$ depending only on $m$. Combining these bounds gives
\[
d_{\mathcal C}(\bar P_m,\bar P'_m)
\le
\frac{C}{\sqrt{m}} + \frac{C_m+C'_m}{\sqrt{s}}.
\]
Finally, since $\hat R^m \mapsto \sqrt{S}(\hat R^m-R)$ is an invertible affine
map on the identifiable subspace and $\mathcal C$ is the class of convex sets on
that subspace, the convex-set distance is invariant under applying the same
invertible affine map to both distributions. Therefore the same bound holds for
the unscaled laws $P_m$ and $P'_m$, which
proves the claim.
\end{proof}

\begin{lemma}[Gaussian approximation error for the BT--MLE]\label{lem:bt_gauss_approx}
%
Under \Cref{cond2}, there exists a universal constant $C > 0$ such that
\begin{align*}
    \max \; \curly{ d_{\mathcal C}(\bar P_m,\, G_m ), \, d_{\mathcal C}(\bar P_m',\, G_m' )} \le C\,\frac{m^{3/4}}{\sqrt{s}} + o\!\left(\frac{1}{\sqrt{s}}\right).
\end{align*}
\end{lemma}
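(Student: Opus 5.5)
The plan is to write $\sqrt{S}(\hat R^m - R)$ as a normalized sum of independent, bounded, mean-zero random vectors plus a small nonlinear remainder, and then combine a multivariate Berry--Esseen bound over convex sets with a Gaussian anti-concentration estimate. I treat $\bar P_m$ in detail. The argument for $\bar P'_m$ is identical, with $m+1$ candidates and a final projection onto the first $m$ coordinates. That projection is a linear map, and it sends convex sets to convex sets under preimage, so the bound transfers.

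\emph{Step 1 (linearization).} The score equation from the proof of \Cref{lem:btborda} reads $\nabla \ell(\hat R)=0$, where $\nabla_j \ell(R)=\sum_{j'\neq j}\bigl(v_{j\succ j'}-s\,p_{j\succ j'}(R)\bigr)$. Taylor-expanding around the true $R$ on $\1_m^\perp$ gives
\[
\sqrt{S}(\hat R-R)=\sqrt{S}\,H^{-1}\nabla\ell(R)+\rho ,
\]
where $H=s\,L$ and $L$ is the weighted graph Laplacian with weights $p_{j\succ j'}p_{j'\succ j}$. By \Cref{cond2}, each weight lies in $[c,1/4]$ for a universal constant $c>0$. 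Hence $L$ restricted to $\1_m^\perp$ has all eigenvalues of order $m$, and this matches $I_m$ up to the normalization by $S$. I would first establish $\|\hat R-R\|_\infty=O_P(\sqrt{\log m/(ms)})$ on an event of probability $1-o(1/\sqrt s)$, using standard BT $\ell_\infty$ concentration and the bounded-$\hat R$ assumption. On that event, the third derivatives of the log-likelihood are bounded, so $\|\rho\|$ is of order $\mathrm{poly}(m)/\sqrt{s}$. The constant then gets absorbed into the $o(1/\sqrt s)$ term for fixed $m$, or, more carefully, into the $m^{3/4}/\sqrt s$ term.

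\emph{Step 2 (Berry--Esseen for the linear term).} The linear term is $W=\sum_{\text{votes }t}\xi_t$. There are $S$ independent summands, and each has the form $\xi_t=\sqrt{S}\,H^{-1}(e_j-e_{j'})(\mathbf 1_t-p)$. Let $\Sigma$ be its covariance, which equals the Gaussian covariance in $G_m$. I apply Bentkus's bound \citep{bentkus_lyapunov-type_2005}:
\[
d_{\mathcal C}(W,G_m)\le C\,d^{1/4}\sum_t \E\|\Sigma^{-1/2}\xi_t\|^3 ,
\]
with $d=m-1$. Each whitened summand has norm of order $\sqrt{m/S}$ up to constants, because $\Sigma^{-1/2}\sqrt S H^{-1}$ has operator norm of order $1/\sqrt{s\cdot m}\cdot\sqrt{S}/\sqrt{S}$ after the scaling works out. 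Summing over the $S$ votes gives $S\cdot (m/S)^{3/2}\asymp m^{3/2}/\sqrt{S}\asymp \sqrt{m}/\sqrt{s}$. Multiplying by $d^{1/4}$ yields the claimed $m^{3/4}/\sqrt s$.

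\emph{Step 3 (absorbing the remainder).} For any convex $A$, I sandwich: $P(W+\rho\in A)\le P(W\in A^{\eta})+P(\|\rho\|>\eta)$, and symmetrically with the inner parallel set $A^{-\eta}$. The Gaussian measure of the boundary layer $A^{\eta}\setminus A^{-\eta}$ is controlled by Nazarov's bound \citep{nazarov_maximal_2003}: it is at most $C d^{1/4}\eta\,\|\Sigma^{-1/2}\|_{\op}$. Choosing $\eta$ of order $\mathrm{polylog}(s)/s$ times constants in $m$ makes both error pieces $o(1/\sqrt s)$.

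The main obstacle is Step 1, specifically getting a remainder bound that holds with high enough probability and is uniform over instances satisfying \Cref{cond2}. This needs control of $\|\hat R-R\|$ in a norm compatible with the Hessian's Lipschitz constant, as well as handling the low-probability event on which the MLE is far away or fails to exist. That event contributes at most its probability, so it suffices to show it is exponentially small in $s$. I would do this using Hoeffding's inequality on the Borda counts, exploiting the equivalence in \Cref{lem:btborda} and the $\1_m^\perp$ constraint.
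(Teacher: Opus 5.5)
Your skeleton is the paper's. You linearize the MLE and apply Bentkus's bound (\Cref{thm:bentkus}) to the sum of independent vote contributions. Your whitened summands have norm $\asymp (sm)^{-1/2}$, so $\beta\asymp\sqrt{m/s}$, and the $d^{1/4}$ factor reproduces the paper's $m^{3/4}/\sqrt{s}$. You then absorb the remainder with the outer/inner parallel-set sandwich and Nazarov's perimeter bound.

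The real difference is the remainder. The paper uses the mean-value form $\hat R^m-R=-H_m(\tilde R)^{-1}\ell_m(R)$, the triangle-inequality bound $\|r_m\|_2\le\sqrt{S}\,(\|H_m(\tilde R)^{-1}\|_{\op}+\|I_m^{-1}\|_{\op})\,\|\ell_m(R)\|_2$ together with \Cref{lem:opnorm,lem:scorenorm}, and then $t=O(m)$. You expand to second order around $R$ and control the cubic term through $\ell_\infty$-consistency of $\hat R$.

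Your route is the one that can be made rigorous. The paper's bound never uses $H_m(\tilde R)\approx I_m$. Since $\sqrt{S}\cdot O((sm)^{-1})=O(s^{-1/2})$, not $O(s^{-1})$, it only gives $\|r_m\|_2=O_P(m\sqrt{\log m})$, the same size as $\|W_m\|_2$. Its $o(1/\sqrt{s})$ term $m\exp(-Cst^2/m^2)$ rests on that $O(s^{-1})$ factor. Your form of Nazarov's bound, $Cd^{1/4}\eta\|\Sigma^{-1/2}\|_{\op}$ with $\Sigma=SI_m^{-1}$ (eigenvalues $\asymp m$), is also correctly scaled. The paper's Step~4 inserts a covariance of norm $O(1/(sm))$; with the right scaling its boundary bound is of order $t\,m^{-1/4}$, which is vacuous at $t=O(m)$.

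However, there is a genuine gap: your Step~3 contradicts your own Step~1. After the $\sqrt{S}$ scaling, the quadratic term of the MLE expansion is of exact order $s^{-1/2}$. On your good event,
$\|\rho\|_2\lesssim\sqrt{S}\,(sm)^{-1}\cdot s\,\bigl(m\|\hat R-R\|_\infty\|\hat R-R\|_2+\sqrt{m}\,\|\hat R-R\|_2^2\bigr)\lesssim\sqrt{m}\log(ms)/\sqrt{s}$.
Moreover, for fixed $m$ with the $R_j$ not all equal, $\sqrt{s}\,\rho$ converges in law to a quadratic form in a Gaussian vector that is almost surely nonzero. So with your choice $\eta\asymp\mathrm{polylog}(s)/s$ you get $P(\|\rho\|_2>\eta)\to 1$, and the sandwich says nothing. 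Your fallback of absorbing a $C_m/\sqrt{s}$ remainder into $o(1/\sqrt{s})$ at fixed $m$ also fails, since $C_m/\sqrt{s}$ is not $o(1/\sqrt{s})$.

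The repair is $\eta\asymp\sqrt{m}\log(ms)/\sqrt{s}$. The bad event then costs $O(1/s)$, and the boundary layer costs $\asymp\eta\,m^{-1/4}\asymp m^{1/4}\log(ms)/\sqrt{s}$. Your argument then proves $Cm^{3/4}/\sqrt{s}+Cm^{1/4}\log(ms)/\sqrt{s}$, i.e.\ the statement up to a $\log s$ factor. That is enough for how \Cref{lem:small_prob_diff} is used, since the bound still vanishes as $s\to\infty$ at fixed $m$, but not for the literal $o(1/\sqrt{s})$.

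Because $\|\rho\|_2$ has exponential-type tails at scale $\sqrt{m/s}$, balancing $P(\|\rho\|_2>\eta)$ against $\eta\,m^{-1/4}$ cannot beat order $\log s/\sqrt{s}$ at fixed $m$. Removing the logarithm would require treating the quadratic correction as a smooth deformation of $W$ (e.g.\ a Berry--Esseen bound for nonlinear statistics) rather than through a tail bound inside the sandwich.
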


\begin{lemma}[Gaussian stability under adding one model]\label{lem:bt_tv}
Under \Cref{cond2}, there exists a universal constant $C>0$ such that
\[
d_{\mathcal C}(G_m,G'_m)\le \frac{C}{\sqrt{m}}.
\]
\end{lemma}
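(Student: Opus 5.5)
We need to bound the convex-set distance between the two limiting Gaussians $G_m=\cN(0,I_m^{-1})$ and $G'_m=\cN(0,(I_m'^{-1})_{1:m,1:m})$ on $\1_m^\perp$, where both are normalized by $\sqrt{S}$. The plan is to compute both covariances explicitly and then compare two Gaussians by a Pinsker/KL bound. Total variation dominates the convex-set distance, so this suffices.

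\textbf{Step 1: explicit Fisher information.} Per comparison, the Fisher information for the BT model with $s$ votes per pair is the weighted graph Laplacian
\[
L=\sum_{j<j'} w_{jj'}(e_j-e_{j'})(e_j-e_{j'})^\top,\qquad w_{jj'}=p_{j\succ j'}p_{j'\succ j}.
\]
After normalization by $S=s\binom m2$, this gives $I_m=L_m/\binom m2$ on the complete graph $K_m$. Similarly, $I'_m=L_{m+1}/\binom m2$ on $K_{m+1}$, with the same normalization so that the scalings match. By \Cref{cond2}, every weight lies in $[c,1/4]$ for a universal $c>0$, so on $\1^\perp$ the eigenvalues of $L_m$ lie in $[cm,m/4]$.

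\textbf{Step 2: the marginal covariance via a Schur complement.} The covariance of $\hat R^{m+1}_{1:m}$ is the $1{:}m$ block of $L_{m+1}^{+}$, re-projected onto $\1_m^\perp$ (the constraint is on the first $m$ coordinates, so we take the generalized inverse compatible with that constraint). I would show this block's inverse equals the Schur complement obtained by eliminating node $m+1$:
\[
\tilde L=L_m+D_w-\frac{ww^\top}{\sum_j w_j},
\]
where $w_j=w_{j,m+1}$ and $D_w=\diag(w)$. This is Kron reduction, and it is again a Laplacian on the first $m$ nodes. Hence $\tilde L-L_m=D_w-ww^\top/\sum_j w_j$ is a PSD Laplacian perturbation with operator norm $O(1)$ and trace $O(m)$.

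\textbf{Step 3: TV between Gaussians.} Let $\Sigma=L_m^{-1}$ and $\Sigma'=\tilde L^{-1}$ on $\1^\perp$, with eigenvalues of order $1/m$. Then
\[
\mathrm{KL}(G'_m\|G_m)=\tfrac12\big(\mathrm{tr}(\Sigma^{-1}\Sigma')-m+1-\log\det(\Sigma^{-1}\Sigma')\big)\le \tfrac14\|\Sigma^{-1/2}\Sigma'\Sigma^{-1/2}-I\|_F^2\,(1+o(1)).
\]
Set $M=L_m^{-1/2}(\tilde L-L_m)L_m^{-1/2}$. This matrix has operator norm $O(1/m)$ and trace $O(m)\cdot O(1/m)=O(1)$, so $\|M\|_F^2\le\|M\|_{\op}\,\mathrm{tr}(M)=O(1/m)$. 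Pinsker then gives TV $\le C/\sqrt m$. The scale normalization cancels because it is common to both Gaussians.

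\textbf{Main obstacle.} The delicate step is Step 2. The marginal of the $(m+1)$-model MLE under the constraint $\sum_{j\le m}\hat R_j=0$ must be correctly identified as a Gaussian whose precision is the Kron-reduced Laplacian $\tilde L$. This requires careful handling of the pseudo-inverses and of the identifiability subspace: I would use a basis of $\1^\perp_m$ extended by $e_{m+1}$ and apply block inversion. Everything else is routine eigenvalue bookkeeping, which the uniform weight bounds from \Cref{cond2} make straightforward.
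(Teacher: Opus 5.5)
Your proposal is correct and follows essentially the same route as the paper: Pinsker's inequality, a Schur-complement (Kron-reduction) step writing the marginal precision as $I_m + K$ with $K = D - dd^\top/(\1^\top d) \succeq 0$ and $\|K\|_{\op} \le \|D\|_{\op}$, and a KL bound of order $\|A\|_F^2 = O(1/m)$ for the relative perturbation $A = I_m^{-1/2} K I_m^{-1/2}$. The only differences are minor: you bound $\|A\|_F^2$ by $\|A\|_{\op}\,\mathrm{tr}(A)$ where the paper uses $(m-1)\|A\|_{\op}^2$, and your parametrization by a basis of $\1_m^\perp$ extended by $e_{m+1}$ is more careful than the paper's block inversion of a ``projected'' $I_{m+1}$, which glosses over the identifiability issue you flag.
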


We now proceed with the proofs for the above two lemmas. We first (re)establish general notation and basic facts we will use throughout this section. 
Let the win probability between candidate $i$ and $j$ be \[p_{i \succ j} = \frac{\exp{R_i}}{\exp{R_j} + \exp{R_i}}\]
and $\hat p_{i \succ j}$ the same quantity substituting $\widehat R$ for $R$.
Let $v_{i \succ j} \sim \text{Bin}(s, p_{i \succ j})$ count the wins of model $i$ over model $j$. Let $v_{i \succ j}^{(k)} \sim \Ber(p_{i \succ j})$ denote the indicator outcome of the $k$-th comparison.

Let $L_m(R)$ be the log-likelihood for $m$ candidates: 
\begin{align*}
    L_m(R) &= \sum_{j,j' \in [m]; j\neq j'} v_{j\succ j'} 
\log\!\Big(\frac{1}{1+\exp(R_{j'} - R_j)}\Big)  \\
&= \sum_{j,j' \in [m]; j\neq j'} v_{j\succ j'} \log p_{j \succ j'}
\end{align*}
Let $\ell_m(R) = \nabla L_m(R)$ be the score function (first derivative with respect to $R$), i.e., for $j \in [m]$,
\begin{align}
    \ell_m(R)_j = &\sum_{j' \in [m]; j' \neq j}  v_{j \succ j'} - s p_{j \succ j'} \label{eq:firstderiv}
\end{align}
and let $H_m(R) = \nabla^2 L_m(R)$ be the Hessian (second derivative), i.e., for $j, j' \in [m]$,
\begin{align}
    \begin{aligned}
    H_m(R)_{j,j'} 
    &= \begin{cases}
        -s \sum_{\ell \in [m]; \ell \neq j} p_{j \succ \ell} ( 1 - p_{j \succ \ell}) & \text{if } j = j' \\
        s p_{j \succ j'} ( 1 - p_{j \succ j'}) & \text{otherwise}
    \end{cases} \\
    &= -s\sum_{1\le j<j'\le m}
p_{j\succ j'}\bigl(1-p_{j\succ j'}\bigr)\,(e_j-e_{j'})(e_j-e_{j'})^\top.
    \end{aligned} \label{eq:secondderiv}
\end{align}
where $e_j$ denotes the $j$th standard basis vector.
Let $S = s \binom{m}{2} = O(sm^2)$ be the total number of pairwise comparisons.
Let $I_m = -\mathbb{E}[H_m(R)] \big|_{\1_m^\perp}$ be the Fisher information matrix for the $m$-model Bradley-Terry, projected onto the rank-$(m-1)$ subspace. Let $I_{m+1}$ be the Fisher information for the $(m+1)$-model Bradley-Terry projected onto the rank-$m$ subspace.

\subsection{Proof of \Cref{lem:bt_gauss_approx}}

We will just show the bound holds for $\bar P_m, G_m$; the argument for $\bar P_m', G_m'$ is identical.
Our proof proceeds as follows:
\begin{enumerate}
    \item We'll first write the normalized deviation of $\hat R^m$ from $R$ using Taylor's theorem, so that it is (up to higher-order terms) equal to the sum of independent score contributions.
    \item We'll then use this linear approximation to break the convex set distance into three terms, which we can analyze separately. 
    \item[3-5.] Analyses of each of the terms. 
\end{enumerate}
\textbf{Step 1.} Applying Taylor's theorem to the score function $\ell(\cdot)$ around the MLE $\hat{R}^m$, we have
\[ \ell_m(\hat{R}^m) = \ell_m(R) + H_m(\tilde{R})(\hat{R}^m - R)\]
for $\tilde{R}$ on the line segment between $R$ and $\hat R^m$. 
Also, $0 = \ell_m(\hat{R}^m)$ by the fact that the MLE is a maximum.
Applying this fact and rearranging, we have
$$ \hat{R}^m - R = -H_m(\tilde{R})^{-1} \ell_m(R).$$
Scaling by $\sqrt{S}$ and adding/subtracting the Fisher information 
\begin{align*}
     \sqrt{S}(\hat{R}^m - R) = \underbrace{\sqrt{S}\cdot I_m^{-1}  \ell_m(R)}_{\text{Linear Approximation Term }W_m} + \underbrace{\sqrt{S}\left( -H_m(\tilde{R}) ^{-1} - I_m^{-1} \right) \ell_m(R) }_{\text{Remainder } r_m}
\end{align*}

\textbf{Step 2.} We now break the convex set distance into three terms. 
First, we prove an upper bound.  For $A \in \cC$, define 
\begin{align*}
    A^{t} = \{ x \; : \; \inf_{a \in A}\| x - a \|_2 \leq t \}
\end{align*}
to be the $t$-neighborhood of $A$.
Observe for all $t > 0$,
\begin{align*}
    P(\sqrt{S}(\hat{R}^m - R) \in A) &= P(W_m + r_m \in A) \tag{Definition of $W_m, r_m$} \\
    &\leq P(W_m \in A^t \cup \| r_m \|_2 \geq t) \tag{$W_m + r_m \in A \subseteq W_m \in A^t \cup \| r_m \|_2 \geq t$} \\
    &\leq P(W_m \in A^t) + P(\| r_m \|_2 \geq t). \tag{Union bound} 
\end{align*}
Subtracting $P(\tilde{Z} \in A^t)$ from both sides, we have
\begin{align}
    &P(\sqrt{S}(\hat{R}^m - R) \in A) - P(\tilde{Z} \in A^t) \nonumber \\
    &\quad \leq P(W_m \in A^t) - P(\tilde{Z} \in A^t) + P(\| r_m \|_2 \geq t) \nonumber \\
    \implies & P(\sqrt{S}(\hat{R}^m - R) \in A) - P(\tilde{Z} \in A) \nonumber \\
    &\quad\leq |P(W_m \in A^t) - P(\tilde{Z} \in A^t)| + P(\tilde{Z} \in A^t \setminus A) + P(\| r_m \|_2 \geq t) \label{eq:ub}
\end{align}
where the implication follows from the fact that $P(\tilde Z \in A^t) = P(\tilde Z \in A) + P(\tilde Z \in A^t \setminus A)$ and taking the absolute value.
Note that $A^t \in C$: the Minkowski sum of convex events is a convex event.

We can write the lower bound analogously.
Let us overload notation and write
\begin{align*}
    A^{-t} = \{ a \in A \; : \; \inf_{x \not \in A} \| x - a \|_2 \geq t\}.
\end{align*}
Observe for all $t > 0$ that 
\begin{align*}
    P(\sqrt{S}(\hat{R}^m - R) \in A) &= P(W_m + r_m \in A) \tag{Definition of $W_m, r_m$} \\
    &\geq P(W_m \in A^{-t}) - P(\| r_m \|_2 \geq t). \tag{Union bound and rearranging}
\end{align*}
Then
\begin{align}
    &P(\sqrt{S}(\hat{R}^m - R) \in A) - P(\tilde{Z} \in A^{-t}) \nonumber \\
    &\quad \geq P(W_m \in A^{-t}) - P(\tilde Z \in A^{-t}) - P(\| r_m \|_2 \geq t) \nonumber \\
    \implies &P(\sqrt{S}(\hat{R}^m - R) \in A) - P(\tilde{Z} \in A) \nonumber \\
    &\quad \geq -\abs{P(W_m \in A^{-t}) - P(\tilde Z \in A^{-t})} - P(\tilde{Z} \in A \setminus A^{-t}) - P(\| r_m \|_2 \geq t). \label{eq:lb}
\end{align}
Similarly, note that $A^{-t} \in \cC$.

In steps 3-5, we bound each of the terms in the right-hand side of \cref{eq:ub}. 
In particular, plugging in the RHS expressions in \cref{eq:linearterm,eq:boundaryterm,eq:remainderterm} yields, for $t > 0$
\begin{align*}
    P(\sqrt{S}(\hat{R}^m - R) \in A) - P(\tilde{Z} \in A) \leq C \frac{m^{3/4}}{\sqrt{s}} +C t s^{-1/2}m^{-1/4}+  m \exp \paren{\frac{- C s t^2}{m^2}}.
\end{align*}
Choosing $t = O(m)$ yields
\begin{align*}
    P(\sqrt{S}(\hat{R}^m - R) \in A) - P(\tilde{Z} \in A) \leq C \frac{m^{3/4}}{\sqrt{s}}.
\end{align*}
(where as usual the constant $C$ across inequalities may change). 
The corresponding terms in \cref{eq:lb} are bounded using the same argument, and yield the same bound.

\textbf{Step 3.} We will show for a generic convex event $A$, there exists a universal constant $C$ such that
\begin{align}
    |P(W_m \in A) - P(\tilde{Z} \in A)| \leq C \frac{m^{3/4}}{\sqrt{s}}. \label{eq:linearterm}
\end{align}
Plugging in $A^{t}$ or $A^{-t}$ yields our upper bound on the first terms in \cref{eq:ub,eq:lb}, respectively.

To do this, we will first prove an approximation bound on the score function $\ell_m(R)$ and then translate it into a bound on $\sqrt{S} I^{-1}_m \ell_m(R)$.

For the bound on $\ell_m(R)$, observe that the score function at the true $R$ is a sum of $S = s\binom{m}{2}$ independent score contributions
\[
\ell_m(R) = \sum_{i<j}\sum_{k=1}^s \psi_{ij}^{(k)} = \sum_{i<j}\sum_{k=1}^s (v_{i \succ j}^{(k)} - p_{i \succ j})(e_{i}-e_{j}),
\qquad \mathbb{E}[\psi_{ij}^{(k)}]=0,
\]
where $e_j$ denotes the $j$-th standard basis vector.
Since each term is a mean-zero independent random variable, we can apply the following theorem to bound the Gaussian approximation error on the linear term.
\begin{theorem}[Theorem 1.1 \citet{bentkus_lyapunov-type_2005}] \label{thm:bentkus}
    Suppose $X_1,\dots,X_n \in \R^d$ are independent and $\E X_i = 0$ for all $i$. Let $S = \sum_{i} X_i$ and define $\Sigma = \Var(S)$. Let $Z \sim \cN(0, \Sigma)$. There exists a universal constant $c$ such that
    \begin{align*}
        \sup_{A \in \cC} \abs{P(S \in A) - P(Z \in A)} \leq c d^{1/4} \beta
    \end{align*}
    where 
    \begin{align*}
        \beta = \sum_{i=1}^n \| \Sigma^{-1/2} X_i\|_2^3 .
    \end{align*}
\end{theorem}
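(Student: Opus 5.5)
Since this is the cited result of \citet{bentkus_lyapunov-type_2005}, which the paper invokes rather than proves, the following is only a sketch of how I would establish it. The proof would follow the classical smoothing-plus-induction strategy for multivariate Berry--Esseen bounds, with the dimensional factor $d^{1/4}$ supplied by a sharp bound on the Gaussian surface area of convex sets.

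\textbf{Reductions.} First I would reduce to $\Sigma = I_d$. The map $x \mapsto \Sigma^{-1/2}x$ sends convex sets to convex sets and turns $Z$ into a standard Gaussian. The quantity $\beta$ is defined precisely in these standardized coordinates, so nothing is lost. I may also assume $\beta$ is smaller than a small absolute constant, since otherwise the bound is trivial because probabilities differ by at most one.

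\textbf{Gaussian ingredient.} The key input is that the standard Gaussian measure of boundary layers of convex sets is controlled uniformly. Concretely, for every convex $A$ and every $\varepsilon > 0$,
\[
\gamma_d(A^{\varepsilon}\setminus A^{-\varepsilon}) \le C d^{1/4}\varepsilon,
\]
which follows from Ball's and \citet{nazarov_maximal_2003}'s bound of order $d^{1/4}$ on the Gaussian perimeter of convex sets. This is the only place the factor $d^{1/4}$ enters. Given this, for each $A$ I would take a smooth function $\varphi_{A,\varepsilon}$ equal to $1$ on $A^{-\varepsilon}$ and $0$ off $A$, and similarly an upper version supported in $A^{\varepsilon}$. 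These can be built by mollifying the distance function, with $\|\nabla^k \varphi\| \lesssim \varepsilon^{-k}$. The usual sandwich argument then reduces the claim to bounding
\[
\bigl|\E\varphi(S) - \E\varphi(Z)\bigr|
\]
at the cost of an additive $C d^{1/4}\varepsilon$.

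\textbf{Lindeberg swapping, and the main obstacle.} Next I would replace the $X_i$ one at a time by independent Gaussians $Y_i$ with matching covariances, and Taylor-expand $\varphi$ to third order around the partial sums. The first- and second-order terms cancel, leaving a total error of order $\sum_i \E\|X_i\|^3/\varepsilon^3$. Optimizing against $d^{1/4}\varepsilon$ gives only a bound like $\beta^{1/4}$, so this naive route falls short. The main obstacle is upgrading this to a bound linear in $\beta$.

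To do so I would follow Bentkus' induction on the number of summands. Each swap's third derivative is applied not to $\varphi$ itself but to $\varphi$ convolved with the Gaussian part of the remaining sum. Writing the difference as an integral of $\nabla^3(\varphi * \gamma_{\tau})$ against the law of the other partial sum, I would split on whether that partial sum is close to Gaussian, which the induction hypothesis guarantees on convex-set distance. On the Gaussian piece, derivatives of a Gaussian-smoothed indicator of a convex set have integrals bounded by $d^{1/4}$ times powers of $\tau^{-1}$, which again comes from the surface-area bound. The remaining piece contributes the inductive error times $\beta/\varepsilon^3$-type factors.

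Choosing $\varepsilon$ proportional to $d^{1/4}\beta$ times a large constant lets the recursion close, yielding
\[
\sup_{A\in\cC}|P(S\in A)-P(Z\in A)| \le c\,d^{1/4}\beta.
\]
The delicate part is verifying that the inductive constant reproduces itself: the contribution of the non-Gaussian piece must come with a factor strictly smaller than $1$ in front of $c$.
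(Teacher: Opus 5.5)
The paper does not prove this statement; it imports it from \citet{bentkus_lyapunov-type_2005}. As printed, the statement also drops the expectation in $\beta$ and tacitly needs $\Sigma$ nonsingular; you correctly read it as $\beta=\sum_i\E\|\Sigma^{-1/2}X_i\|_2^3$.

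Your architecture is the standard one for this result: affine reduction to $\Sigma=I_d$, the $O(d^{1/4})$ Gaussian perimeter bound as the only source of the dimensional factor, a smoothing sandwich, and Lindeberg swaps combined with induction, with derivatives falling on the Gaussian remainder. But the closing step, which you flag as delicate, fails as you have quantified it. Write $C_1 d^{1/4}\varepsilon$ for the smoothing cost, and suppose, as you say, that the inductive error $c\,d^{1/4}\beta$ enters multiplied by $C_3\beta/\varepsilon^3$. Closing the recursion then requires in particular $C_1\varepsilon\le c\beta$ and $C_3\beta\varepsilon^{-3}\le 1$, i.e.\ $(C_3\beta)^{1/3}\le\varepsilon\le c\beta/C_1$. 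This is impossible once $\beta$ is small, so no choice of $\varepsilon$ works. Independently, $\varepsilon\propto d^{1/4}\beta$ makes the smoothing term alone of order $d^{1/2}\beta$. That would give the wrong dimension dependence even if the recursion closed.

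The missing idea is the summation along the telescoping sum, which turns the factor into $\beta/\varepsilon$.

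\textbf{Where the third derivative comes from.} The dimension-free smoothing $\psi(\operatorname{dist}(x,A)/\varepsilon)$ is only $C^{1,1}$, with Hessian bounded by $C\varepsilon^{-2}$; a naive mollification to obtain $\nabla^3$ costs powers of $d$. So at swap $i$ the third derivative must come from the Gaussian remainder, whose variance $\tau_i^2$ is the variance not yet swapped.

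\textbf{The replacement cost.} Replacing the non-Gaussian partial sum by its Gaussian surrogate then costs $\Delta$ times $\min\{\tau_i^{-3},\varepsilon^{-2}\tau_i^{-1}\}$. The second option uses that $\nabla^2\varphi$ lives on the shell $A^{\varepsilon}\setminus A$, of Gaussian measure $O(d^{1/4}\varepsilon)$. In the i.i.d.\ picture, $\sum_i\E\|X_i\|^3\min\{\tau_i^{-3},\varepsilon^{-2}\tau_i^{-1}\}\approx\beta\int_0^1\min\{t^{-3/2},\varepsilon^{-2}t^{-1/2}\}\,dt\le 4\beta/\varepsilon$.

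\textbf{The Gaussian piece.} Surrogate plus remainder is a Gaussian with covariance close to $I$, so each swap costs $O(\E\|X_i\|^3)$ with no $\tau_i$ dependence. A surface-area bound of order $d^{1/4}\tau_i^{-2}$ there would instead sum to $d^{1/4}\beta\log(1/\beta)$.

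\textbf{Closing the induction.} This gives $\Delta\le C_1d^{1/4}\varepsilon+C_2d^{1/4}\beta+C_3c\,d^{1/4}\beta^2/\varepsilon$. Taking $\varepsilon=K\beta$ with an absolute $K\ge 2C_3$ closes the induction for $c$ large.

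Finally, your sketch is silent on two points that the scalar $\tau_i$ hides when summands are not identically distributed. First, the remainder covariance $\sum_{j>i}\Cov(X_j)$ is a general matrix, not $\tau_i^2 I$. Second, the partial sums to which you apply the induction hypothesis have covariance $\sum_{j<i}\Cov(X_j)$, so their normalized Lyapunov ratio can far exceed $\beta$. Controlling both is where the real work in the non-i.i.d.\ setting lies.
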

Since $\Cov(\ell_m) = I_m$,
the target Gaussian is $Z \sim \mathcal{N}(0, I_m)$. Plugging this into the bound from \Cref{thm:bentkus} yields
\[\sup_{A \in \mathcal{C}}|P(\ell_m \in A) - P(Z \in A)|\le c (m-1)^{1/4}\beta\]
where 
\begin{align*}\beta = \sum_{i<j} \sum_{k=1}^s \mathbb{E}\| I_m^{-1/2}\psi_{ij}^{(k)}\|_2^3.
\end{align*}
Now it suffices to bound $\beta$. Observe that $(v_{i \succ j}^{(k)} - p_{i \succ j})) \in [-1,1]$ so
\begin{align*}
\mathbb{E}\|\psi_{ij}^{(k)}\|_2^3
    & = \mathbb{E}\|(v_{i \succ j}^{(k)} - p_{i \succ j}))(e_{i}-e_{j})\|_2^3\\
    &\le  \|e_{i}-e_{j}\|_2^3 = \sqrt{2}^3 
\end{align*}
By \Cref{lem:opnorm}, we also have 
\[\|I_m^{-1/2}\|_{\op} \le O\left( \frac{1}{\sqrt{sm}}\right)\]
which implies
\[\beta \le \sum_{i < j} \sum_{k=1}^{s} O((sm)^{-3/2}) = O(sm^2)O((sm)^{-3/2}) = O(m^{1/2})O(s^{-1/2}).\]
Putting the norm upper bounds together, we obtain 
\[\sup_{A \in \mathcal{C}}|P(\ell_m(R)  \in A) - P(Z \in A)|\le O\left(\frac{m^{3/4}}{\sqrt{s}}\right).\]

Finally, we must translate these bounds into bounds on events for $\sqrt{S} I_m^{-1} \ell_m(R)$ (rather than $\ell_m(R)$). Since convex sets are closed under linear maps, we can define the target Guassian for the scaled linear term to be $\tilde{Z} \sim \mathcal{N}(0, SI_m^{-1})$ since $\Cov(\sqrt{S}I_m^{-1} \ell_m(R)) = S I_m^{-1}\Cov(\ell_m(R))I_m^{-1} = SI_m^{-1}$ and thus have 
\[\sup_{A \in \mathcal{C}}|P(W_m \in A) - P(\tilde{Z} \in A)| = \sup_{A \in \mathcal{C}}|P(\ell_m(R)  \in A) - P(Z \in A)|\le O\left(\frac{m^{3/4}}{\sqrt{s}}\right). \]

\textbf{Step 4.} We will show
\begin{align}
    P(\tilde{Z} \in A^t \setminus A) \leq C t s^{-1/2}m^{-1/4}. \label{eq:boundaryterm}
\end{align}
To do so, we apply the following theorem:
\begin{theorem}[\citet{nazarov_maximal_2003}]
    There exist universal constants $0 < C_1 < C_2 \in \R$ such that, for any mean-zero multivariate Gaussian measure $F$ on $\R^d$ with variance matrix $W$, it holds
    \begin{align*}
        C_1 \sqrt{\|W \|_\mathrm{F}} \leq  \sup_{A \in \cC, t > 0} {\frac{F(A \setminus A^{t})}{t}} \leq C_2 \sqrt{\|W \|_\mathrm{F}}.
    \end{align*}
\end{theorem}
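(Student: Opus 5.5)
The plan is to reduce to the standard Gaussian by whitening, and then to use the two classical facts about Gaussian surface area of convex sets in the isotropic case: Ball's upper bound and Nazarov's matching lower bound, both of order $d^{1/4}$.

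\textbf{Interpretation.} Read literally, $A\setminus A^{t}$ is empty for every convex $A$, because $A\subseteq A^{t}$ by the definition of the $t$-neighborhood. I will therefore read the supremum as ranging over the boundary shell $A^{t}\setminus A$. This is the set to which the statement is applied in \cref{eq:boundaryterm}, and the equivalent inner shell $A\setminus A^{-t}$ is handled the same way. I also assume $W$ is nonsingular on the ambient subspace, which here is $\1_m^\perp$.

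\textbf{Whitening.} Write $F=\cN(0,W)$, so that $X=W^{1/2}Y$ with $Y\sim\cN(0,I_d)$. The map $x\mapsto W^{-1/2}x$ sends convex sets to convex sets. It also sends $A^{t}\setminus A$ into $B^{t\|W^{-1/2}\|_{\op}}\setminus B$, where $B=W^{-1/2}A$. Consequently
\[
\frac{F(A^{t}\setminus A)}{t}\le \|W^{-1/2}\|_{\op}\cdot\frac{\gamma_d(B^{t'}\setminus B)}{t'},\qquad t'=t\|W^{-1/2}\|_{\op},
\]
where $\gamma_d$ is the standard Gaussian measure. The same computation run in the opposite direction, with $\|W^{1/2}\|_{\op}$ in place of $\|W^{-1/2}\|_{\op}$, gives the reverse comparison. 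This step is where the dependence on $W$ enters.

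\textbf{Isotropic upper bound.} For $\gamma_d$ I will take $t'\downarrow 0$ and use the coarea formula, which reduces the ratio to the Gaussian surface area $\int_{\partial B}\phi_d\,d\mathcal H^{d-1}$. This quantity is at most $C d^{1/4}$. The argument is Ball's: write the density as $\phi_d(x)=\int_0^\infty \mathbf 1\{\|x\|\le r\}\,\mu(dr)$. The surface measure of $\partial B\cap rS^{d-1}$-balls is at most that of $rS^{d-1}$, by monotonicity of perimeter under inclusion of convex sets. The radial integral is then split at $r\approx\sqrt d\pm d^{1/4}$, which is where the $d^{1/4}$ comes from. For finite $t'$ the shell is a union of outer parallel surfaces, and each of these is again the boundary of a convex set. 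Integrating the surface-area bound over the parallel parameter therefore gives the same bound uniformly in $t'$.

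\textbf{Isotropic lower bound.} Following Nazarov, I will take $B$ to be a random polytope formed by roughly $e^{c\sqrt d}$ random half-spaces at distance about $d^{1/4}$ from the origin. A second-moment computation then shows that its Gaussian surface area is at least $c\,d^{1/4}$.

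\textbf{Main obstacle.} The hardest step is converting the isotropic $d^{1/4}$ and the operator-norm factors from whitening into the Frobenius-norm expression $\sqrt{\|W\|_{\mathrm F}}$ that the statement asserts. My first attempt will be to diagonalize $W=\sum_i\lambda_i u_iu_i^\top$ and redo the radial splitting in Ball's argument with the anisotropic radius $\|W^{-1/2}x\|$. The idea is that the quadratic form concentrates at $\sqrt{\operatorname{tr}}$ with fluctuations of order $\|\cdot\|_{\mathrm F}^{1/2}$, and that fluctuation scale is what should replace $d^{1/4}$. For the lower bound I will aim the random half-spaces along the dominant eigendirections.

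I expect the bookkeeping of scales here to be delicate. The ratio $F(\cdot)/t$ has units of inverse length, while $\sqrt{\|W\|_{\mathrm F}}$ has units of length. So the natural output of this argument may turn out to be $\sqrt{\|W^{-1}\|_{\mathrm F}}$, which would need to be reconciled with the norm as stated.
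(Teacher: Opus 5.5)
The paper gives no proof of this statement. It only cites Nazarov, and as transcribed the statement is false, so no proof of it can succeed.

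You correctly saw that $A\setminus A^{t}=\varnothing$, so the supremum is $0$ and the lower bound fails outright; $A^{t}\setminus A$ is what is intended. Your closing suspicion about the norm is also right, but it is not bookkeeping to be ``reconciled'' later, because scaling decides it. For $F_\lambda=\cN(0,\lambda^2W)$, substitute $A=\lambda B$ and $t=\lambda\tau$. This gives $F_\lambda(A^{t}\setminus A)/t=F_1(B^{\tau}\setminus B)/(\lambda\tau)$, so the supremum scales like $\lambda^{-1}$, while $\sqrt{\|\lambda^2W\|_{\mathrm F}}=\lambda\sqrt{\|W\|_{\mathrm F}}$. Letting $\lambda\to\infty$ (resp.\ $\lambda\to0$) breaks the lower (resp.\ upper) bound for any fixed $C_1,C_2$. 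The dimensionally consistent form of Nazarov's estimate uses the precision matrix, $\sqrt{\|W^{-1}\|_{\mathrm F}}$, which equals $d^{1/4}$ for $W=I_d$. You should state this outright and prove that version; as written, your proposal never commits to a target.

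Even for the corrected statement, the step you call the main obstacle is the whole content, and you give only a plan for it. There are three problems.

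\textbf{Whitening loses the condition number.} It yields $C d^{1/4}\|W^{-1/2}\|_{\op}$ from above and $c\,d^{1/4}/\|W^{1/2}\|_{\op}$ from below. These bracket $\sqrt{\|W^{-1}\|_{\mathrm F}}$ only up to condition-number factors. For example, take $W=\operatorname{diag}(\epsilon,1,\dots,1)$ with $\epsilon\ll d^{-1/2}$: the truth is of order $\epsilon^{-1/2}$, but your upper bound is $\epsilon^{-1/2}d^{1/4}$.

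\textbf{Your isotropic upper-bound sketch does not give $d^{1/4}$.} The layer-cake formula with monotonicity of perimeter gives $\int_0^\infty\mathcal H^{d-1}(rS^{d-1})\,\mu(dr)=\E\|Y\|\approx\sqrt d$. Splitting the $r$-integral cannot improve that without a second estimate. The standard short proof of the $d^{1/4}$ bound supplies that estimate and works for general $W$ with no whitening. Let $\phi_W$ be the density of $F$, $X\sim F$, $Q=W^{-1}$, $n_x$ the outer normal, and $h(x)=\langle Qx,n_x\rangle$. The map $(x,\tau)\mapsto x+\tau n_x$ is injective on $\partial A\times(0,\infty)$ with Jacobian at least $1$, which gives
\[
\int_{\partial A}\phi_W(x)\int_0^\infty e^{-\tau h(x)-\tau^2\langle Qn_x,n_x\rangle/2}\,d\tau\,d\mathcal H^{d-1}(x)\le 1.
\]
Hence points with $h\le\rho$ contribute $O(\rho)$ once $\rho\ge\|Q\|_{\op}^{1/2}$. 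Next, the divergence theorem for the field $\phi_W(x)Qx$ gives $\int_{\partial A}\phi_W h\,d\mathcal H^{d-1}=\E[(\operatorname{tr}Q-X^\top Q^2X)\mathbf 1_A(X)]\le\sqrt2\|Q\|_{\mathrm F}$. Hence points with $h>\rho$ contribute $O(\|Q\|_{\mathrm F}/\rho)$; the $h<0$ part is controlled by the display. Taking $\rho=\|Q\|_{\mathrm F}^{1/2}$ finishes the upper bound.

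\textbf{The lower bound has the same problem.} Let $q_1\ge q_2\ge\cdots$ be the eigenvalues of $Q$. An isotropic random polytope in the span of the top $k$ eigenvectors, combined with your whitening comparison, gives only $c\max_k k^{1/4}q_k^{1/2}$. For $q_k=k^{-1/2}$ this is $O(1)$, while $\|Q\|_{\mathrm F}^{1/2}\asymp(\log d)^{1/4}$. Heuristically, you need whitened random normals with covariance $\Lambda\propto Q$, spread over all eigendirections, for three reasons:
\begin{itemize}
\item in whitened coordinates each facet is weighted by $\|W^{-1/2}n\|\approx\operatorname{tr}(Q\Lambda)^{1/2}$;
\item the admissible offset is $\rho\asymp\|\Lambda\|_{\mathrm F}^{-1/2}$ when $\operatorname{tr}\Lambda=1$;
\item $\operatorname{tr}(Q\Lambda)\le\|Q\|_{\mathrm F}\|\Lambda\|_{\mathrm F}$ is tight exactly when $\Lambda\propto Q$.
\end{itemize}
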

In particular, we have $\| \Sigma \|_\mathrm{F} \leq \sqrt{m} \| \Sigma \|_\op \leq \sqrt{m} O(1/(sm))$ where the last inequality follows from \Cref{lem:opnorm}.
Thus, there is a universal constant $C$ such that
\begin{align*}
    \sup_{Q \in \cC, h > 0} {\frac{P(\tilde Z \in Q^h \setminus Q)}{h}} \leq C s^{-1/2} m^{-1/4}
\end{align*}
Thus, for fixed $t$, we have
\begin{align*}
    {{P(\tilde Z \in A^t \setminus A)}} &= t \cdot {\frac{P(\tilde Z \in A^t \setminus A)}{t}} \\
    &\leq t \cdot \sup_{Q \in \cC, h > 0} {\frac{P(\tilde Z \in Q^h \setminus Q)}{h}} \\
    &\leq C t s^{-1/2}m^{-1/4}.
\end{align*}
\textbf{Step 5.} Finally, we establish
\begin{align}
    P(\| r_m \|_2 \geq t) \leq m \exp \paren{\frac{- C s t^2}{m^2}} \label{eq:remainderterm}
\end{align}
Notice
\begin{align*}
    \| r_m \|_2 &= \norm{ \sqrt{S} \paren{-H_m(\tilde R)^{-1} - I_m^{-1}} \ell_m(R)}_2 \\
    &\leq \sqrt{S} \norm{-H_m(\tilde R)^{-1} - I_m^{-1}}_{\op} \norm{\ell_m(R)}_2 \\
    &\leq \sqrt{S} \paren{\norm{H_m(\tilde R)^{-1}}_\op + \norm{I_m^{-1}}_{\op}} \norm{\ell_m(R)}_2 \\
\end{align*}
Now, we bound each of these terms. From \Cref{lem:opnorm}, we have $\|{I_m^{-1}}\|_{\op} = O((sm)^{-1})$. 
Moreover, using the same proof as for \Cref{lem:opnorm}, under the assumption that $\max_{j, j'}|{\hat R_j - \hat R_{j'}}|$ is bounded and the fact that $\tilde R$ is a convex combination of $R, \hat R$, it holds $\|{H_m(\tilde R)^{-1}}\|_\op = O((sm)^{-1})$.
Thus,
\begin{align*}
    \| r_m \|_2 \leq O(s^{-1}) \| \ell_m(R) \|_2.
\end{align*}
Moreover, from \Cref{lem:scorenorm}, for all $t$, we have
\begin{align*}
    P( \| \ell_m(R) \| \geq t) \leq m \exp\paren{-\frac{t^2}{3 s m^2}}
\end{align*}
Plugging in $O(s \cdot t)$ for $t$, we have 
\begin{align*}
    P(\| r_m \| \geq t) &= m \exp\paren{\frac{- C \cdot s \cdot t^2}{m^2}}.
\end{align*}
\hfill \qed

\subsection{Proof of \Cref{lem:bt_tv}}
Let $\Sigma = I_m^{-1}$ and $\Sigma' = (I_{m+1}^{-1})_{1:m,1:m}$. By asymptotic normality (as $s\to \infty$) of the MLE, we can write out $G_m, G_m'$ explicitly
$$\sqrt{S}(\hat{R}^m - R) \to \mathcal{N}(0, S\Sigma)$$
$$\sqrt{S}(\hat{R}^{m+1}_{1:m} - R) \to \mathcal{N}(0, S\Sigma')$$
Since convex-set distance is invariant under scaling by a constant, we define 
$$\tilde{G}_m = \mathcal{N}(0, \Sigma)$$
$$\tilde{G}_m' =  \mathcal{N}(0, \Sigma')$$
Then
\begin{align}
    d_\mathcal{C}(G_m, G_m') &= d_\mathcal{C}(\tilde{G}_m, \tilde{G}'_m) \nonumber\\
    &\le d_{\mathrm{TV}}(\tilde{G}_m, \tilde{G}_m') \\
    &\le \sqrt{\frac{1}{2} \mathrm{KL}(\tilde{G}_m \| \tilde{G}_m')} \nonumber\\
    &= \frac{1}{2}\left(\mathrm{tr}(\Sigma'^{-1}\Sigma) - (m-1) + \log \frac{\det\Sigma'}{\det \Sigma}\right).  \label{eq:klexp}
\end{align}
where the last line is the formula for the KL-divergence between two centered multivariate normal distributions.
Thus, showing that the convex-set distance between $G_m, G_m'$ is small can be done in terms of $\Sigma, \Sigma'$ by showing \cref{eq:klexp} is small.

To bound \cref{eq:klexp}, we will proceed with the following steps:
\begin{enumerate}
    \item We will decompose $\Sigma' = (\Sigma + K)^{-1}$ for some matrix $K$ determined by the change in log-likelihood due to the additional model. 
    \item We will establish $\|K\|_{\mathrm{op}} \le {s}/{4}$ and use this to upper bound the expression for the KL-divergence between multivariate normals.
\end{enumerate}

\textbf{Step 1.}
With an additional model added, we can write the log-likelihood as $$L_{m+1}(R_{1:m}, R_{m+1}) \defeq \underbrace{L_m(R_{1:m})}_{\text{original comparisons}} + \underbrace{L_{\text{new}}(R_{1:m}, R_{m+1})}_{\text{comparisons involving the new model}},$$ 
where $L_m(R_{1:m})$ is the log-likelihood for comparisons between the original $m$ models, and 
\begin{align*}
    L_{\text{new}}(R_{1:m}, R_{m+1}) &\defeq \sum_{j \in [m]} v_{j\succ m+1} \log p_{j \succ m+1} + (s - v_{j \succ m+1}) \log (1 - p_{j \succ m+1}).
\end{align*}
By linearity of expectations and gradients, we then have,
\[
I_{m+1}
=
-\E[\nabla^2_R L_m(R_{1:m})]
-
\E[\nabla^2_R L_{\text{new}}(R_{1:m}, R_{m+1})].
\]
We will further simplify these expressions by writing a block decomposition for each term. 
For the first term, $L_m$ depends only on $R_{1:m}$, so
\[
-\E[\nabla^2 L_m(R)]
=
\begin{pmatrix}
I_m & 0 \\
0 & 0
\end{pmatrix}.
\]
The second term can be written as
\[
-\E[\nabla^2_R L_{\text{new}}(R_{1:m},R_{m+1})]
=
\begin{pmatrix}
I^{\text{new}}_{1:m,1:m} & I^{\text{new}}_{1:m,(m+1)} \\
I^{\text{new}}_{(m+1),1:m} & I^{\text{new}}_{(m+1),(m+1)}
\end{pmatrix}
\]
where
\[
\begin{aligned}
&I^{\text{new}}_{1:m,1:m} \defeq -\E\!\left[\nabla^2_{R_{1:m}} L_{\text{new}}(R)\right], \\
&I^{\text{new}}_{1:m,(m+1)} \defeq -\E\!\left[\nabla_{R_{1:m}}\frac{\partial}{\partial{R_{m+1}}} L_{\text{new}}(R)\right], \text{ and}\\
&I^{\text{new}}_{(m+1),(m+1)} \defeq -\E\!\left[\frac{\partial^2}{\partial^2{R_{m+1}}} L_{\text{new}}(R)\right].
\end{aligned}
\]
Thus the Fisher information for the log-likelihood with the additional model is
\[
I_{m+1}
=
\begin{pmatrix}
I_m + I^{\text{new}}_{1:m,1:m} & I^{\text{new}}_{1:m,(m+1)} \\
I^{\text{new}}_{(m+1),1:m} & I^{\text{new}}_{(m+1),(m+1)}
\end{pmatrix}
\]
Applying the block inversion formula yields
\[
\Sigma' =(I_{m+1}^{-1})_{1:m,1:m}
=
\Bigl(
I_m
+
I^{\text{new}}_{1:m,1:m}
-
I^{\text{new}}_{1:m,(m+1)}(I^{\text{new}}_{(m+1),(m+1)})^{-1}I^{\text{new}}_{(m+1),1:m}
\Bigr)^{-1}
\]
Finally, define 
\[
K
\defeq
I^{\text{new}}_{1:m,1:m}
-
I^{\text{new}}_{1:m,(m+1)}(I^{\text{new}}_{(m+1),(m+1)})^{-1}I^{\text{new}}_{(m+1),1:m}
\]
so
\[
\Sigma' = (I_m + K)^{-1}.
\]

\textbf{Step 2.} Since $K$ is the Schur complement of $I_{(m+1), (m+1)}^{\text{new}} \succeq 0$, it holds $K \succeq 0$.
We write out the partial derivatives under Bradley-Terry. 
Let $d_i = s \cdot p_{i, m+1}(1 - p_{i, m+1}) \le s/4$ and $D =  \diag(d)$.
Then, 
\begin{align*}
&-\nabla^2_{R_{1:m}}L_{\text{new}}(R) = s \cdot D \\
&- \nabla_{R_{1:m}}\frac{\partial}{\partial{R_{m+1}}} L_{\text{new}}(R) = - s \cdot d, \text{ and} \\ 
&- \frac{\partial^2}{\partial^2{R_{m+1}}} L_{\text{new}}(R) = s \1^\top d.
\end{align*}
Thus, taking expectations (all terms are deterministic), we have
\[
\begin{aligned}
&I^{\text{new}}_{1:m,1:m} = s \cdot D\\
&I^{\text{new}}_{1:m,(m+1)} = I^{\text{new} \top}_{(m+1),1:m}= - s \cdot d \\
&I^{\text{new}}_{(m+1),(m+1)} = s \cdot \mathbf{1}^\top d  \end{aligned}
\]
Then we can rewrite $K$ as 
\begin{align*}
K & = 
I^{\text{new}}_{1:m,1:m}
-
I^{\text{new}}_{1:m,(m+1)}(I^{\text{new}}_{(m+1),(m+1)})^{-1}I^{\text{new}}_{(m+1),1:m}\\    
& = D - \frac{d d^\top}{\mathbf{1}^\top d}.
\end{align*}
Moreover, $D-K \succeq 0 $, since for any vector $x$, it holds $(x^\top d)(d^\top x) = (x^\top d)^2 \geq 0$ and $\1^\top d > 0$. 
Also, let $I$ be the identity matrix and $u = {D^{1/2} \1}/{\sqrt{\1^\top d}}$,
\begin{align*}
    D - \frac{d d^\top}{\1^\top d} &= D^{1/2}\paren{I - u u^\top} D^{1/2} \\
\end{align*}
So $\| K \|_\op \leq \|{ D^{1/2} }\|_\op^2 \norm{ I - u u^\top}_\op.$
Finally,
\begin{align*}
    \norm{ I - u u^\top}_\op \leq 1
\end{align*}
since $\|u \|_2 = 1$ so the non-zero eigenvalue of $u u^\top$ is 1, which means that the eigenvalues of $I - u u^\top$ are all 1 except one which is zero.
Thus,
\begin{align}
    \|K\|_{\op} \le \| D \|_\op \leq \frac{s}{4}. \label{eq:knorm}
\end{align}

With these facts in hand, we now proceed to upper bound \Cref{eq:klexp}.
Denote the relative perturbation matrix \[A = \Sigma^{1/2}K\Sigma^{1/2} = I_m^{-1/2}K I_m^{-1/2}.\]
Observe that $A\succeq 0$ since for any vector $x$,
\begin{align*}
    x^T \Sigma^{1/2} K \Sigma^{1/2} x &= (x^T \Sigma^{1/2}) K (\Sigma^{1/2} x) \geq 0
\end{align*}
by positive semidefiniteness of $K$.
Moreover, $$\|A\|_{\op} \le \|\Sigma^{1/2}\|_{\op}^2 \|K\|_{\op} \le O\paren{\frac{1}{sm}} \frac{s}{4} \le  O\paren{\frac{1}{m}}$$ under \cref{cond2} by applying \Cref{lem:opnorm} and \Cref{eq:knorm}.

We first compute the trace term in \Cref{eq:klexp}. Observe, 
\begin{align*}
\mathrm{tr}(\Sigma'^{-1}\Sigma)
&= \mathrm{tr}\big((I_m + K)I_m^{-1}\big) \\
&= \mathrm{tr}(\mathbf{I}) + \mathrm{tr}(KI_m^{-1}) \\
&= (m-1) + \mathrm{tr}(I_m^{-1/2} K I_m^{-1/2}) \\
&= (m-1) + \mathrm{tr}(A)
\end{align*}
where the last equality applies the cyclic property of trace.
Next, for the determinant term in \Cref{eq:klexp}, observe that
\[
\Sigma' = (I_m + K)^{-1}
= \Sigma^{1/2}(\mathbf{I} + A)^{-1}\Sigma^{1/2}
\]
which implies
\[
\det(\Sigma') = \det(\Sigma)\det\big((\mathbf{I}) + A)^{-1}\big).
\]
Therefore,
\[
\log\frac{\det\Sigma'}{\det\Sigma}
= \log\det\big((\mathbf{I} + A)^{-1}\big)
= -\log\det(\mathbf{I} + A)
\]
Substituting into the KL divergence formula yields
\[
\mathrm{KL}(\tilde{G}_m \| \tilde{G}_m')
= \frac12\Big(\mathrm{tr}(A) - \log\det(\mathbf{I} + A)\Big)
\]
Let $\lambda_1,\dots,\lambda_{m-1}$ denote the eigenvalues
of $A$, then
$$\mathrm{tr}(A) = \sum_i\lambda_i, \quad \det(\mathbf{I}+A) = \prod_i (1+\lambda_i),$$ and 
\[
\mathrm{KL}(\tilde{G}_m \| \tilde{G}_m')
= \frac12 \sum_{i=1}^{m-1}
\bigl(\lambda_i - \log(1+\lambda_i)\bigr).
\]
For all $\lambda \ge 0$, it is true that $$\log(1+\lambda) \geq \lambda - \frac{\lambda^2}{2}$$
and it follows that $$\lambda_i - \log(1+\lambda_i) \leq \lambda_i - \left(\lambda_i - \frac{\lambda_i^2}{2}\right) = \frac{\lambda_i^2}{2}$$
Summing over all eigenvalues gives: $$\mathrm{KL}(\tilde{G}_m \| \tilde{G}_m') \leq \frac{1}{4} \sum_{i=1}^{m-1} \lambda_i^2 \le \frac{1}{4} \|A\|^2_F \le \frac{(m-1)}{4}\|A\|^2_{\op} \le O(1/m)$$
Thus, $$d_\mathcal{C}(G_m, G_m')  \le O(m^{-1/2})$$ as desired. \hfill\qed

\begin{lemma}\label{lem:opnorm}    
Under \Cref{cond2}, there exists a universal constant $\eta \in (0, 1/2)$ such that
\begin{align}
    \| I_m \|_{\op} \geq \eta (1 - \eta)sm \label{eq:fi_opnorm}
\end{align}
and hence $\| I_m^{-1} \|_{\op} \leq  (\eta (1 - \eta)sm)^{-1}$.
\end{lemma}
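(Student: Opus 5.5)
The plan is to prove a bound on the \emph{smallest} eigenvalue of $I_m$ restricted to the identifiable subspace $\1_m^\perp$, namely
\[
\lambda_{\min}\bigl(I_m|_{\1_m^\perp}\bigr) \ge \eta(1-\eta)sm .
\]
This is the form actually needed for $\|I_m^{-1}\|_{\op} \le (\eta(1-\eta)sm)^{-1}$, where the inverse is taken on $\1_m^\perp$. A lower bound on $\|I_m\|_{\op}$ alone would not control the inverse. Since $\lambda_{\max} \ge \lambda_{\min}$, the displayed inequality \eqref{eq:fi_opnorm} follows immediately, and the inverse bound is just the reciprocal of $\lambda_{\min}$.

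The first step is to use the explicit form of the Hessian in \eqref{eq:secondderiv}. The Hessian is deterministic, so
\[
I_m = s\sum_{1\le j<j'\le m} p_{j\succ j'}(1-p_{j\succ j'})\,(e_j-e_{j'})(e_j-e_{j'})^\top ,
\]
restricted to $\1_m^\perp$. This is $s$ times a weighted Laplacian of the complete graph on $[m]$, with edge weights $w_{jj'} = p_{j\succ j'}(1-p_{j\succ j'})$.

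The second step bounds the weights using \Cref{cond2}. Since $|R_j-R_{j'}|\le C$, we have
\[
p_{j\succ j'} = \frac{1}{1+\exp(R_{j'}-R_j)} \in [\eta, 1-\eta], \qquad \eta \defeq \frac{1}{1+e^{C}} \in (0,1/2).
\]
Because $x\mapsto x(1-x)$ is increasing on $[0,1/2]$ and symmetric about $1/2$, every weight satisfies $w_{jj'} \ge \eta(1-\eta)$. Each summand $(e_j-e_{j'})(e_j-e_{j'})^\top$ is PSD, so in the Loewner order
\[
I_m \succeq s\,\eta(1-\eta)\, L_{K_m}, \qquad L_{K_m} = \sum_{j<j'}(e_j-e_{j'})(e_j-e_{j'})^\top = mI - \1\1^\top .
\]

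The third step uses the spectrum of $L_{K_m}$: it acts as $m$ times the identity on $\1_m^\perp$. Therefore, for every unit $x\in\1_m^\perp$,
\[
x^\top I_m x \ge \eta(1-\eta)sm ,
\]
which gives both claims.

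I do not expect a real obstacle. The only delicate point is the interpretation issue above: the inverse bound requires the minimum eigenvalue on $\1_m^\perp$, not the operator norm, and the proof should state that this is what is established. The same Loewner argument applied with $w_{jj'}\le 1/4$ also gives $\lambda_{\max}\le sm/4$. Finally, the same reasoning applies verbatim to $H_m(\tilde R)$ whenever the differences of $\tilde R$ are bounded by $C$, as is assumed for $\hat R$; this is the extension used in Step 5 of \Cref{lem:bt_gauss_approx}.
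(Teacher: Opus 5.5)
Your proof is correct and is essentially the paper's argument. You bound each weight $p_{j\succ j'}(1-p_{j\succ j'})$ below by $\eta(1-\eta)$ with $\eta = 1/(1+e^{C})$, and then use that $\sum_{j<j'}(x_j-x_{j'})^2 = m$ for unit $x\in\1_m^\perp$; you phrase this second step as a Loewner comparison with the complete-graph Laplacian $mI-\1\1^\top$. Your remark that the inverse bound needs a lower bound on the smallest eigenvalue of $I_m$ on $\1_m^\perp$, not on $\|I_m\|_{\op}$, is apt. The paper's quadratic-form computation in fact establishes exactly that bound for every unit $x\in\1_m^\perp$, even though it states the conclusion in operator-norm form.
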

\begin{proof}[Proof of \Cref{lem:opnorm}]
    Note that under Assumption~\ref{cond2}, there exists a universal constant $\eta = 1/(1 + \exp(C)) \in(0,1/2)$ such that
    $p_{i \succ j}\in[\eta,1-\eta]$ for all $i\neq j$, and hence
    $p_{i \succ j}(1-p_{i \succ j})\ge \eta(1-\eta)>0$.
    Therefore, for any $x\in\mathbf{1}^\perp$ such that $\| x \|_2 = 1$,
    \[
    x^\top I_m x
    =
    s\sum_{i<j} p_{i\succ j}\bigl(1-p_{i \succ j}\bigr)(x_i-x_j)^2
    \;\ge\;
    s \eta (1 - \eta) \sum_{i<j}(x_i-x_j)^2.
    \]
    Since 
    \begin{align*}
        \sum_{i<j}(x_i-x_j)^2 &= \frac{1}{2} \sum_{i,j}x_i^2 + x_j^2 -2x_ix_j \\
        &= m \sum_i x_i^2 - \paren{\sum_{i} x_i}^2 = m
    \end{align*}
    on $\mathbf{1}^\perp$, we obtain
    $x^\top I_m x \;\ge\; s \eta (1 - \eta) m$,
    which implies $\|I_m\|_{\op} \geq s \eta (1 - \eta) m$ and
    \[
    \|I_m^{-1}\|_{\op} \leq \frac{1}{s \eta (1- \eta) m}.
    \]
\end{proof}

\begin{lemma}\label{lem:scorenorm}
    Under \cref{cond2}, there exists a universal constant such that, for all $\varepsilon > 0$ and $s \geq 3 \log(2m/\varepsilon) / \eta$, it holds with probability at least $1 - \varepsilon$ that,
    \begin{align*}
        \| \ell_m(R) \|_2 \leq C \sqrt{s m^2 \log(m/\varepsilon)}
    \end{align*}
\end{lemma}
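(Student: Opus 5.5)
The plan is to bound each coordinate of $\ell_m(R)$ with a scalar concentration inequality, take a union bound over the $m$ coordinates, and then convert the resulting $\ell_\infty$ bound into an $\ell_2$ bound.

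By \eqref{eq:firstderiv}, the $j$-th coordinate is
\[
\ell_m(R)_j=\sum_{j'\neq j}\big(v_{j\succ j'}-s\,p_{j\succ j'}\big)=\sum_{j'\neq j}\sum_{k=1}^s\big(v^{(k)}_{j\succ j'}-p_{j\succ j'}\big).
\]
This is a sum of $s(m-1)$ independent, mean-zero random variables, each taking values in $[-1,1]$.

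Hoeffding's inequality then gives, for every $u>0$,
\[
\Pr\big(|\ell_m(R)_j|\ge u\big)\le 2\exp\!\paren{-\frac{u^2}{2s(m-1)}}.
\]
Setting $u=\sqrt{2s(m-1)\log(2m/\varepsilon)}$ makes the right-hand side $\varepsilon/m$. A union bound over $j\in[m]$ shows that, with probability at least $1-\varepsilon$,
\[
\max_j|\ell_m(R)_j|\le \sqrt{2sm\log(2m/\varepsilon)}.
\]
Since $\|x\|_2\le\sqrt m\,\|x\|_\infty$, on the same event
\[
\|\ell_m(R)\|_2\le \sqrt{2sm^2\log(2m/\varepsilon)}\le C\sqrt{sm^2\log(m/\varepsilon)},
\]
using $\log(2m/\varepsilon)\le 2\log(m/\varepsilon)$ for $m\ge2$ and $\varepsilon\le1$. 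This is the claimed bound.

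The one point to check is the stated side condition $s\ge 3\log(2m/\varepsilon)/\eta$. It is the form needed if one instead applies a multiplicative Chernoff bound to each binomial $v_{j\succ j'}$, using that the variance proxy is at least $s\eta(1-\eta)$. Hoeffding does not need that condition, so under it the bound above holds a fortiori.

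If a Bernstein-type bound is preferred, with variance at most $s(m-1)/4$, the condition on $s$ keeps the subgaussian regime dominant over the linear (sub-exponential) term. This gives the same $\sqrt{sm\log(m/\varepsilon)}$ rate per coordinate.

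I expect no real obstacle here. The only care needed is to present the result in the tail form used in Step~5, $\Pr(\|\ell_m(R)\|\ge t)\le m\exp(-t^2/(3sm^2))$, and this follows directly from the per-coordinate Hoeffding bound with threshold $t/\sqrt m$, since $2\exp\!\big(-t^2/(2sm^2)\big)$ is dominated by $m\exp\!\big(-t^2/(3sm^2)\big)$ when $m\ge2$.
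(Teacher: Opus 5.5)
Your proof is correct and follows essentially the same route as the paper: a scalar tail bound on each coordinate $\ell_m(R)_j$ at level $\varepsilon/m$, a union bound over $j$, and aggregation to the $\ell_2$ norm. The one difference is that you use Hoeffding where the paper uses a multiplicative Chernoff bound with mean $s\sum_{j'\neq j}p_{j\succ j'}\ge s\eta$. That Chernoff step is exactly where the side condition $s\ge 3\log(2m/\varepsilon)/\eta$, and the regularity assumption through $\eta$, enter, so your version shows both are unnecessary for this lemma.

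There is one small slip in your closing aside, which the lemma does not need. After the union bound the tail is $2m\exp\bigl(-t^2/(2sm(m-1))\bigr)$, not $2\exp\bigl(-t^2/(2sm^2)\bigr)$. It is dominated by $m\exp\bigl(-t^2/(3sm^2)\bigr)$ only where the latter is below $1$, which still suffices because the bound is trivial elsewhere.
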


\begin{proof}
    Recall,
    \begin{align*}
        \| \ell_m(R) \|_2^2 &= \sum_{j \in [m]} \paren{\sum_{j' \in [m]; j' \neq j} v_{j \succ j'} - s p_{j \succ j'}}^2.
    \end{align*}
    Now, applying a Chernoff bound, we have with probability $1 - \varepsilon/m$
    \begin{align*}
        \abs{\sum_{j' \in [m]; j' \neq j} v_{j \succ j'} - s p_{j \succ j'}} &\leq \sqrt{{3 s \log \paren{2m/\varepsilon}\sum_{j' \in [m]; j \neq j'} p_{j \succ j'}}} \\
        &\leq \sqrt{{3 s m \log \paren{2m/\varepsilon}}}.
    \end{align*}
    Thus, with probability at least $1 - \varepsilon$,
    \begin{align*}
        \sum_{j \in [m]} \paren{\sum_{j' \in [m]; j' \neq j} v_{j \succ j'} - s p_{j \succ j'}}^2 &\leq \sum_{j \in [m]}{{3 s m \log \paren{2m/\varepsilon}}} \\
        &= {3 s m^2 \log(2m/\varepsilon)}.
    \end{align*}
    
\end{proof}

\section{Proof of \Cref{thm:status-quo}} \label{proof:statusquo}

        We first restate the result for reference.
        
        \statusquo*

        In our proof, we'll call the set of candidates induced by $z$ the ``original candidates'' and $\jmath^{(z_{\iota \jmath} + 1)}$ the ``additional clone''.
        Similarly, we'll call the vote distributions induced by $z$, $z'$ respectively as the ``original distribution'' and the ``additional clone distribution''.
        At a high level, our proof will proceed as follows:
        \begin{enumerate}
            \item We'll observe that the win probability of a producer with an additional clone is equal to the probability that some model by the producer is ranked above all of the original candidates.
            \item Next, we'll argue that the event that the additional clone ranks above all original candidates and the event that any of the original candidates by the same producer rank above the original candidates are anticorrelated. This implies the probability (with respect to the additional clone distribution) that a model by the producer with the additional clone is ranked first is no less than the probability the additional clone is ranked first plus the probability one of the original candidates by the producer is ranked first minus the product of these two probabilities.
            \item We then translate these two probabilities into events that are measurable with respect to the original distribution, and apply \Cref{lem:small_prob_diff} to establish that the probabilities of these two events may differ from their probabilities in the original distribution by at most $O(1/\sqrt{\nummodels})$.
            \item These facts together imply that if \cref{cond1} is satisfied, the producer's win probability with a clone is greater than without it, which completes the proof.
        \end{enumerate}

    Without loss of generality, suppose producer $i=1$'s model $j=1$ satisfies \Cref{cond1}.
    Let $w = z_{1,1} + 1$.
    Thus, the clone is indexed $1^{(w)}$.
    Define $\cM_{-1} =  \cM(z) \setminus \cM_{1}(z_1)$ to be all candidates but those submitted by producer $1$.
    For a model $j$ by producer $1$, a leaderboard $L$, and a random ranking $\sigma$, define the event
    \begin{align*}
        A_j(L) = \curly{\sigma^{-1}(j) < \sigma^{-1}(\ell) \;\; \forall \ell \in \cM_{-1} \cap L}.
    \end{align*}
    That is, $A_j(L)$ is the event that a model $j$ ranks above all those by other producers  $\cM_{-1}$ in the leaderboard $L$.
    We will overload notation and write, for a set of candidates $S$,
    \begin{align*}
       A_S(L) = \bigcup_{j \in S} A_j(L).
    \end{align*}
    For $S \subseteq \cM_1(z_1)$ and $\sigma \sim \Sigma(\sq, z)$, note that 
    \begin{align*}
        A_S = \curly{\sigma_L(1) \in S}.
    \end{align*}
    That is, if any model by producer $1$ ranks above all candidates by other producers in $L$ under actions $z$, a model by producer $1$ must be ranked first in $L$.
    Moreover, if model $1^{(w)} \in S$, $S \subseteq \cM_i$, and $\sigma \sim \Sigma(\sq, z')$, then $A_S = \{ \sigma_L(1) \in S\}$: if $S$ contains the new clone and is a subset of producer $1$s candidates, one model in $S$ must be ranked first in $L$ for producers' actions $z'$.

    Now, for the expectation and probabilities taken with respect to $\sigma \sim \Sigma(\sq, z')$, observe
    \begin{align*}
        \E[u_1(\sigma)] &= \sum_{L \in \cL} \nu_1(L) \Pr(\sigma_L(1) \in \cM_1(z'_1)) \tag{Definition of utility.} \\
    \end{align*}
    Now, notice 
    \begin{align*}
        \Pr(A_{\cM_1(z_1')}(L)) &= \Pr(A_{\cM_1(z_1)}(L) \cup A_{1^{(w)}}(L)) \tag{$\cM_1(z_1') = \cM_1(z_1) \cup \{ 1^{(w)}\}$} \\
        &= {\Pr(A_{\cM_1(z_1)}(L)) + \Pr(A_{1^{(w)}}(L)) - \Pr(A_{\cM_1(z_1')}(L) \cap A_{1^{(w)}}(L))} \tag{Inclusion-exclusion formula.} \\
        &\geq {\Pr(A_{\cM_1(z_1)}(L)) + \Pr(A_{1^{(w)}}(L)) - \Pr(A_{\cM_1(z_1')}(L)) \Pr( A_{1^{(w)}}(L))} \tag{\Cref{lem:anticorrelated}} \\
        &= \Pr(A_{\cM_1(z_1)}(L)) + \Pr(A_{1^{(1)}}(L)) - \Pr(A_{\cM_1(z_1')}(L)) \Pr( A_{1^{(1)}}(L)) \tag{$\sigma_L^{-1}(1^{(1)}) \eqindist \sigma_L^{-1}(1^{(w)})$}.
    \end{align*}
    Plugging the last expression into the sum above, we have
    \begin{align*}
        \E[u_1(\sigma)] &= \sum_{L \in \cL} \nu_1(L) \paren{\Pr(A_{\cM_1(z_1)}(L)) + \Pr(A_{1^{(1)}}(L)) - \Pr(A_{\cM_1(z_1')}(L)) \Pr( A_{1^{(1)}}(L))}
    \end{align*}
    Now, observe that the events $A_{\cM_1(z)}(L)$ and $A_{1^{(1)}}(L)$ are measurable with respect to $\sigma \sim \Sigma(\sq, z)$ (the original distribution).
    %
    Thus, applying \Cref{lem:small_prob_diff}, for all $\nu > 0$, there exists $s_0, m_0$ such that for $s \geq s_0, m \geq m_0$,
    \begin{align*}
        \Pr_{\sigma \sim \Sigma(\sq, z')}(A_{\cM_1(z)}(L)) &\geq \Pr_{\sigma \sim \Sigma(\sq, z)}(A_{\cM_1(z)}(L)) - \nu, \text{ and}\\ 
        \Pr_{\sigma \sim \Sigma(\sq, z')}(A_{1^{(1)}}(L)) &\geq \Pr_{\sigma \sim \Sigma(\sq, z)}(A_{1^{(1)}}(L)) -\nu.
    \end{align*}

    Combining these with the expression above, we have 
    \begin{align*}
        \E_{\sigma \sim \Sigma(\sq, z')}[u_1(\sigma)] &\geq \sum_{L \in \cL} \nu_1(L) \Bigg( \Pr_{\sigma \sim \Sigma(\sq, z)}(A_{\cM_1(z)}(L)) +  \Pr_{\sigma \sim \Sigma(\sq, z)}(A_{1^{(1)}}(L)) \\
        &\quad - \Pr_{\sigma \sim \Sigma(\sq, z)}(A_{\cM_1(z)}(L)) \cdot  \Pr_{\sigma \sim \Sigma(\sq, z)}(A_{1^{(1)}}(L)) \Bigg) - \nu  \\
        &\geq \sum_{L \in \cL} \nu_1(L) \Pr_{\sigma \sim \Sigma(\sq, z)}(A_{\cM_1(z)}(L))   \\
        &\quad + \sum_{L \in \cL} \nu_1(L) \delta \big( 1 -  \Pr_{\sigma \sim \Sigma(\sq, z)} (A_{\cM_1(z)}(L))\big) - \nu \tag{\cref{cond1}, first inequality} \\
        &\geq \sum_{L \in \cL} \nu_1(L) \Pr_{\sigma \sim \Sigma(\sq, z)}(A_{\cM_1(z)}(L))  + \sum_{L \in \cL} \nu_1(L) \delta^2 - \nu \tag{\cref{cond1}, second inequality} \\
        &\geq \sum_{L \in \cL} \nu_1(L) \Pr_{\sigma \sim \Sigma(\sq, z)}(A_{\cM_1(z)}(L)) 
        + \varepsilon \delta^2 - \nu \tag{\cref{cond1}, $\varepsilon$ condition} 
    \end{align*}
    Finally, as long as we set $\nu < \varepsilon \delta^2$, we have 
    \begin{align*}
        &\sum_{L \in \cL} \nu_1(L) \Pr_{\sigma \sim \Sigma(\sq, z)}(A_{\cM_1(z)}(L))  + \varepsilon \delta^2 - \nu \\ &\geq \sum_{L \in \cL} \nu_1(L) \Pr_{\sigma \sim \Sigma(\sq, z)}(A_{\cM_1(z)}(L))  \\
        &= \E_{\sigma \sim \Sigma(\sq, z)}[u_1(\sigma)]
    \end{align*}
    where the last line is by definition.
    \hfill \qed
    
    \begin{lemma} \label{lem:anticorrelated}
        It holds
        \begin{align*}
            \Pr(A_{M_1(z_1)}(L) \cap A_{1^{(w)}})(L) \leq \Pr(A_{M_1(z_1)}(L)) \Pr(A_{1^{(w)}}(L)).
        \end{align*}
    \end{lemma}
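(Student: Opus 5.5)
The plan is to reduce the statement to a correlation inequality for functions of independent votes, and then to use the Harris--FKG inequality with one family of coordinates given the reversed order.

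\textbf{Step 1 (Borda reformulation).} By \Cref{lem:btborda}, the \sq\ ranking under the clone profile $z'$ coincides with the ranking by Borda counts $B_j=\sum_{\ell\neq j} v_{j\succ \ell}$. So $A_{1^{(w)}}(L)=\{B_{1^{(w)}}>\max_{\ell\in\cM_{-1}\cap L}B_\ell\}$ and $A_{\cM_1(z_1)}(L)=\{\max_{j\in\cM_1(z_1)}B_j>\max_{\ell\in\cM_{-1}\cap L}B_\ell\}$, up to tie-breaking, which I would fix by a deterministic rule that does not affect monotonicity. Both events are then functions of the independent Bernoulli votes. I would split these votes into four groups:
\begin{itemize}
\item[(a)] votes among $\cM_{-1}$;
\item[(b)] votes among the original models $\cM_1(z_1)$;
\item[(X,Y)] votes between producer~$1$'s models (clone or originals) and $\cM_{-1}$;
\item[(Z)] votes between $1^{(w)}$ and $\cM_1(z_1)$.
\end{itemize}

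\textbf{Step 2 (conditional negative correlation).} Condition on groups (a) and (b). Orient each remaining coordinate as ``producer~$1$'s side wins'' for (X,Y), and ``original model beats the clone'' for (Z). Then $A_{\cM_1(z_1)}(L)$ is increasing in all coordinates. A win by $1^{(w)}$ over $j\in\cM_1(z_1)$ lowers $B_j$, and a win over $\ell\in\cM_{-1}$ lowers $B_\ell$.

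In contrast, $A_{1^{(w)}}(L)$ is decreasing in the (Z) coordinates but increasing in (X,Y). So Harris applied to (Z) alone, with (X,Y) also fixed, gives
\[
\Pr\bigl(A_{\cM_1(z_1)}(L)\cap A_{1^{(w)}}(L)\mid a,b,X,Y\bigr)\le \Pr\bigl(A_{\cM_1(z_1)}(L)\mid\cdot\bigr)\Pr\bigl(A_{1^{(w)}}(L)\mid\cdot\bigr).
\]

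\textbf{Step 3 (main obstacle: removing the conditioning).} Averaging a conditional negative correlation need not give an unconditional one. Both events are increasing in the (X,Y) coordinates and both depend on the shared threshold $T=\max_{\ell}B_\ell$. A weak realized field of competitors helps both events at once, so this common dependence pushes toward \emph{positive} correlation.

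What I would try first is to condition only on the vector $(B_\ell)_{\ell\in\cM_{-1}\cap L}$, or just on $T$. Given $T$, the clone's own wins and the originals' wins are coupled only through (Z), and I would check whether the Harris argument survives there. The difficulty is that conditioning on $T$ destroys the product structure of the (X,Y) votes, so I am not confident this step works as stated.

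If the exact inequality fails, the fallback is to prove it with an additive error $o(1)$ as $s,m\to\infty$. The argument would use the CLT approximation underlying \Cref{lem:small_prob_diff} and \Cref{lem:bt_gauss_approx}. Normalized by $s m$, the threshold $T$ concentrates, with fluctuations of the same order as the individual $B_j$. But each single opponent's vote vector contributes only $O(1/m)$ to any one Borda count, so the positive covariance created through $T$ should be of lower order than the negative covariance created by the $s$ direct head-to-head votes in (Z).

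Such an approximate version suffices for the proof of \Cref{thm:status-quo}, because the error can be absorbed into the slack $\nu<\varepsilon\delta^2$ chosen there. Establishing this comparison of the $T$-induced positive term against the (Z)-induced negative term is where I expect essentially all of the work to lie.
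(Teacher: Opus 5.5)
Your Steps 1 and 2 are exactly the paper's argument. It passes to Borda counts via \Cref{lem:btborda}, conditions on every vote except the clone-versus-original block $F$ (your group (Z)), and applies Harris there. The paper then finishes ``by taking expectations with respect to $F^{C}$.'' That is precisely the step you flag: averaging only gives $\Pr(A\cap B)\le \E[\Pr(A\mid F^C)\Pr(B\mid F^C)]=\Pr(A)\Pr(B)+\Cov\bigl(\Pr(A\mid F^C),\Pr(B\mid F^C)\bigr)$, and nothing controls the covariance. So your diagnosis is right, and the paper's proof has the same hole.

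Neither of your repairs can close it, because the statement is false, and not merely up to $o(1)$. Here is a counterexample.
\begin{itemize}
\item Let producer~$1$ submit a single copy of $j$, so $\cM_1(z_1)=\{j\}$ and the clone is $c=1^{(2)}$.
\item Let one rival $\ell\in\cM_{-1}$ have $R_\ell=R_j$, give every other model quality $R_j-1$, and let $L$ contain all models.
\item The vote law is invariant under permuting the labels $j,c,\ell$, so their relative order is exchangeable.
\item The weaker models have Borda means lower by order $sm$, against fluctuations of order $\sqrt{sm}$. So ties among $j,c,\ell$ and intrusions by weaker models have probability tending to $0$ as $s\to\infty$, uniformly in $m$.
\end{itemize}
Hence $\Pr(A_{\cM_1(z_1)}(L))\to\tfrac12$ and $\Pr(A_{1^{(w)}}(L))\to\tfrac12$. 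Meanwhile $\Pr(A_{\cM_1(z_1)}(L)\cap A_{1^{(w)}}(L))\to\Pr(\ell\text{ is last among } j,c,\ell)=\tfrac13>\tfrac14$.

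This instance satisfies the bounded-difference assumption, and $j$ is $(1,\delta)$-competitive for every $\delta<\tfrac12$. So the failure occurs exactly where \Cref{thm:status-quo} invokes the lemma. There, the product bound would certify a post-cloning win probability of $\tfrac34$, whereas the true value is $\tfrac23$.

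Your heuristic for the fallback has the orders reversed.
\begin{itemize}
\item Block (Z) is only $s$ of the roughly $sm$ votes entering $B_c$ and $B_j$. It therefore induces a correlation of order $1/m$ between them; this is the negligible effect.
\item The threshold $T$ fluctuates on the same $\sqrt{sm}$ scale as $B_c$ and $B_j$, as you note yourself. Its randomness is shared by both events, which produces an order-one positive correlation.
\end{itemize}
A toy computation shows the mechanism. For $X,Y$ i.i.d.\ and independent of $T$, with $g(t)=\Pr(X>t)$, one has $\Pr(X>T,\,Y>T)=\E[g(T)^2]\ge(\E\, g(T))^2$, with strict inequality unless $g(T)$ is a.s.\ constant.

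Conditioning on $T$ cannot rescue an unconditional inequality that is false. If your real target is \Cref{thm:status-quo}, drop the product bound. Instead, lower-bound $\Pr(A_{1^{(w)}}(L)\setminus A_{\cM_1(z_1)}(L))$ directly, i.e., the probability that the clone wins while no original does. In the example above this equals $\tfrac16$, so cloning still helps there.
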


    \begin{proof}[Proof of \cref{lem:anticorrelated}]
        From \Cref{lem:btborda}, since matchup counts are allocated evenly across pairs, the rankings induced by Bradley-Terry fit with MLE and Borda counts are equivalent. Thus, if the set of candidates submitted to the mechanism is $\cM(z')$, we have that 
        \begin{align*}
            A_j(L) = \curly{\sum_{\ell \in \cM(z')} v_{j \succ \ell} > \sum_{\ell \in \cM(z')} v_{j' \succ \ell} \;\; \forall j' \in \cM_{-1} \cap L },
        \end{align*}
        and similarly for $A_S(L)$.
        Now, let $F = \{ v_{1^{(w)} \succ j} \}_{j \in \cM_1}$ be all vote counts between the additional clone and producer $1$'s original candidates, $\cM_1$. Let $F^{C} = v \setminus F$ be all other vote counts, keeping one copy of each independent vote count and excluding all $\{ v_{j \succ 1^{(w)}} \}_{j \in \cM_1}$.
        (I.e., if $v_{j \succ j'} \in F^{C}$ then $v_{j' \succ j} \not \in F^{C}$, since $v_{j \succ j'} + v_{j' \succ j} = s$.)
        We will argue that 
        \begin{align}
            \Pr(A_{M_1(z_1)}(L) \cap A_{1^{(w)}}(L) \; | \; F^{C}) \leq \Pr(A_{M_1(z_1)}(L) \; | \; F^{(C)}) \Pr(A_{1^{(w)}}(L) \; | \; F^{(C)}). \label{eq:conditionalprobs}
        \end{align}
        This implies the result since the conditional probabilities for all $F^{(C)}$ imply the unconditional ones (by taking expectations with respect to $F^{(C)}$ for the left- and right-hand sides of the equation).
        To show \Cref{eq:conditionalprobs}, we will apply the Harris inequality.


        Note that the measure induced by $F$ (conditional on $F^C$) is a product measure, since vote counts between different pairs of candidates are independent. Thus, it is sufficient to show that $A_{M_1(z_1)}$ is a decreasing event and $A_{1^{(w)}}$ is an increasing event.
        Now, writing each event in terms of the Borda count, we can rewrite $A_{1^{(w)}}(L)$ as 
        \begin{align*}
            \sum_{\ell \in \cM_i(z_i'), \ell \neq 1^{(w)}} v_{1^{(w)} \succ \ell} > \sum_{\ell \in \cM(z'), \ell \neq j} v_{j \succ \ell} - \sum_{\ell \in \cM_{-1}} v_{1^{(w)} \succ \ell}, \;\; \forall j \in \cM_{-1} \cap L.
        \end{align*}
        Note that the left-hand side sum is over elements in $F$ and the right-hand side sums are over elements determined by $F^C$.
        Now, by inspecting the left-hand side, note that the event $A_{1^{(w)}} \; | \; F^C$ is increasing: if the inequality is satisfied and we increase an entry of $v_{1^{(w)} \succ \ell}$, the inequality is still satisfied.
        
        Similarly, we can write $A_{M_1(z_1)}(L)$ as
        \begin{align*}
            \exists j \in \cM_1(z) \cap L \; \text{s.t.} \; v_{j \succ 1^{(w)}} > \sum_{\ell \in \cM(z'), \ell \neq j'} v_{j' \succ \ell} - \sum_{\ell \in \cM(z), \ell \neq j } v_{j \succ \ell}, \;\; \forall j' \in \cM_{-1} \cap L.
        \end{align*}
        Now, since $v_{j \succ 1^{(w)}} + v_{1^{(w)} \succ j} = s$, we can equivalently write $A_{M_1(z_1)}(L)$ as
        \begin{align*}
            \exists j \in \cM_1(z) \cap L \; \text{s.t.} \; v_{1^{(w)} \succ j} < s - \sum_{\ell \in \cM(z'), \ell \neq j'} v_{j' \succ \ell} + \sum_{\ell \in \cM(z), \ell \neq   j  } v_{j \succ \ell}, \;\; \forall j' \in \cM_{-1} \cap L.
        \end{align*}
        Again, the LHS contains terms in $F$ and the RHS contains terms determined by $F^{C}$.
        Thus, the event $A_{M_1(z_1)}$ is decreasing:
        If the inequality is satisfied for some $j \in M_1(z) \cap L$ and we decrease $v_{1^{(w)} \succ j'}$ for some $j' \in M_1(z) \cap L$, the inequality is still satisfied.
        Thus, by the Harris inequality, \cref{eq:conditionalprobs} is satisfied and the result holds.
    \end{proof}

\section{Proof of \Cref{thm:yrwr}} \label{proof:yrwr}

       We first restate the result:
        
       \yrwrthm*
        
       In our proof, we'll apply the following two lemmas.
       The first says that all producers approximately prefer to submit at least 1 copy of each model.
       \begin{lemma} \label{lem:atleast1copy}
        For all $\varepsilon > 0$, there exists $s_0, m_0$ such that for all $s \geq s_0$ and $m \geq m_0$, the following holds. Consider an action vector $z$ where $z_{i,j} = 0$ for some producer $i$ and model $j$. Let $z' = (1, z_{-(i,j)})$ be the action where $i$ submits one copy of $j$.
        %
        Let $\pi'$ be the same as $\pi$ on all pairs ranked by $\pi$ and let the newly submitted model be producer-ranked last.
        Then
        \begin{align*}
            \E_{\sigma \sim \Sigma({\yrwr}, z', \pi')}[u_i(\sigma)] \geq \E_{\sigma \sim \Sigma(\yrwr, z, \pi)}[u_i(\sigma)] - \varepsilon.
        \end{align*}
       \end{lemma}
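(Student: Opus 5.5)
The plan is to compare the two mechanisms model by model. The key observation is that appending the new model $j$ at the bottom of $\pi_i$ leaves every other corrected score unchanged as a function of the MLE. For every $j' \in \cM_i(z)$, $\widecheck R_{j'}$ is a minimum over models producer-ranked at or above $j'$. The new model is ranked last in $\pi'_i$, so it never enters any of these minima. Hence, conditional on $\widehat R$, the corrected scores of all original models are the same functions of $\widehat R_{1:m}$ under $(z,\pi)$ and $(z',\pi')$. The new model also receives $\widecheck R_{j} = \min\{\widecheck R_{\pi_i(\text{last original})}, \widehat R_j\}$, which is at most the corrected score of $i$'s previously lowest-ranked model.

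I then express the utility through events over the original models. For each leaderboard $L$, I define $B_L$ as the event that some original model of producer $i$ in $L$ beats every model in $\cM_{-i}\cap L$ under $\widecheck R$. Ties across producers are broken arbitrarily, so I will fix a deterministic tie-breaking rule and note that ties occur with probability $O(1/\sqrt s)$ by the same Gaussian approximation.

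Under $z'$, producer $i$ wins $L$ at least on $B_L$: adding one more own model cannot stop an original model of $i$ from being first among non-$i$ models. Its score is a function of $\widehat R_{1:m}$ evaluated under the new vote distribution. It therefore suffices to show
\[
\big|\Pr_{z'}(B_L) - \Pr_{z}(B_L)\big| \le \varepsilon
\]
uniformly in $L$. Summing against $\nu_i(L)$, which totals one, then gives the lemma.

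To apply \Cref{lem:small_prob_diff}, I need $B_L$ written in terms of $\widehat R_{1:m}$, whose laws are $P_m$ and $P'_m$. The corrected scores are minima of coordinates, and $B_L$ is a finite union over $i$'s models $a\in L$ of events of the form
\[
\bigcap_{b\in\cM_{-i}\cap L}\{\widecheck R_a > \widecheck R_b\}.
\]
Each $\{\min_{k\in U}\widehat R_k > \min_{k\in V}\widehat R_k\}$ is a union over $k'\in V$ of intersections of half-spaces, so it is not convex in general. This is the main obstacle.

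My first attempt is to use the monotone structure of $\widecheck R$ to write $B_L$ as a disjoint union of a bounded number of polyhedral, hence convex, cells, and apply \Cref{lem:small_prob_diff} to each. The number of cells must not grow with $m$. $\cM_i$ has constant size, but the other producers' minima introduce union indices over $\cM_{-i}$, which can have many models.

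I see two alternatives. One is to condition on the argmin pattern within each producer. The other is to strengthen \Cref{lem:small_prob_diff} to a total-variation-type bound for the Gaussian surrogates. \Cref{lem:bt_tv} already bounds $d_{\mathrm{TV}}(G_m,G'_m)$, not just $d_{\mathcal C}$. So the real gap is only the Gaussian approximation step, \Cref{lem:bt_gauss_approx}, for such unions. I would handle it by treating $m$ as fixed while taking $s\to\infty$: the $C_m/\sqrt s$ term is allowed to depend on $m$, and for fixed $m$ the MLE converges in distribution on the finite-dimensional space. Finite unions of polyhedra have Gaussian-null boundaries, so their probabilities converge.

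The order of choices is then: first choose $m_0$ so that $C/\sqrt{m}<\varepsilon/2$; then, for each $m$, choose $s_0(m)$ large enough that the remaining error is below $\varepsilon/2$.
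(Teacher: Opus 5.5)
Your skeleton matches the paper's. Because $j$ is appended last in $\pi'_i$, every original corrected score is the same function of $\widehat R_{1:m}$ under $(z,\pi)$ and $(z',\pi')$. The new model can only add winning probability. The original models' winning events are then compared across $z$ and $z'$ via \Cref{lem:small_prob_diff}.

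Where you differ is in taking convexity seriously. The paper writes the utility as $\sum_{\ell\in\cM_i(z_i)}\Pr(\sigma_L(1)=\ell)$ plus a nonnegative term for $j$. It then applies \Cref{lem:small_prob_diff} to each $\{\sigma_L(1)=\ell\}$ with tolerance $\varepsilon/\bar m$, without checking that these events lie in $\mathcal C$. In general they do not, for exactly the reason you give: the minima in other producers' corrected scores make them unions of polyhedra. Your repair has three parts:
\begin{itemize}
\item you use the single event $B_L$;
\item you do the Gaussian-to-Gaussian comparison in total variation, which is what the proof of \Cref{lem:bt_tv} actually bounds, so the $C/\sqrt m$ term survives for non-convex sets;
\item you do the Gaussian approximation at fixed $m$.
\end{itemize}
The paper's per-model split does not avoid the non-convexity, since each $\{\sigma_L(1)=\ell\}$ has the same min-structure as $B_L$; it only costs an extra factor $\bar m$.

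There are two points to tighten.

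First, ``finite unions of polyhedra have Gaussian-null boundaries, so their probabilities converge'' is not enough. In the scaled coordinates where $G_m$ lives, the event is $B_L-\sqrt{S}R$, a translate of a cone that moves with $s$. So the portmanteau theorem does not apply directly. A limit argument of this type would also give an $s_0$ depending on $R$, through the gaps between unequal qualities, whereas the lemma needs $s_0$ uniform over instances. Use instead the fact that the \yrwr\ ranking depends on $\widehat R$ only through its weak order. Then $B_L$ and each tie event are disjoint unions of at most $N(m)$ convex cells of the hyperplane arrangement $\{x_k=x_{k'}\}$, where $N(m)$ is the number of weak orders on $m$ items. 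Hence $\abs{\bar P_m(B)-G_m(B)}\le N(m)\,d_{\mathcal C}(\bar P_m,G_m)\le N(m)C_m/\sqrt s$ for every such set $B$, in particular for every translate. The same holds for $\bar P'_m$ and $G'_m$, and the bound is uniform over instances.

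Second, what you obtain is ``$\exists m_0\ \forall m\ge m_0\ \exists s_0(m)$'', which is weaker than the stated ``$\exists s_0,m_0$''. The paper's proof has the same limitation, because \Cref{lem:small_prob_diff} carries a $C_m/\sqrt s$ term with $C_m$ growing in $m$. So this is a defect of the statement rather than of your argument.
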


       The second says that, for any producer $i$ action $z_i$ where there is some $z_{ij} > 1$, it holds the producer approximately prefers to submit one copy of $j$.
       \begin{lemma} \label{lem:nomorethan1copy}
            For all $\varepsilon > 0$, there exists $s_0, m_0$ such that for all $s \geq s_0$ and $m \geq m_0$, the following holds. Consider an action vector $z$ where $z_{i,j} > 1$ for some producer $i$ and model $j$. Let $z' = (1, z_{-(i,j)})$ be the action where $i$ submits one copy of $j$.
            Let $\pi'$ be equal to the  $\pi$ when dropping $j^{(2)}, \dots, j^{(z_{ij})}$, keeping the ordering of remaining candidates the same. 
            Then
            \begin{align*}
                \E_{\sigma \sim \Sigma(\yrwr, z^{'}, \pi')}[u_i(\sigma)] \geq \E_{\sigma \sim \Sigma(\yrwr, z, \pi)}[u_i(\sigma)] - \varepsilon.
            \end{align*}
       \end{lemma}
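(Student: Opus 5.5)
The plan is to split the comparison between $(z,\pi)$ and $(z',\pi')$ into two parts. The first is a \emph{pathwise} part: for fixed fitted scores, dropping the later clones can only help producer $i$. The second is a \emph{distributional} part: the fitted scores of the surviving models barely change when clones are removed, which is the new competitor effect controlled by \Cref{lem:small_prob_diff}.

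\textbf{Pathwise step.} Let $j^{(a)}$ be the clone of $j$ ranked highest by $\pi_i$. We may relabel so that $j^{(a)}$ is the copy kept in $z'$; this is harmless because clones are exchangeable in distribution.
\begin{itemize}
\item Fix any realization $\widehat R$ of the MLE on $\cM(z)$. For every clone $j^{(b)}$ with $b\neq a$, the score $\widecheck R_{j^{(b)}}$ is a minimum over a prefix of $\pi_i$ that contains $j^{(a)}$, so $\widecheck R_{j^{(b)}}\le \widecheck R_{j^{(a)}}$.
\item Ties within producer $i$ are broken by $\pi_i$, and clones lie on the same leaderboards. Hence on every $L$, a later clone ranks first only if $j^{(a)}$ does, so the later clones never contribute utility of their own.
\item Next, apply the $\pi'$-correction to the restricted vector $\widehat R_{\cM(z')}$, i.e.\ the same scores with the later clones deleted. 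This leaves $\widecheck R_{j^{(a)}}$ and all other producers' corrected scores unchanged. It weakly raises the corrected scores of producer $i$'s remaining models, since each minimum is now taken over a subset.
\item Therefore, pointwise, $u_i$ under $(z,\pi)$ is at most $g(\widehat R_{\cM(z')})$. Here $g$ denotes producer $i$'s utility from $\yrwr$ with rankings $\pi'$ applied to a score vector on $\cM(z')$.
\end{itemize}

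\textbf{Distributional step.} It remains to compare $\E[g(\widehat R^{z}_{\cM(z')})]$ with $\E[g(\widehat R^{z'})]$. The first is the restricted MLE when the extra clones are present; the second is the MLE fitted without them. The right-hand expectation is exactly the utility under $(z',\pi')$.
\begin{itemize}
\item Remove the extra clones one at a time, giving $z_{ij}-1$ steps. At each step invoke \Cref{lem:small_prob_diff}, which changes each leaderboard-winning probability by at most $C/\sqrt m + C_m/\sqrt s$.
\item The identifiability normalization in that lemma is harmless, because rankings are invariant to adding a constant vector.
\item Write $g=\sum_L \nu_i(L)\mathbf 1(E_L)$. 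Summing the per-step bounds against the weights $\nu_i(L)$, which sum to one, gives a total loss of at most $(z_{ij}-1)(C/\sqrt m + C_m/\sqrt s)$.
\item Choose $m_0$ first to make the first term at most $\varepsilon/2$, then $s_0$ depending on $m$ for the second. The count $z_{ij}$ is treated as fixed, or bounded, as in the statement's quantifiers.
\end{itemize}

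\textbf{Main obstacle.} \Cref{lem:small_prob_diff} only controls \emph{convex} events, but the win event $E_L$ is not convex. It says that some model $x$ of producer $i$ in $L$ has $\min_{k\le\pi'^{-1}(x)}\widehat R_{\pi'(k)} \ge \min_{k\le \pi_{i'}^{-1}(y)} \widehat R_{\pi_{i'}(k)}$ for all competitors $y\in L$. This is a finite union, over $x$ and over choices of argmin indices, of intersections of half-spaces, i.e.\ a union of polyhedra.

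My first attempt would be to reopen the proof of \Cref{lem:small_prob_diff} rather than use it as a black box. The Gaussian-to-Gaussian part (\Cref{lem:bt_tv}) is already a total-variation bound, so it holds for arbitrary events. Only the Bentkus and Nazarov steps in \Cref{lem:bt_gauss_approx} need convexity. For those steps I would write $E_L$ as a disjoint union of polyhedral cells, obtained by fixing the identity of the top model and of each relevant prefix argmin, and bound the Gaussian-approximation error cell by cell.

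The catch is that the number of cells can grow with $m$. Since that error term already carries an $m$-dependent constant $C_m/\sqrt s$, the growth can be absorbed by taking $s_0$ large after fixing $m$. Only the $C/\sqrt m$ term needs to be uniform, and it comes from the TV bound, which needs no convexity.
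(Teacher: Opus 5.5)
Your proof is correct and shares the paper's skeleton. The clones ranked below $j^{(a)}$ by $\pi_i$ can never top a leaderboard, since $\widecheck R_{j^{(b)}}\le \widecheck R_{j^{(a)}}$, ties are broken by $\pi_i$, and clones share leaderboards; your relabeling is the paper's ``without loss of generality''. What remains is a new-competitor effect, paid once per removed clone through \Cref{lem:small_prob_diff}. The difference is in how the comparison is organized, and on two points your version is more careful than the paper's.

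First, the paper writes the utility gap as a sum over producer $i$'s surviving models $\ell$ of $\Pr_{\sigma'}(\sigma'_L(1)=\ell)-\Pr_{\sigma}(\sigma_L(1)=\ell)$, and bounds each term directly by \Cref{lem:small_prob_diff}. But under $\pi$, a removed clone ranked above $\ell$ enters the minimum defining $\widecheck R_\ell$. So the two events are different functions of the fitted scores, and the first depends on coordinates absent from the fit on $\cM(z')$. The lemma compares laws of the same coordinates on the same event, so it does not literally cover this. Your pathwise domination is exactly the step that reduces matters to a like-for-like comparison. Dropping later clones from the minima weakly raises producer $i$'s corrected scores and leaves everyone else's unchanged, so $u_i$ under $(z,\pi)$ is pointwise at most $g(\widehat R^{z}_{\cM(z')})$. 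This uses the fact that the change goes in the direction favorable to the inequality.

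Second, you notice that the winning events are finite unions of polyhedra rather than convex sets. The paper passes over this whenever it applies \Cref{lem:small_prob_diff} to ranking events. Your repair is valid: the Gaussian-to-Gaussian part is a total-variation bound via \Cref{lem:bt_tv}, and the Gaussian-approximation part can be handled cell by cell. Its cost is that the $s$-dependent constant gets multiplied by the number of cells. That number can be exponential in $m$, because each competitor's corrected score is itself a minimum over its producer's prefix. The paper's route is shorter; yours closes these two holes.

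One caveat applies equally to your argument and the paper's. The statement requires a single $s_0$ that works for all $m\ge m_0$, and choosing $s_0$ after $m$, as you do, does not deliver that. The constant $C_m$ in \Cref{lem:small_prob_diff} is unbounded: it is of order $m^{3/4}$ by \Cref{lem:bt_gauss_approx}, and larger after your cell decomposition. What either proof actually establishes is the bound for $m\ge m_0$ and $s\ge s_0(m)$.
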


       We will show that together, these lemmas imply the theorem: Intuitively, for any producer $i$ action $z_i$, we can make a series of $\varepsilon$-approximately utility-improving changes to the action such that we end up with action $\1$. And since each producer only has at most a constant number of distinct candidates by assumption, there are only a constant number of such changes.
       Setting $\varepsilon$ appropriately then yields the theorem (i.e., if $W$ is the maximum number of distinct models, setting $\varepsilon$ in the lemma equal to $\varepsilon / W$ for $\varepsilon$ in the theorem).

       Formally, let $z^{(0)} = z$ and $\pi^{(0)} = \pi$. For $u \in k_i$, let $z^{(u)} = (1, z^{(u-1)}_{-(i,u)})$ be the action that sets the first $u$ entries of $z$ to 1.
       Note that $z^{(m_i)}_i = \1$ so $z^{(m_i)} = z'$.
       Similarly, define $\pi_i^{(u)}$ to be the ranking achieved by altering $\pi_i^{(u-1)}$ by 
       \begin{enumerate}
           \item appending entry $u$ to the end of the ranking if $z_{i,u} = 0$, 
           \item dropping $j^{(2)}, \dots, j^{(z_{i,u})}$ from the ranking if $z_{i,u} > 1$, and 
           \item keeping the ranking as is if $z_{i,u} = 1$.
       \end{enumerate}
       Note that $\pi^{(m_i)}_i = \1$, so $\pi^{(m_i)} = \pi'$.
       By telescoping, note that
       \begin{align}
        \begin{aligned}
            &\E_{\sigma \sim \Sigma(\yrwr, z', \pi')}[u_i(\sigma)] - \E_{\sigma \sim \Sigma(\yrwr, z, \pi)}[u_i(\sigma)] \\
            = &\sum_{u = 1}^{m_i} \E_{\sigma \sim \Sigma(\yrwr, z^{(u)}, \pi^{(u)})}[u_i(\sigma)] - \E_{\sigma \sim \Sigma(\yrwr, z^{(u-1)}, \pi^{(u-1)})}[u_i(\sigma)].
        \end{aligned} \label{eq:telescoping}
       \end{align}
       At each step we apply either \Cref{lem:atleast1copy} if $z_{i,j} = 0$ or \Cref{lem:nomorethan1copy} if $z_{i,j} > 1$ and note that $z^{(u-1)} = z^{(u)}$ and $\pi^{(u-1)} = \pi^{(u)}$ otherwise, so the expectations are equal in that case.
       Thus, each term in the sum on the second line is no less than $-\varepsilon/W$ and so the sum is no less than $-\varepsilon$.
       Rearranging the left-hand side of \cref{eq:telescoping} yields the inequality in the theorem.

   Without loss of generality, in the next proofs of the next two lemmas, let the focal producer in each lemma be indexed $1$ so that we may use $i$ for generic producers.
   \begin{proof}[Proof of \Cref{lem:atleast1copy}]
       Let $\sigma \sim \Sigma(\yrwr, z, \pi)$ and $\sigma' \sim \Sigma(\yrwr, z', \pi')$.
       Then 
       \begin{align*}
           &\E_{\sigma'}[u_1(\sigma')] - \E_{\sigma}[u_1(\sigma)] \\
           &= \sum_{L \in \cL} \nu_1(L) \paren{ \Pr_{\sigma'}[\sigma'(1) \in \cM_1(z_1')] - \Pr_{\sigma}[\sigma(1) \in \cM_1(z_1)]} \tag{Definition of $u_1$} \\
           &= \sum_{L \in \cL} \nu_1(L) \paren{\sum_{\ell \in \cM_1(z_1')} \Pr_{\sigma'}[\sigma'(1) = \ell] - \sum_{\ell \in \cM_1(z_1)} \Pr_{\sigma}[\sigma(1) = \ell]} \tag{Probabilities that a model ranks first are disjoint.} \\
           &= \sum_{L \in \cL} \nu_1(L) \Pr_{\sigma'}[\sigma'(1) = j] + \sum_{L \in \cL} \nu_1(L)\sum_{\ell \in \cM_1(z_1)} \Pr_{\sigma'}[\sigma'(1) = \ell] - \Pr_{\sigma}[\sigma(1) = \ell] \tag{$\cM_1(z_1') = \cM_1(z_1) \cup \{ j \}$.} 
       \end{align*}
       Let $\bar m$ be an upper bound on $\max_i \abs{\cM_i(z_i)}$ (since we have assumed that no producer has more than a constant number of models).
       Finally, the first term is trivially nonnegative and each term in the second sum is no less than $-\varepsilon/\bar m$ by \Cref{lem:small_prob_diff} (choosing $\varepsilon$ in \Cref{lem:small_prob_diff} to be $\varepsilon/\bar m$). 
       Thus, each inner sum is no less than $-\varepsilon$ and by assumption on $\nu$ summing to no more than $1$, the outer sum must be no less than $-\varepsilon$.
   \end{proof}

   \begin{proof}[Proof of \Cref{lem:nomorethan1copy}]
       Let $\sigma \sim \Sigma(\yrwr, z, \pi)$ and $\sigma' \sim \Sigma(\yrwr, z', \pi')$.
       Then, 
       \begin{align*}
           &\E_{\sigma'}[u_1(\sigma')] - \E_{\sigma}[u_1(\sigma)] \\
           &= \sum_{L \in \cL} \nu_1(L) \paren{ \sum_{u=2}^{z_{ij}} \Pr_{\sigma'} (\sigma_L'(1) = j^{(u)}) + \sum_{\ell \in \cM_1(z_1)} \Pr_{\sigma'} (\sigma_L'(1) = \ell) - \Pr_{\sigma} (\sigma_L(1) = \ell)},
       \end{align*}
       similar to \Cref{lem:nomorethan1copy}.
       Now, notice that $\sum_{u=2}^{z_{ij}} \Pr_{\sigma'} (\sigma_L'(1) = j^{(u)}) = 0$ since WLOG we assumed that $\pi(j^{(1)}) < \pi(j^{(u)})$ for $u > 1$, so the mechanism ensures $\sigma(j^{(1)}) < \sigma(j^{(u)})$ for $u > 1$.
       Moreover, each term in the second RHS inner sum is no less than $-\varepsilon/W$, since we removed at most $\bar m -1$ candidates and so can apply \Cref{lem:small_prob_diff} (using parameter $\varepsilon/(\bar m W)$) $\bar m$ times for each term.
       Finally, there are $W$ terms in the second RHS sum by assumption, so the whole sum is no less than $-\varepsilon$. 
       
   \end{proof}

\section{Additional proofs} \label{app:accuracy}
\begin{proof}[Proof of \Cref{lem:scores}]
    To show the key inequality above, we will specifically prove the following chain of inequalities:
    \[\|\check{R}_s - R\|_\infty = \|T_\pi(\hat{R}_s) - T_\pi(R)\|_\infty \leq \|\hat{R}_s - R\|_\infty\]
    where the first step is by definition and the fact that when $\pi$ is truthful, $T_\pi(R) = R$ (the rewards are unchanged by the correction). The second inequality is by the general lipschitzness of the correction, which we will prove now:

    Fix $R',R''\in\mathbb{R}^m$, fix a producer $i$ and $j\in M_i$. Let $S_{i,j}:=\{\pi_i(k): k\le \pi_i^{-1}(j)\}$ be the set of all candidates $i$ ranked in $\pi_i$ as better than $j$.
    Then
    \[
    |(T_\pi(R'))_j-(T_\pi(R''))_j|
    =\left|\min_{j'\in S_{i,j}} R'_{j'} \;-\; \min_{j''\in S_{i,j}} R''_{j''}\right|
    \le \max_{j'\in S_{i,j}} |R'_{j'} - R''_{j'}|
    \le \|R' - R''\|_\infty.
    \]
    The first inequality is by the fact that if the mins are very far apart, then either those mins correspond to the same $j'$ (and it holds with equality), or they correspond to different $j',j''$ and if so, the distance between the mins is a lower bound on the distance between the respective rewards for at least one of these candidates. The second inequality is just by the definition of the $\ell_\infty$ norm (lefthand side is just max over a subset of candidates).

    Then, it follows that
    \[\|T_\pi(R')-T_\pi(R'')\|_\infty = \max_j |(T_\pi(R'))_j-(T_\pi(R''))_j| \leq \|R' - R''\|_\infty. \qedhere\]
\end{proof}

\begin{proof}[Proof \Cref{cor:consistency}]
    By \Cref{lem:scores},
    \[\|\check{R}_s - R\|_\infty \leq \|\hat{R}_s - R\|_\infty.\]
    This implies the claim because if for all random realizations of $\hat{R}_s$ this is true, then for any $M$ and any $s$,
    \[\Pr(\|\check{R}_s - R\|_\infty > M/\sqrt{s}) \leq \Pr(\|\hat{R}_s - R\|_\infty > M/\sqrt{s}),\]
    And $\sqrt{s}$ consistency is shown.
    Correctness under $\gamma$-separated true rewards is a direct implication of the first part, since the probability any pair of models is misranked goes to zero as $s \to \infty$.
\end{proof}   

\begin{proof}[Proof of \Cref{prop:asymp-truth}]
We begin by assuming that all candidates have distinct qualities.
   Under this assumption, there exists some $\gamma > 0$ such that $|R_j - R_{j'}| > \gamma$ for all $j, j'$.
Because MLE estimates for Bradley-Terry concentrate around their true values, $\hat R_j - \hat R_{j'}$ concentrates around $R_j - R_{j'}$. Therefore, if $R_j > R_{j'}$, then
\begin{align*}
    \Pr[\hat R_j - \hat R_{j'} < 0] \xrightarrow[s \uparrow \infty]{} 0.
\end{align*}

Consider any $j, j' \in \mathcal{K}_i$ such that $R_j > R_{j'}$.
Suppose they are adjacently ranked in $\pi_i$, but in the incorrect order (i.e., $\dots \succ j' \succ j \succ \dots$).
Swapping their ranks (call this $\pi_i'$) can only reduce producer $i$'s utility in the case where $\hat R_{j'} > \hat R_{j}$.
As shown above, the probability that this occurs goes to 0 as $s \uparrow \infty$.
Therefore, for any $\varepsilon$, there exists sufficiently large $s$ such that $\Pr[\hat R_j - \hat R_{j'} < 0] < \varepsilon$.

Swapping from $\pi_i$ to $\pi_i'$ can only weakly increase $\widecheck{R}_j$ and weakly decrease $\widecheck{R}_{j'}$.
We ignore the effect of increasing $\widecheck{R}_j$, since it can only increase utility.
Decreasing $\widecheck{R}_{j'}$ can reduce utility because $j'$ may lose a leaderboard it would have otherwise won, and occurs if and only if $\hat R_{j'} > \hat R_j$.
Formally,
\begin{align*}
&\mathbb{E}_{\sigma \sim \Sigma(\yrwr(\cM(z),\pi_i'))}[u_i(\sigma)] -
\mathbb{E}_{\sigma \sim \Sigma(\yrwr(\cM(z),\pi_i'))}[u_i(\sigma)] \\
&\le 
\mathbb{E}_{\sigma \sim \Sigma(\yrwr(\cM(z),\pi_i'))}[u_i(\sigma) \cdot \mathbf{1}(\hat R_{j'} > \hat R_j)] -
\mathbb{E}_{\sigma \sim \Sigma(\yrwr(\cM(z),\pi_i'))}[u_i(\sigma) \cdot \mathbf{1}(\hat R_{j'} > \hat R_j)] \\
&\le \Pr[\hat R_{j'} > \hat R_j] \\
&\le \varepsilon
\end{align*}
for sufficiently large $s$.
Applying this argument inductively (we need at most $\binom{k_i}{2}$ swaps) shows that the true ranking $\pi_i^*$ is an $\varepsilon$-approximate dominant strategy for sufficiently large $s$.

Finally, we can lift the assumption that all qualities are distinct by allowing that either ranking of two candidates with identical qualities is considered truthful.
Our argument applies to all pairs of candidates with distinct scores, which yields the result. 
\end{proof}

\newcommand{\scA}{\mathscr{A}}
\begin{proof}[Proof of \Cref{thm:uayrwr}]
    Let $\scA$ be the event that the confidence intervals cover $R$.
    Now, on $\scA$, the analysis in the proof of \Cref{thm:yrwr} holds: Since the confidence intervals cover $R$, clones must have overlapping confidence intervals.
    Thus, the isotonic score correction will be applied to all clones and the arguments for the approximate utility improvement induced by removing clones apply as is (conditioning on $\scA$).
    On $\scA^C$, since utilities are bounded in $[0,1]$, the change in utility can be at most one. And since the confidence intervals are simultaneously valid, $\scA^C$ holds with probability at most $\varepsilon$.
    Therefore:
    \begin{align*}
        &\E_{\sigma \sim \Sigma({\textsc{ua-\yrwr}}(\cM(z'),\pi,\alpha)}[u_i(\sigma)]) - \E_{\sigma \sim \Sigma({\textsc{ua-\yrwr}}(\cM(z),\pi, \alpha))}[u_i(\sigma)] \\
        &= P(\scA) (\E_{\sigma \sim \Sigma({\textsc{ua-\yrwr}}(\cM(z'),\pi,\alpha)}[u_i(\sigma) \; | \; \scA]) - \E_{\sigma \sim \Sigma({\textsc{ua-\yrwr}}(\cM(z),\pi, \alpha))}[u_i(\sigma)\; | \; \scA]) \\
        &\quad + (1-P(\scA)) (\E_{\sigma \sim \Sigma({\textsc{ua-\yrwr}}(\cM(z'),\pi,\alpha)}[u_i(\sigma) \; | \; \scA^C]) - \E_{\sigma \sim \Sigma({\textsc{ua-\yrwr}}(\cM(z),\pi, \alpha))}[u_i(\sigma)\; | \; \scA^C]) \\
        &\geq 1 \cdot (-\varepsilon) + \alpha \cdot (-1)
    \end{align*}
\end{proof}

\begin{proof}[Proof of \Cref{prop:uaconsistency}]
    For $\sqrt{s}$-consistency, note that since the confidence intervals are $\sqrt{s}$ consistent and include $\hat R$, for all $j \in \cM$
    \begin{align*}
        \check R^{\mathrm{UA}}_j - \hat R_j = O_P(s^{-1/2}).
    \end{align*}
    Moreover, by the MLE theorem, $\hat R_j - R_j = O_P(s^{-1/2})$.
    Thus,
    \begin{align*}
        \check R^{\mathrm{UA}}_j - R_j = (\check R^{\mathrm{UA}}_j - \hat R_j) - (R_j - \hat R_j) = O_P(s^{-1/2}).
    \end{align*}
    Correctness is a direct implication of the first part, since the probability any pair of models is misranked goes to zero as $s \to \infty$.
\end{proof}

\end{document}
